\documentclass[usenames,dvipsnames]{article}

\usepackage[final,nonatbib]{neurips_2020}

\usepackage[utf8]{inputenc} %
\usepackage[T1]{fontenc}    %
\usepackage{hyperref}       %
\usepackage{url}            %
\usepackage{booktabs}       %
\usepackage{amsfonts}       %
\usepackage{nicefrac}       %
\usepackage{microtype}      %
\usepackage{graphicx}
\usepackage[flushleft]{threeparttable}

\usepackage{float}
\usepackage{subfigure}
\usepackage{multirow}
\usepackage{xspace}
\usepackage{caption}
\usepackage{enumitem}

\usepackage{caption}
\captionsetup{font=small}

\usepackage{xspace}
\newcommand{\iid}{i.i.d.\ }
\newcommand{\algopt}{FedDF\xspace} 
\newcommand{\FL}{Federated Learning\xspace} 
\newcommand{\Fl}{Federated learning\xspace} 
\newcommand{\fl}{federated learning\xspace} 
\newcommand{\Ft}{Federated fine-tuning\xspace} 
\newcommand{\ft}{federated fine-tuning\xspace} 
 
\newcommand{\fedavg}{\textsc{FedAvg}\xspace} 
\newcommand{\fedavgM}{\textsc{FedAvgM}\xspace} 
 
\newcommand{\fedprox}{\textsc{FedProx}\xspace} 
\newcommand{\fedma}{\textsc{FedMA}\xspace}

\newcommand{\avglogits}{\textsc{AvgLogits}\xspace}

\usepackage{amsmath,amsthm,amssymb}
\newtheorem{theorem}{Theorem}[section]
\newtheorem{lemma}[theorem]{Lemma}

\newtheorem{remark}[theorem]{Remark}

\providecommand{\abs}[1]{\left\lvert#1\right\rvert}
\providecommand{\norm}[1]{\left\lVert#1\right\rVert}

\providecommand{\R}{\mathbb{R}} %

\providecommand{\Prob}[2]{{\rm Pr}_{#1}\left[#2\right] }

\providecommand{\derive}[2]{\frac{{\partial}{#1}}{\partial #2} }  %

\DeclareMathOperator*{\argmin}{arg\,min}

\providecommand{\dd}{\mathbf{d}}

\providecommand{\pp}{\mathbf{p}}
\providecommand{\qq}{\mathbf{q}}

\providecommand{\vv}{\mathbf{v}}

\providecommand{\xx}{\mathbf{x}}

\providecommand{\cC}{\mathcal{C}}
\providecommand{\cD}{\mathcal{D}}

\providecommand{\cH}{\mathcal{H}}

\providecommand{\cL}{\mathcal{L}}
\providecommand{\cM}{\mathcal{M}}

\providecommand{\cP}{\mathcal{P}}

\providecommand{\cS}{\mathcal{S}}

\providecommand{\cX}{\mathcal{X}}
\providecommand{\cY}{\mathcal{Y}}

\providecommand{\hxx}{\hat{\xx}}

\providecommand{\loss}{\ell}

\providecommand{\kl}{\text{KL}}
\providecommand{\lepochs}{\ensuremath{E}}  %

\providecommand{\clientfrac}{\ensuremath{C}}    %

\usepackage{algorithm}
\usepackage[noend]{algpseudocode}

\newcommand{\mycaptionof}[2]{\captionof{#1}{#2}}

\errorcontextlines\maxdimen

\makeatletter
    \newcommand*{\algrule}[1][\algorithmicindent]{\makebox[#1][l]{\hspace*{.5em}\thealgruleextra\vrule height \thealgruleheight depth \thealgruledepth}}%
\newcommand*{\thealgruleextra}{}
\newcommand*{\thealgruleheight}{.75\baselineskip}
\newcommand*{\thealgruledepth}{.25\baselineskip}

\newcount\ALG@printindent@tempcnta
\def\ALG@printindent{%
    \ifnum \theALG@nested>0%
        \ifx\ALG@text\ALG@x@notext%
        \else
            \unskip
            \addvspace{-1pt}%
            \ALG@printindent@tempcnta=1
            \loop
                \algrule[\csname ALG@ind@\the\ALG@printindent@tempcnta\endcsname]%
                \advance \ALG@printindent@tempcnta 1
            \ifnum \ALG@printindent@tempcnta<\numexpr\theALG@nested+1\relax%
            \repeat
        \fi
    \fi
    }%
\usepackage{etoolbox}
\patchcmd{\ALG@doentity}{\noindent\hskip\ALG@tlm}{\ALG@printindent}{}{\errmessage{failed to patch}}
\makeatother

\newbox\statebox
\newcommand{\myState}[1]{%
    \setbox\statebox=\vbox{#1}%
    \edef\thealgruleheight{\dimexpr \the\ht\statebox+1pt\relax}%
    \edef\thealgruledepth{\dimexpr \the\dp\statebox+1pt\relax}%
    \ifdim\thealgruleheight<.75\baselineskip
        \def\thealgruleheight{\dimexpr .75\baselineskip+1pt\relax}%
    \fi
    \ifdim\thealgruledepth<.25\baselineskip
        \def\thealgruledepth{\dimexpr .25\baselineskip+1pt\relax}%
    \fi
    \State #1%
    \def\thealgruleheight{\dimexpr .75\baselineskip+1pt\relax}%
    \def\thealgruledepth{\dimexpr .25\baselineskip+1pt\relax}%
}

\usepackage{xcolor}
\usepackage{transparent}
\definecolor{newcolor}{rgb}{0.8,1,1}
\newcommand{\fedavgcolor}{Thistle}
\newcommand{\feddfcolor}{SpringGreen}

\usepackage{tikz}
\usetikzlibrary{fit,calc}

\pgfdeclarelayer{back}
\pgfdeclarelayer{front}
\pgfsetlayers{back,main,front}

\makeatletter
\pgfkeys{%
  /tikz/on layer/.code={
    \pgfonlayer{#1}\begingroup
    \aftergroup\endpgfonlayer
    \aftergroup\endgroup
  },
  /tikz/node on layer/.code={
    \gdef\node@@on@layer{%
      \setbox\tikz@tempbox=\hbox\bgroup\pgfonlayer{#1}\unhbox\tikz@tempbox\endpgfonlayer\egroup}
    \aftergroup\node@on@layer
  },
  /tikz/end node on layer/.code={
    \endpgfonlayer\endgroup\endgroup
  }
}
\def\node@on@layer{\aftergroup\node@@on@layer}
\makeatother

\colorlet{fedavgcolorablock}{\fedavgcolor}
\colorlet{feddfcolorblock}{\feddfcolor}

\usepackage[textwidth=3.5cm]{todonotes}

\title{
Ensemble Distillation for Robust Model Fusion in Federated Learning
}

\author{%
Tao Lin\thanks{Equal contribution.}~,
Lingjing Kong$^*$,
Sebastian U. Stich,
Martin Jaggi. \\
MLO, EPFL, Switzerland \\
\texttt{\{tao.lin, lingjing.kong, sebastian.stich, martin.jaggi\}@epfl.ch}
}

\begin{document}

\maketitle

\begin{abstract}
	\FL (FL) is a machine learning setting where many devices collaboratively train a machine learning model while keeping the training data decentralized.
	In most of the current training schemes the central model is refined by averaging the parameters of the server model and the updated parameters from the client side.
	However, directly averaging model parameters is only possible if all models have the same structure and size,
	which could be a %
	restrictive constraint in many scenarios.\\
	In this work we investigate more powerful and more flexible aggregation schemes for FL.
	Specifically, we propose ensemble distillation for model fusion,
	i.e.\ training the central classifier through unlabeled data on the outputs of the models from the clients.
	This knowledge distillation technique mitigates privacy risk and cost to the same extent as the baseline FL algorithms,
	but allows flexible aggregation over heterogeneous client models that can differ e.g.\ in size, numerical precision or structure.
	We show in extensive empirical experiments on various CV/NLP datasets (CIFAR-10/100, ImageNet, AG News, SST2) and settings (heterogeneous models/data)
	that the server model can be trained much faster, requiring fewer communication rounds than any existing FL technique so far.
\end{abstract}

\section{Introduction}

Federated Learning (FL)~%
has emerged as an important machine learning paradigm in which a federation of clients participate in collaborative training of a centralized model~\cite{shokri2015privacy,mcmahan2016communication,smith2017federated,%
	caldas2018leaf,bonawitz2019federated,li2019federated,kairouz2019advances}.
The clients send their model parameters to the server but never their private training datasets, thereby ensuring a basic level of privacy.
Among the key challenges in federated training are communication overheads and delays (one would like to train the central model with as few communication rounds as possible),
and client heterogeneity: the training data (non-i.i.d.-ness), as well as hardware and computing resources, can change drastically among clients,
for instance when training on commodity mobile devices.

Classic training algorithms in FL, such as federated averaging (\fedavg)~\cite{mcmahan2016communication} and its recent adaptations~\cite{mohri2019agnostic,li2020fair,hsu2019measuring,%
	karimireddy2019scaffold,hsu2020federated,reddi2020adaptive}, are all based on directly averaging of the participating client's \emph{parameters} and can hence only be applied if all client's models have the same size and structure. %
In contrast, ensemble learning methods~\cite{Shan2017learning,furlanello2018born,anil2018large,Dvornik_2019_ICCV,park2019feed,liu2019knowledge,wu2019personid}
allow to combine multiple heterogeneous weak classifiers by averaging the \emph{predictions} of the individual models instead.
However, applying ensemble learning techniques directly in FL is infeasible in practice due to the large number of participating clients,
as it requires keeping weights of all received models on the server and performing naive ensembling (logits averaging) for inference.

To enable \underline{fed}erated learning in more realistic settings,
we propose to use ensemble \underline{d}istillation~\cite{bucilua2006model,hinton2015distilling}
for robust model \underline{f}usion (\algopt).
Our scheme leverages unlabeled data or artificially generated examples
(e.g.\ by a GAN's generator~\cite{goodfellow2014generative})
to aggregate knowledge from all received (heterogeneous) client models.
We demonstrate with thorough empirical results that
our ensemble distillation approach not only addresses the existing quality loss issue~\cite{hsieh2019non}
of Batch Normalization (BN)~\cite{ioffe2015batch} for networks in a homogeneous FL system,
but can also break the knowledge barriers among heterogeneous client models.
Our main contributions are:
\begin{itemize}[nosep,leftmargin=12pt]
	\item We propose a distillation framework for robust federated model fusion,
	      which allows for heterogeneous client models and data,
	      and is robust to the choices of neural architectures.
	\item We show in extensive numerical experiments on various CV/NLP datasets
	      (CIFAR-10/100, ImageNet, AG News, SST2) and settings (heterogeneous models and/or data)
	      that the server model can be trained much faster,
	      requiring fewer communication rounds than any existing FL technique.
\end{itemize}
We further provide insights on when \algopt can outperform \fedavg
(see also Fig.~\ref{fig:illustration_problems_in_fl}
that highlights an intrinsic limitation of parameter averaging based approaches)
and what factors influence \algopt.

\begin{figure*}[t]
	\centering
	\includegraphics[width=1.\textwidth]{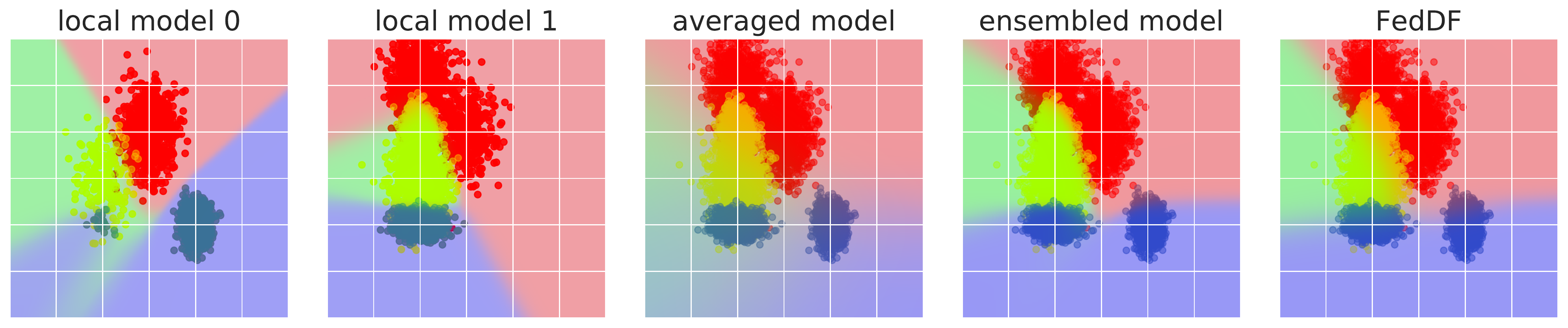}
	\caption{\small
		\textbf{Limitations of \fedavg.}
		We consider a toy example of a 3-class classification task with a 3-layer MLP,
		and display the decision boundaries (probabilities over RGB channels) on the input space.
		The left two figures show the individually trained local models.
		The right three figures evaluate aggregated models
		and the global data distribution;
		the averaged model results in much blurred decision boundaries.
		The used datasets are displayed in Figure~\ref{fig:illustration_problems_in_fl_detail} (Appendix~\ref{subsec:description_toy_example}).
	}
	\vspace{-1.5em}
	\label{fig:illustration_problems_in_fl}
\end{figure*}

\section{Related Work}
\paragraph{Federated learning.}
The classic algorithm in FL, \fedavg~\cite{mcmahan2016communication}, or local SGD~\cite{lin2020dont} when all devices are participating,
performs weighted parameter average over the client models after several local SGD updates with weights proportional to the size of each client's local data.
Weighting schemes based on client loss are investigated in~\cite{mohri2019agnostic,li2020fair}.
To address the difficulty of directly averaging model parameters,
\cite{singh2019model,Wang2020Federated} propose to
use optimal transport and other alignment schemes to
first align or match individual neurons of the neural nets layer-wise
before averaging the parameters. %
However, these layer-based alignment schemes necessitate client models with the same number of layers and structure,
which is restrictive in heterogeneous systems in practice.
\\
Another line of work aims to improve local client training,
i.e., client-drift problem caused by the heterogeneity of local data~\cite{li2018federated,karimireddy2019scaffold}.
For example, \fedprox~\cite{li2018federated} incorporates a proximal term for the local training.
Other techniques like acceleration, recently appear in~\cite{hsu2019measuring,hsu2020federated,reddi2020adaptive}.

\paragraph{Knowledge distillation.}
Knowledge distillation for neural networks is first introduced in~\cite{bucilua2006model,hinton2015distilling}.
By encouraging the student model to approximate the output logits of the teacher model,
the student is able to imitate the teacher's behavior
with marginal quality loss~\cite{romero2014fitnets,zagoruyko2016paying,kim2018paraphrasing,tung2019similarity,koratana19a,huang2017like,ahn2019variational,tian2019contrastive}.
Some work study the ensemble distillation,
i.e., distilling the knowledge of an ensemble of teacher models to a student model.
To this end, existing approaches either average the logits from the ensemble of teacher models~\cite{Shan2017learning,furlanello2018born,anil2018large,Dvornik_2019_ICCV},
or extract knowledge from the feature level~\cite{park2019feed,liu2019knowledge,wu2019personid}.
\\
Most of these schemes rely on using the original training data for the distillation process.
In cases where real data is unavailable,
some recent work \cite{nayak2019zero,micaelli2019zero} demonstrate that distillation
can be accomplished by crafting pseudo data either from the weights of the teacher model or
through a generator adversarially trained with the student.
\algopt can be combined with all of these approaches. In this work, we consider
unlabeled datasets for ensemble distillation,
which could be either collected from other domains or directly generated from a pre-trained generator.

\paragraph{Comparison with close FL work.}
Guha \textit{et al}.~\cite{guha2019one} propose ``one-shot fusion'' through unlabeled data
for SVM loss objective,
whereas we consider multiple-round scenarios on diverse neural architectures and tasks.
FD~\cite{jeong2018communication} utilizes distillation to reduce FL communication costs.
To this end, FD synchronizes logits per label which are accumulated during the local training.
The averaged logits per label (over local steps and clients)
will then be used as a distillation regularizer for the next round's local training.
Compared to \fedavg, FD experiences roughly 15\% quality drop on MNIST.
In contrast, \algopt shows superior learning performance over \fedavg
and can significantly reduce the number of communication rounds
to reach target accuracy on diverse challenging tasks.
\\
FedMD~\cite{li2019fedmd} and the recently proposed Cronus~\cite{chang2019cronus}
consider learning through averaged logits per sample on a public dataset.
After the initial pre-training on the labeled public dataset,
FedMD learns on the public and private dataset iteratively for personalization,
whereas in Cronus, the public dataset (with soft labels) is used jointly with local private data for the local training.
As FedMD trains client models simultaneously on both labeled public and private datasets,
the model classifiers have to
include all classes from both datasets.
Cronus, in its collaborative training phase, mixes public and private data for local training.
Thus for these methods, the public dataset construction requires careful deliberation and
even prior knowledge on clients' private data.
Moreover, how these modifications impact local training quality remains unclear.
\algopt faces no such issues: we show that \algopt is robust to distillation dataset selection
and the distillation is performed on the server side, leaving local training unaffected.
We include a detailed discussion with FedMD, Cronus in Appendix~\ref{sec:detailed_related_work}.
\looseness=-1

When preparing this version, we also notice other contemporary work~\cite{sun2020federated, chen2020feddistill, zhou2020distilled,he2020group}
and we defer discussions to Appendix~\ref{sec:detailed_related_work}.

\section{Ensemble Distillation for Robust Model Fusion}
\vspace{-1em} %

\begin{algorithm}[!h]
	\begin{algorithmic}[1]
		\Procedure{Server}{}
		\For{each communication round $t = 1, \dots, T$}
		\myState{$S_t \leftarrow$ random subset ($C$ fraction) of the $K$ clients}
		\For{ each client $k \in \cS_t$ \textbf{in parallel} }
		\myState{ $\hxx_{t}^k \leftarrow \text{Client-LocalUpdate}(k, \xx_{t - 1})$}  \Comment{detailed in Algorithm~\ref{alg:client_update}.}
		\EndFor

		\myState{ initialize for model fusion $\xx_{t, 0} \leftarrow \sum_{k \in \cS_t} \frac{n_k}{ \sum_{k \in S_t} n_k } \hxx_{t}^k$ } \label{ln:average}
		\For{ $j$ in $\{ 1, \dots, N \}$ }
		\myState{ sample a mini-batch of samples $\dd$, from e.g.\ (1) an unlabeled dataset, (2) a generator}
		\myState{ use ensemble of $ \{ \hxx_{t}^k \}_{k \in \cS_t}$ to update server student $\xx_{t, j-1}$ through \avglogits}
		\EndFor
		\myState{ $\xx_{t} \leftarrow \xx_{t, N}$}

		\EndFor
		\myState{\Return $\xx_T$}
		\EndProcedure
	\end{algorithmic}

	\mycaptionof{algorithm}{\small
		Illustration of \algopt on $K$ homogeneous clients
		(indexed by $k$) for $T$ rounds, $n_k$ denotes the number of data points per client and $C$ the fraction of clients participating in each round.
		The server model is initialized as $\xx_{0}$.
		While \fedavg just uses the averaged models $\xx_{t,0}$,
		we perform $N$ iterations of server-side model fusion on top (line 7 -- line 10).
	}
	\label{alg:homogeneous_framework}
\end{algorithm}

\vspace{-1.em}
In this section, we first introduce the core idea of the proposed Federated Distillation Fusion (\algopt).
We then comment on its favorable characteristics
and discuss possible extensions.

\paragraph{Ensemble distillation.}
We first discuss the key features of \algopt for the special case of homogeneous models, i.e. when all clients share the same network architecture (Algorithm~\ref{alg:homogeneous_framework}).
For model fusion, the server
distills the ensemble of  $\abs{\cS_t}$ client teacher models to one single server student model. For the distillation, the teacher models are evaluated on mini-batches
of unlabeled data on the server (forward pass)
and their logit outputs (denoted by $f(\hat \xx^k_t,\dd)$ for mini-batch $\dd$)
are used to train the student model on the server:
\begin{small}
	\begin{align} \label{eq:distillation_scheme}
		\xx_{t, j} & := \xx_{t, j - 1} - \eta
		\derive{ \kl \left( \sigma \left( \frac{1}{ \abs{ \cS_t } } \sum_{k \in \cS_t} f ( \hxx^k_t, \dd) \right), \sigma \left( f ( \xx_{t, j-1}, \dd) \right) \right) }{ \xx_{t, j-1} }  \tag{\avglogits} \,.
	\end{align}%
\end{small}%
Here $\kl$ stands for Kullback–Leibler divergence,
$\sigma$ is the softmax function,
and $\eta$ is the stepsize.

\algopt can easily be extended to heterogeneous FL systems
(Algorithm~\ref{alg:heterogeneous_framework} and Figure~\ref{fig:diagram_heterogeneous_system}
in Appendix~\ref{appendix:algo_description}).
We assume the system contains $p$ distinct model prototype groups
that potentially differ in neural architecture, structure and numerical precision.
By ensemble distillation, each model architecture group acquires knowledge
from logits averaged over \textit{all} received models,
thus mutual beneficial information can be shared across architectures;
in the next round, each activated client receives the corresponding fused prototype model.
Notably, as the fusion takes place on the server side, there is no additional burden and interference on clients.

\paragraph{Utilizing unlabeled/generated data for distillation.}
Unlike most existing ensemble distillation methods that
rely on \emph{labeled} data from the training domain,
we demonstrate the feasibility of achieving model fusion by using \emph{unlabeled} datasets from other domains
for the sake of privacy-preserving FL.
Our proposed method also allows the use of synthetic data from a pre-trained generator (e.g.\ GAN\footnote{
	GAN training is not involved in all stages of FL and cannot steal clients' data.
	Data generation is done by the (frozen) generator before the FL training by performing inference on random noise.
	Adversarially involving GAN's training during the FL training may cause the privacy issue,
	but it is beyond the scope of this paper.
})
as distillation data to alleviate potential limitations (e.g.\ acquisition, storage) of real unlabeled datasets.

\paragraph{Discussions on privacy-preserving extension.}
Our proposed model fusion framework in its simplest form---like most existing FL methods---requires to
exchange models between the server and each client,
resulting in potential privacy leakage due to e.g. memorization present in the models.
Several existing protection mechanisms can be added to our framework to protect clients from adversaries.
These include adding differential privacy~\cite{geyer2017differentially} for client models,
or performing hierarchical and decentralized model fusion
through synchronizing locally inferred logits e.g.\ on \emph{random} public data\footnote{
	For instance, these data can be generated locally from identical generators with a controlled random state.
}, as in the recent work~\cite{chang2019cronus}.
We leave further explorations of this aspect for future work.

\section{Experiments}\label{sec:experiments}%
\subsection{Setup}%
\paragraph{Datasets and models.}
We evaluate the learning of different SOTA FL methods on both CV and NLP tasks,
on architectures of ResNet~\cite{he2016deep}, VGG~\cite{simonyan2014very}, ShuffleNetV2~\cite{ma2018shufflenet} and DistilBERT~\cite{sanh2019distilbert}.
We consider \fl CIFAR-10/100~\cite{krizhevsky2009learning} and
ImageNet~\cite{krizhevsky2012imagenet} (down-sampled to image resolution 32 for computational feasibility~\cite{chrabaszcz2017downsampled}) from scratch for CV tasks;
while for NLP tasks,
we perform federated fine-tuning on a 4-class news classification dataset (AG News~\cite{zhang2015character})
and a 2-class classification task (Stanford Sentiment Treebank, SST2~\cite{socher-etal-2013-recursive}).
The validation dataset is created for CIFAR-10/100, ImageNet, and SST2,
by holding out $10\%$, $1\%$ and $1\%$ of the original training samples respectively;
the remaining training samples are used as the training dataset (before partitioning client data) and the whole procedure is controlled by random seeds.
We use validation/test datasets on the server and report the test accuracy over three different random seeds.

\paragraph{Heterogeneous distribution of client data.}\ %
We use the Dirichlet distribution as in~\cite{yurochkin2019bayesian,hsu2019measuring}
to create disjoint non-\iid client training data.
The value of $\alpha$ controls the degree of non-i.i.d.-ness:
$\alpha \!=\! 100$ mimics identical local data distributions,
and the smaller $\alpha$ is, the more likely the clients hold examples from only one class (randomly chosen).
Figure~\ref{fig:understanding_learning_behaviors_resnet8_cifar10_kt_different_non_iid_degrees}
visualizes how samples are distributed among $20$ clients for CIFAR-10 on different $\alpha$ values;
more visualizations are shown in Appendix~\ref{appendix:detailed_exp_setup}.

\paragraph{Baselines.}
\algopt is designed for effective model fusion on the server,
considering the accuracy of the global model on the test dataset.
Thus we omit the comparisons to methods designed
for personalization (e.g.\ FedMD~\cite{li2019fedmd}), security/robustness (e.g. Cronus~\cite{chang2019cronus}),
and communication efficiency
(e.g.~\cite{jeong2018communication}, known for poorer performance than \fedavg).
We compare \algopt with SOTA FL methods,
including
1) \fedavg~\cite{mcmahan2016communication},
2) \fedprox~\cite{li2018federated} (for better local training under heterogeneous systems),
3) accelerated \fedavg a.k.a.\ \fedavgM\footnote{
	The performance of \fedavgM is coupled
	with local learning rate, local training epochs, and the number of communication rounds.
	The preprints~\cite{hsu2019measuring,hsu2020federated} consider small learning rate for at least 10k communication rounds;
	while we use much fewer communication rounds, which sometimes result in different observations.
}~\cite{hsu2019measuring,hsu2020federated},
and 4) \fedma\footnote{
	\fedma does not support BN or residual connections,
	thus the comparison is only performed on VGG-9.
}~\cite{Wang2020Federated} (for better model fusion).
We elaborate on the reasons for omitted numerical comparisons in Appendix~\ref{sec:detailed_related_work}.

\paragraph{The local training procedure.}
The FL algorithm randomly samples a fraction ($\clientfrac$) of clients
per communication round for local training.
For the sake of simplicity,
the local training in our experiments uses a constant learning rate (no decay),
no Nesterov momentum acceleration, and no weight decay.
The hyperparameter tuning procedure is deferred to Appendix~\ref{appendix:detailed_exp_setup}.
Unless mentioned otherwise the learning rate is set to $0.1$ for ResNet-like nets, $0.05$ for VGG,
and $1e{-}5$ for DistilBERT.

\paragraph{The model fusion procedure.}
We evaluate the performance of \algopt
by utilizing either randomly sampled data from existing (unlabeled) datasets\footnote{
	Note the actual computation expense for distillation is determined by
	the product of the number of distillation steps and distillation mini-batch size ($128$ in all experiments),
	rather than the distillation dataset size.
}
or BigGAN's generator~\cite{brock2018large}.
Unless mentioned otherwise we use CIFAR-100 and downsampled ImageNet (image size $32$)
as the distillation datasets for \algopt on CIFAR-10 and CIFAR-100 respectively.
Adam with learning rate $1e{-}3$ (w/ cosine annealing) is used to distill knowledge from the ensemble of received local models.
We employ early-stopping to stop distillation after the validation performance plateaus for $1e{3}$
steps (total $1e{4}$ update steps).
The hyperparameter used for model fusion is kept constant over all tasks.

\subsection{Evaluation on the Common \FL Settings}\label{subsec:evaluations_on_common_fl_settings}%
\paragraph{Performance overview for different FL scenarios.}
We can observe from Figure~\ref{fig:understanding_learning_behaviors_resnet8_cifar10_kt_different_non_iid_degrees}
that \algopt consistently outperforms \fedavg for all client fractions and non-\iid degrees
when the local training is reasonably sufficient (e.g.\ over 40 epochs).

\algopt benefits from larger numbers of local training epochs. This is because the performance of
the model ensemble is highly dependent on the diversity among its individual models~\cite{kuncheva2003measures,sollich1996learning}.
Thus longer local training leads to greater diversity and quality of the ensemble and hence a better distillation result for the fused model.
This characteristic is desirable in practice as it helps reduce the communication overhead in FL systems.
In contrast, the performance of \fedavg saturates and even degrades with the increased number of local epochs,
which is consistent with observations in~\cite{mcmahan2016communication,caldas2018leaf,Wang2020Federated}.
As \algopt focuses on better model fusion on the server side,
it is orthogonal to recent techniques (e.g.~\cite{shoham2019overcoming,karimireddy2019scaffold,deng2020adaptive})
targeting the issue of non-\iid local data.
We believe combining \algopt with these techniques can lead to a more robust FL, which we leave as future work\footnote{
	We include some preliminary results to illustrate the compatibility of~\algopt in Table~\ref{tab:resnet_8_cifar10_compatibility_with_other_methods}
	(Appendix~\ref{appendix:ablation_study}).
}.

\paragraph{Ablation study of \algopt.}
We provide detailed ablation study for \algopt in Appendix~\ref{appendix:ablation_study}
to identify the source of the benefits.
For example, Table~\ref{tab:resnet_8_cifar10_importance_of_init}
justifies the importance of using the uniformly averaged local models as a starting model
(line 6 in Algorithm~\ref{alg:homogeneous_framework} and line 11 in Algorithm~\ref{alg:heterogeneous_framework}),
for the quality of ensemble distillation in~\algopt.
We further investigate the effect of different optimizers (for on-server ensemble distillation)
on the \fl performance
in Table~\ref{tab:resnet_8_cifar10_different_local_training_scheme}
and Table~\ref{tab:resnet_8_cifar10_effect_of_different_optimizers_for_distillation}.

\begin{figure*}[!t]
	\centering
	\subfigure[\small
		$\alpha \!=\! 100$.
	]{
		\begin{minipage}{.31\textwidth}
			\includegraphics[width=1\textwidth,]{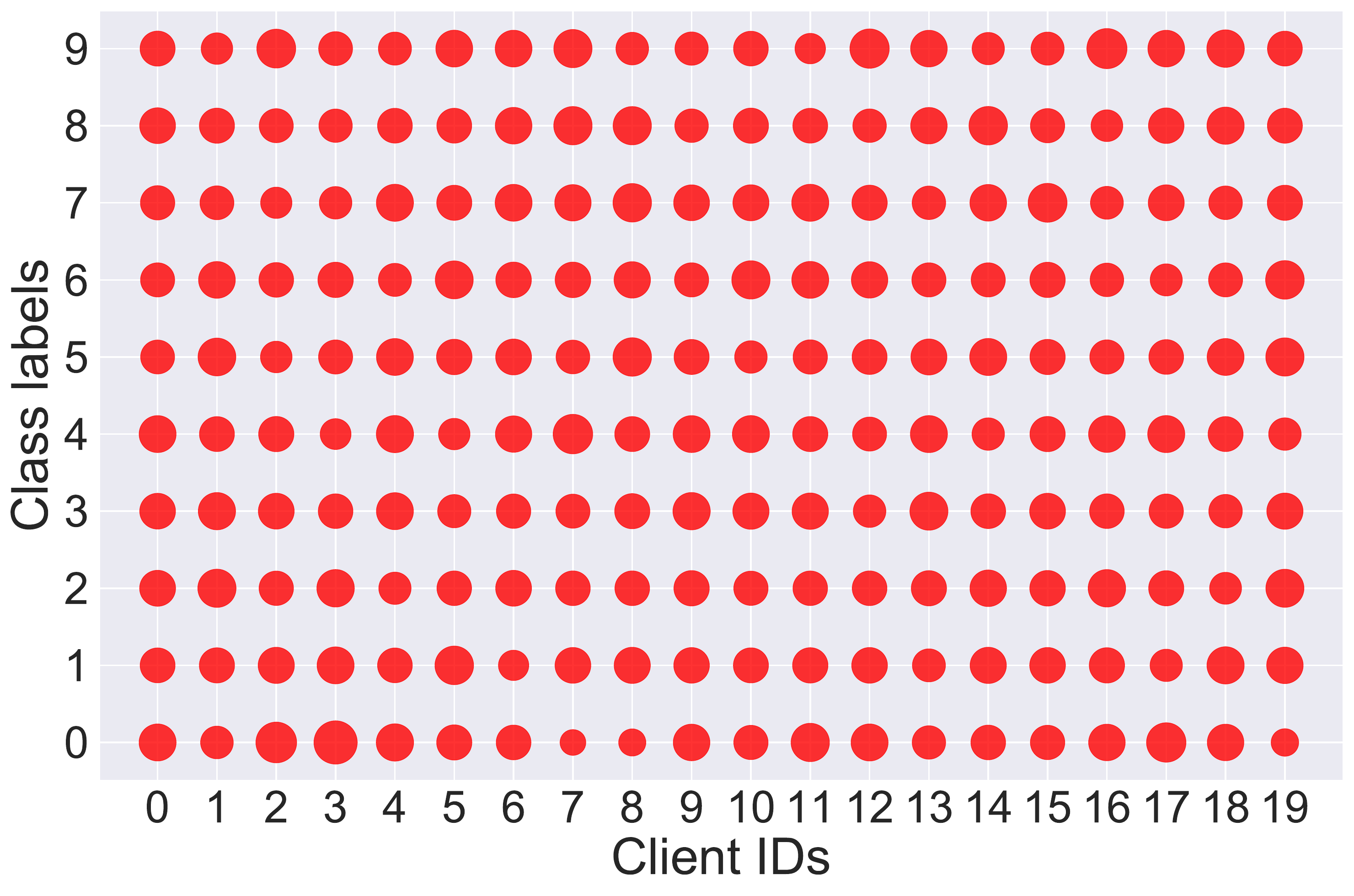}
			\vfill
			\includegraphics[width=1\textwidth,]{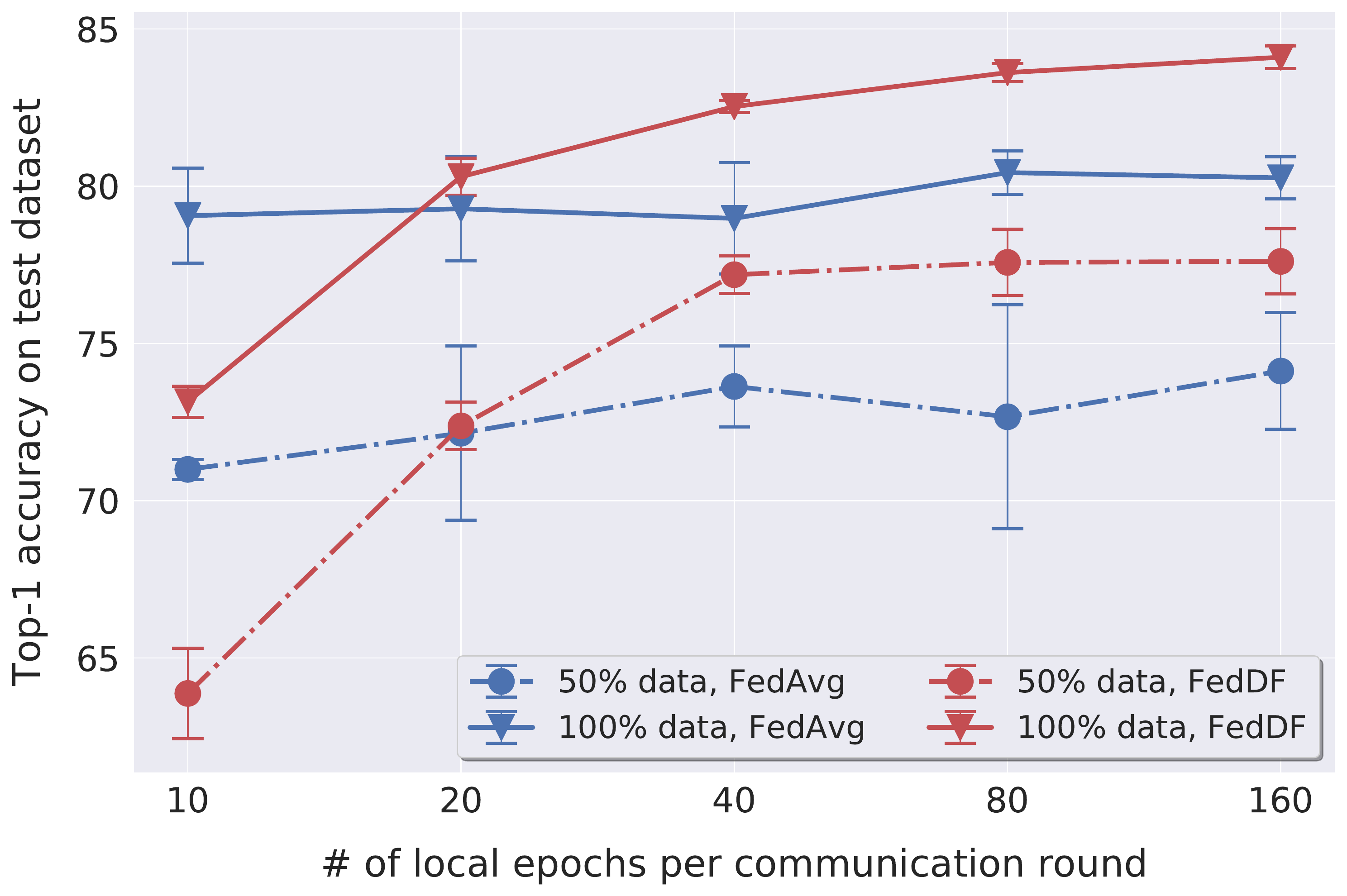}
		\end{minipage}
		\label{fig:resnet8_cifar10_non_iid_100_fedavg_vs_kt_results}
	}
	\hfill
	\subfigure[\small
		$\alpha \!=\! 1$.
	]{
		\begin{minipage}{.31\textwidth}
			\includegraphics[width=1\textwidth,]{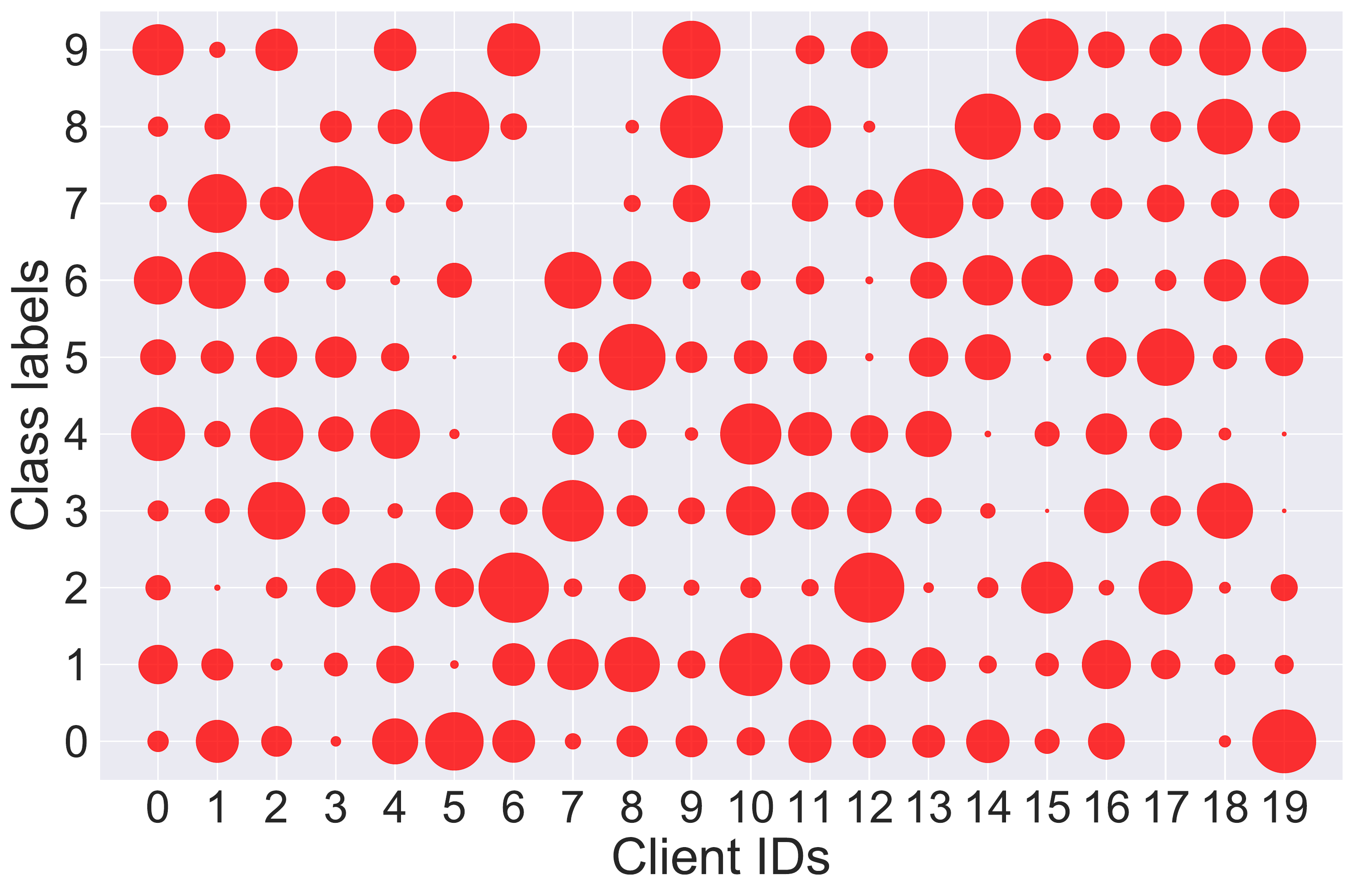}
			\vfill
			\includegraphics[width=1\textwidth,]{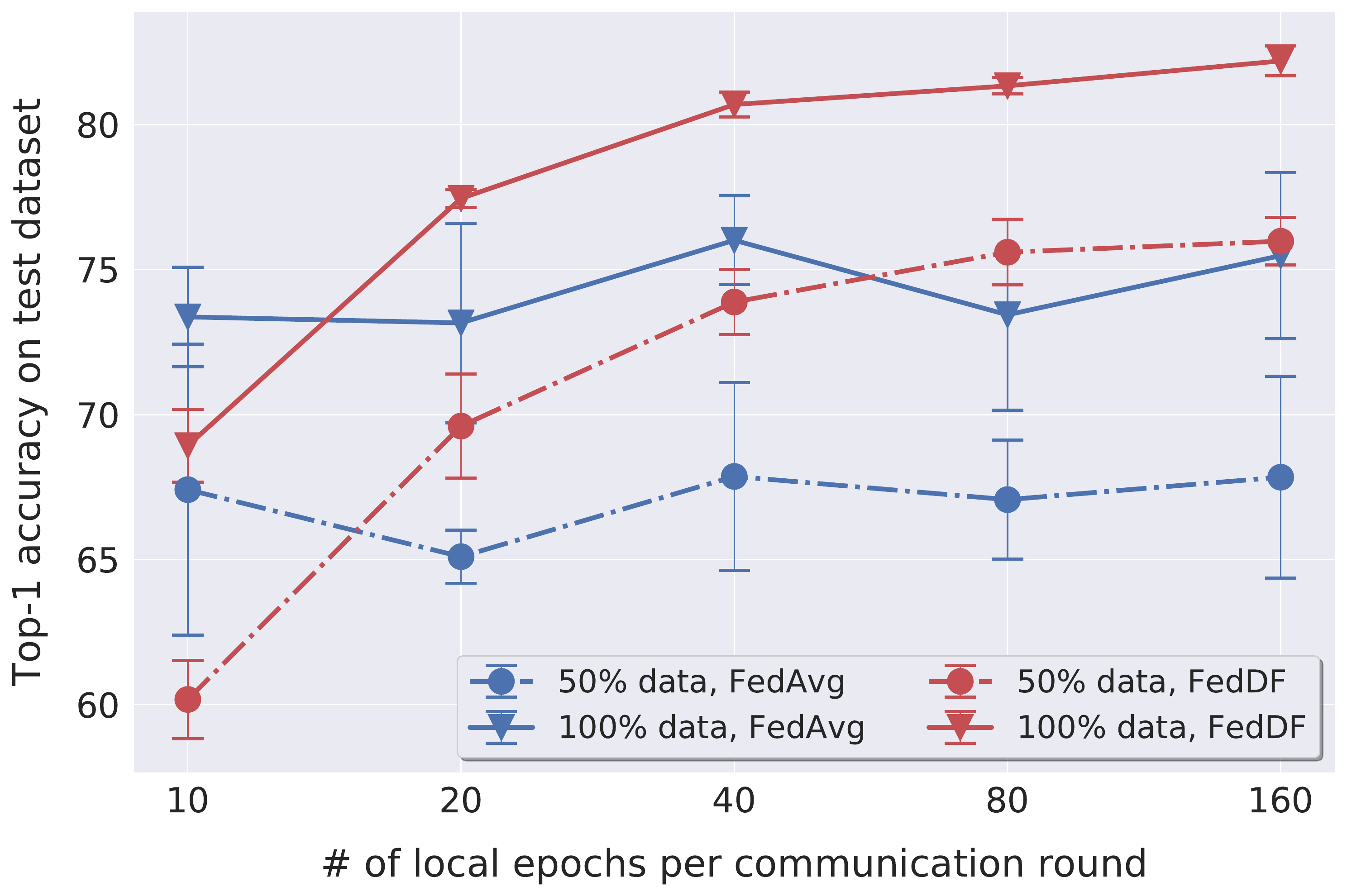}
		\end{minipage}
		\label{fig:resnet8_cifar10_non_iid_1_fedavg_vs_kt_results}
	}
	\hfill
	\subfigure[\small
		$\alpha \!=\! 0.01$.
	]{
		\begin{minipage}{.31\textwidth}
			\includegraphics[width=1\textwidth,]{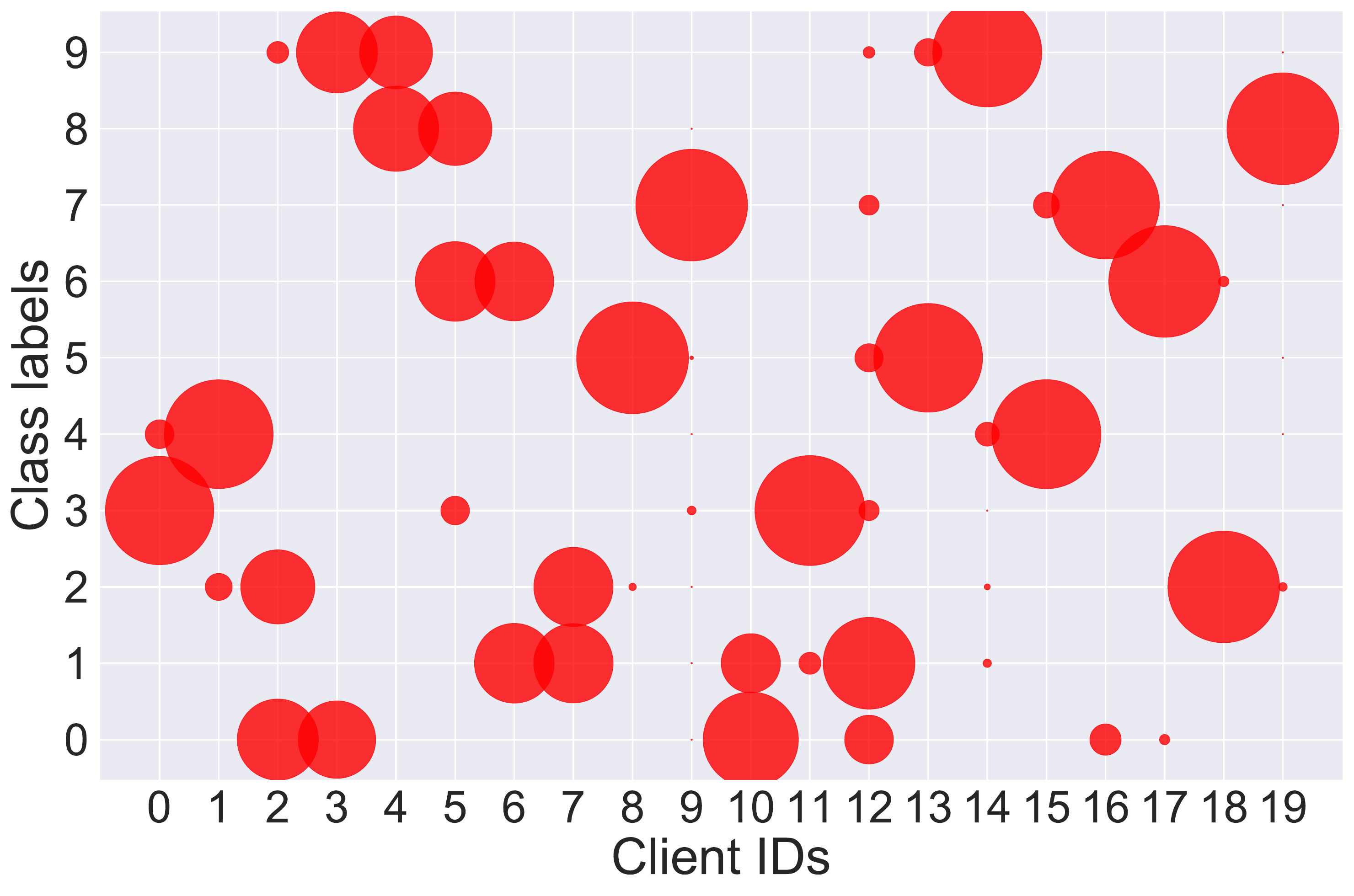}
			\vfill
			\includegraphics[width=1\textwidth,]{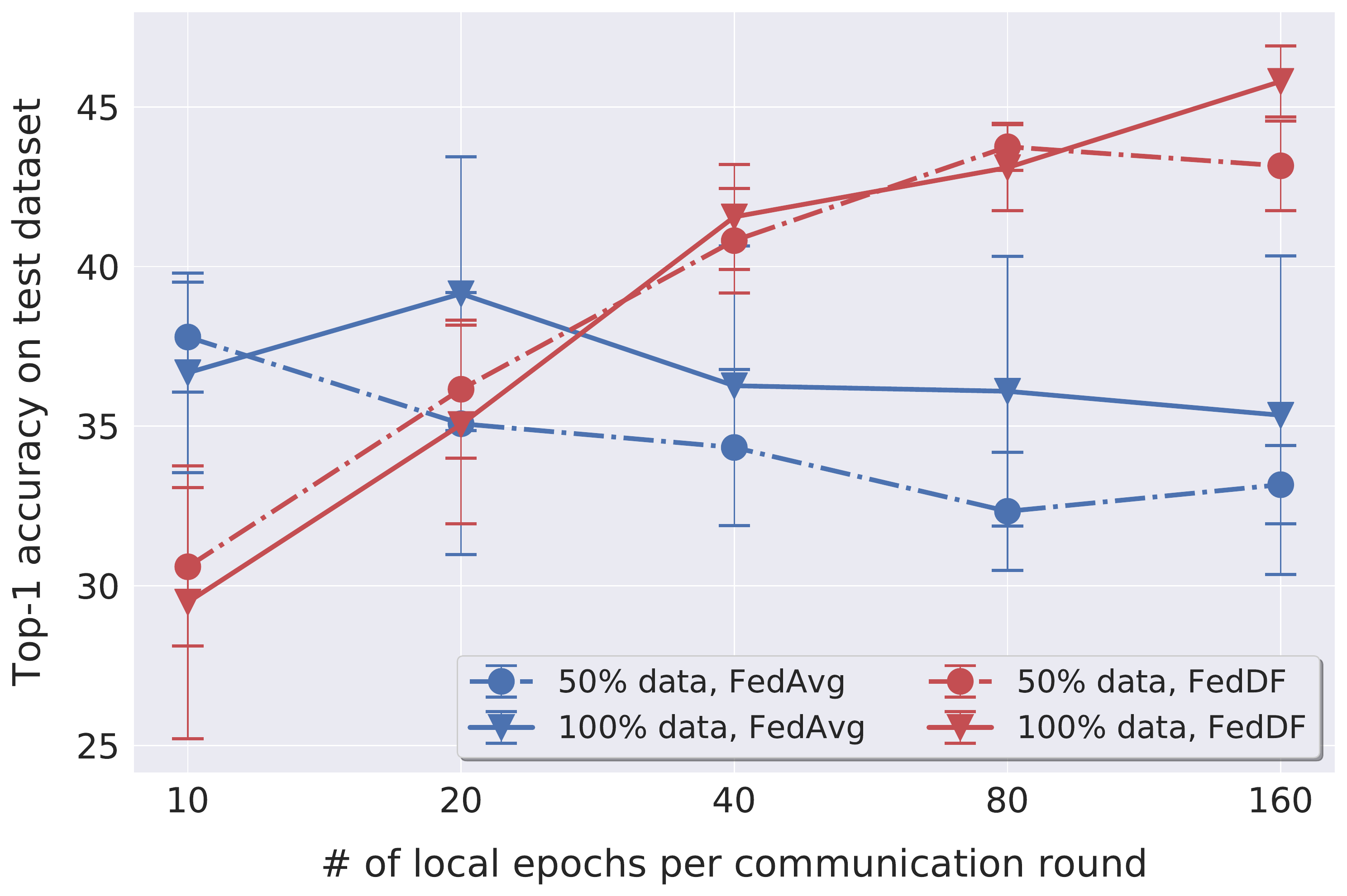}
		\end{minipage}
		\label{fig:resnet8_cifar10_non_iid_001_fedavg_vs_kt_results}
	}
	\vspace{-1.em}
	\caption{\small
		\textbf{Top:} \textbf{Illustration of \# of samples per class allocated to each client} (indicated by dot sizes), for different Dirichlet distribution $\alpha$ values.
		\textbf{Bottom:} \textbf{Test performance} of \textbf{\algopt} and \textbf{\fedavg} on \textbf{CIFAR-10} with \textbf{ResNet-8},
		for different local training settings:
		non-\iid degrees $\alpha$, data fractions,
		and \# of local epochs per communication round.
		We perform $100$ communication rounds, and active clients are sampled with ratio $\clientfrac \!=\! 0.4$ from a total of $20$ clients.
		Detailed learning curves in these scenarios can be found in Appendix~\ref{appendix:standard_fl_scenario}.
	}
	\vspace{-0.5em}
	\label{fig:understanding_learning_behaviors_resnet8_cifar10_kt_different_non_iid_degrees}
\end{figure*}

\begin{table}[!t]
	\vspace{-0.5em}
	\caption{\small
		\textbf{Evaluating different FL methods in different scenarios}
		(i.e.\ different client sampling fractions, \# of local epochs and target accuracies),
		in terms of \textbf{the number of communication rounds to reach target top-1 test accuracy}.
		We evaluate on ResNet-8 with CIFAR-10.
		For each communication round,
		a fraction $\clientfrac$ of the total $20$ clients are randomly selected.
		$T$ denotes the specified target top-1 test accuracy.
		Hyperparameters are fine-tuned for each method (\fedavg, \fedprox, and \fedavgM);
		\algopt uses the optimal learning rate from \fedavg.
		The performance upper bound of (tuned) centralized training is $86\%$ (trained on all local data).
	}
	\label{tab:cifar10_resnet8_detailed_comparison}
	\centering
	\resizebox{1.\textwidth}{!}{%
		\begin{tabular}{lcllllll}
			\toprule
			& 				   								& \multicolumn{6}{c}{The number of communication rounds to reach target performance $T$}                                                     \\ \cmidrule(lr){3-8}
			&  \parbox{2cm}{\centering Local \\ epochs}  	& \multicolumn{2}{c}{$\clientfrac \!=\! 0.2$} 	& \multicolumn{2}{c}{$\clientfrac \!=\! 0.4$}	& \multicolumn{2}{c}{$\clientfrac \!=\! 0.8$}	\\ \cmidrule(lr){3-4} \cmidrule(lr){5-6} \cmidrule(lr){7-8}
			                        &      & $\alpha \!=\! 1, T\!=\!80\%$ & $\alpha \!=\! 0.1, T\!=\! 75\%$ & $\alpha \!=\! 1, T\!=\!80\%$ & $\alpha \!=\! 0.1, T\!=\! 75\%$ & $\alpha \!=\! 1, T\!=\!80\%$ & $\alpha \!=\! 0.1, T\!=\! 75\%$ \\ \midrule
			\fedavg                 & $1$  & $350 \pm 31$                 & $546 \pm 191$                   & $246 \pm 41$                 & $445 \pm 8$                     & $278 \pm 83$                 & $361 \pm 111$                   \\
			                        & $20$ & $144 \pm 51$                 & $423 \pm 105$                   & $97 \pm 29$                  & $309 \pm 88$                    & $103 \pm 26$                 & $379 \pm 151$                   \\
			                        & $40$ & $130 \pm 13$                 & $312 \pm 87$                    & $104 \pm 52$                 & $325 \pm 82$                    & $100 \pm 76$                 & $312 \pm 110$                   \\ \midrule
			\fedprox                & $20$ & $99 \pm 61$                  & $346 \pm 12$                    & $91 \pm 40$                  & $235 \pm 41$                    & $92 \pm 21$                  & $237 \pm 93$                    \\
			                        & $40$ & $115 \pm 17$                 & $270 \pm 96$                    & $87 \pm 49$                  & $229 \pm 79$                    & $80 \pm 44$                  & $284 \pm 130$                   \\ \midrule
			\fedavgM                & $20$ & $92 \pm 15$                  & $299 \pm 85$                    & $92 \pm 46$                  & $221 \pm 29$                    & $97 \pm 37$                  & $235 \pm 129$                   \\
			                        & $40$ & $135 \pm 52$                 & $322 \pm 99$                    & $78 \pm 28$                  & $224 \pm 38$                    & $83 \pm 34$                  & $232 \pm 11$                    \\ \midrule
			\textbf{\algopt} (ours) & $20$ & $\textbf{61} \pm 24$         & $\textbf{102} \pm 42$           & $\textbf{28} \pm 10$         & $\textbf{51} \pm 4$             & $\textbf{22} \pm 1$          & $\textbf{33} \pm 18$            \\
			                        & $40$ & $\textbf{28} \pm 6$          & $\textbf{80} \pm 25$            & $\textbf{20} \pm 4$          & $\textbf{39} \pm 10$            & $\textbf{14} \pm 2$          & $\textbf{20} \pm 4$             \\
			\bottomrule
		\end{tabular}%
	}
	\vspace{-1.em}
\end{table}

\paragraph{Detailed comparison of \algopt with other SOTA \fl methods for CV tasks.}
Table~\ref{tab:cifar10_resnet8_detailed_comparison} summarizes the results
for various degrees of non-\iid data, local training epochs and client sampling fractions.
In all scenarios, \algopt requires significantly fewer communication
rounds than other SOTA methods to reach designated target accuracies.
The benefits of \algopt can be further pronounced by taking more local training epochs
as illustrated in Figure~\ref{fig:understanding_learning_behaviors_resnet8_cifar10_kt_different_non_iid_degrees}.

All competing methods have strong difficulties with increasing data heterogeneity (non-\iid data, i.e. smaller~$\alpha$),
while \algopt shows significantly improved robustness to data heterogeneity.
In most scenarios in Table~\ref{tab:cifar10_resnet8_detailed_comparison}, the reduction of $\alpha$ from $1$ to $0.1$ almost triples
the number of communication rounds for \fedavg, \fedprox and \fedavgM to reach
target accuracies, whereas less than twice the number of rounds are sufficient for \algopt.

Increasing the sampling ratio makes a more noticeable positive impact on \algopt compared to other methods.
We attribute this to the fact that an ensemble tends to improve in robustness and quality,
with a larger number of reasonable good participants, and hence results in better model fusion.
Nevertheless, even in cases with a very low sampling fraction (i.e.\ $ \clientfrac \!=\! 0.2 $),
\algopt still maintains a considerable leading margin over the closest competitor.

\paragraph{Comments on Batch Normalization.}
Batch Normalization (BN)~\cite{ioffe2015batch} is the current workhorse in convolutional deep learning tasks
and has been employed by default in most SOTA CNNs~\cite{he2016deep,huang2017densely,ma2018shufflenet,tan2019efficientnet}.
However, it often fails on heterogeneous training data.
Hsieh \textit{et al}.~\cite{hsieh2019non} recently examined the non-\iid data `quagmire' for distributed learning
and point out that replacing BN by Group Normalization (GN)~\cite{wu2018group}
can alleviate some of the quality loss brought by BN due to the discrepancies between local data distributions.

As shown in Table~\ref{tab:resnet8_cifar10_impact_of_normalization},
despite additional effort on architecture modification and hyperparameter tuning (i.e.\ the number of groups in GN),
baseline methods with GN replacement still lag much behind \algopt.
\algopt provides better model fusion which is robust to non-\iid  data, and is compatible with BN, thus
avoids extra efforts for modifying the standard SOTA neural architectures.
Figure~\ref{fig:resnet8_cifar10_impact_of_normalization} in Appendix~\ref{appendix:empirical_understanding_fedavg} shows the complete learning curves.

\begin{table*}[!t]
	\centering
	\caption{\small
		\textbf{The impact of normalization techniques} (i.e.\ BN, GN)
		for ResNet-8 on CIFAR
		($20$ clients with $\clientfrac \!=\! 0.4$, $100$ communication rounds,
		and $40$ local epochs per round).
		We use a constant learning rate and tune other hyperparameters.
		The distillation dataset of~\algopt for CIFAR-100 is ImageNet (with image size of $32$).
	}
	\vspace{-0.5em}
	\resizebox{.9\textwidth}{!}{%
		\begin{tabular}{llccccc}
			\toprule
			& 						 & \multicolumn{5}{c}{Top-1 test accuracy of different methods}            \\ \cmidrule{2-7}
			Datasets                   &                    & \fedavg, w/ BN   & \fedavg, w/ GN   & \fedprox, w/ GN  & \fedavgM, w/ GN  & \textbf{\algopt}, w/ BN   \\ \midrule
			\multirow{2}{*}{CIFAR-10}  & $\alpha \!=\! 1$   & $76.01 \pm 1.53$ & $78.57 \pm 0.22$ & $76.32 \pm 1.98$ & $77.79 \pm 1.22$ & $\textbf{80.69} \pm 0.43$ \\
			                           & $\alpha \!=\! 0.1$ & $62.22 \pm 3.88$ & $68.37 \pm 0.50$ & $68.65 \pm 0.77$ & $68.63 \pm 0.79$ & $\textbf{71.36} \pm 1.07$ \\ \midrule
			\multirow{2}{*}{CIFAR-100} & $\alpha \!=\! 1$   & $35.56 \pm 1.99$ & $42.54 \pm 0.51$ & $42.94 \pm 1.23$ & $42.83 \pm 0.36$ & $\textbf{47.43} \pm 0.45$ \\
			                           & $\alpha \!=\! 0.1$ & $29.14 \pm 1.91$ & $36.72 \pm 1.50$ & $35.74 \pm 1.00$ & $36.29 \pm 1.98$ & $\textbf{39.33} \pm 0.03$ \\
			\bottomrule
		\end{tabular}%
	}
	\label{tab:resnet8_cifar10_impact_of_normalization}
	\vspace{-1em}
\end{table*}

\begin{table}[!t]
	\caption{\small
		\textbf{Top-1 test accuracy of \fl CIFAR-10 on VGG-9 (w/o BN)},
		for $20$ clients with $\clientfrac \!=\! 0.4$, $\alpha \!=\! 1$ and $100$ communication rounds ($40$ epochs per round).
		We by default drop dummy predictors.
	}
	\label{tab:cifar10_vgg11_test_accuracy}
	\centering
	\resizebox{.9\textwidth}{!}{%
		\begin{threeparttable}
			\begin{tabular}{lccccc}
				\toprule
				& \multicolumn{4}{c}{\centering Top-1 test accuracy @ communication round} &     \\ \cmidrule{2-6}
				Methods                           & $5$                       & $10$                      & $20$                      & $50$                      & $100$                     \\ \bottomrule
				\fedavg (w/o drop-worst)          & $45.72 \pm 30.95$         & $51.06 \pm 35.56$         & $53.22 \pm 37.43$         & $29.60 \pm 40.66$         & $7.52 \pm 4.29$           \\
				\fedma (w/o drop-worst) \tnote{1} & $23.41 \pm 0.00$          & $27.55 \pm 0.10$          & $41.56 \pm 0.08$          & $60.35 \pm 0.03$          & $65.0 \pm 0.02$           \\
				\fedavg                           & $64.77 \pm 1.24$          & $70.28 \pm 1.02$          & $75.80 \pm 1.36$          & $77.98 \pm 1.81$          & $78.34 \pm 1.42$          \\
				\fedprox                          & $63.86 \pm 1.55$          & $71.85 \pm 0.75$          & $75.57 \pm 1.16$          & $77.85 \pm 1.96$          & $78.60 \pm 1.91$          \\
				\textbf{\algopt}                  & $\textbf{66.08} \pm 4.14$ & $\textbf{72.80} \pm 1.59$ & $\textbf{75.82} \pm 2.09$ & $\textbf{79.05} \pm 0.54$ & $\textbf{80.36} \pm 0.63$ \\
				\bottomrule
			\end{tabular}%
			\begin{small}
				\begin{tablenotes}
					\item[1] \fedma does not support drop-worst operation
					due to its layer-wise communication/fusion scheme.
					The number of local training epochs per layer is $5$ ($45$ epochs per model)
					thus results in stabilized training.
					More details can be found in Appendix~\ref{appendix:detailed_exp_setup}.
				\end{tablenotes}
			\end{small}
		\end{threeparttable}
	}
	\vspace{-0.5em}
\end{table}

We additionally evaluate architectures originally designed without BN (i.e. VGG),
to demonstrate the broad applicability of \algopt.
Due to the lack of normalization layers, VGG is vulnerable to non-i.i.d. local distributions.
We observe that received models on the server might
output random prediction results on the validation/test dataset
and hence give rise to uninformative results overwhelmed by large variance
(as shown in Table~\ref{tab:cifar10_vgg11_test_accuracy}).
We address this issue by a simple treatment\footnote{
	Techniques (e.g. Krum, Bulyan),
	can be adapted to further improve the robustness or defend against attacks.
}, ``drop-worst'',
i.e., dropping learners
with random predictions on the server validation dataset
(e.g.\ $10\%$ accuracy for CIFAR-10),
in each round before applying model averaging and/or ensemble distillation.
Table~\ref{tab:cifar10_vgg11_test_accuracy} examines the FL methods (\fedavg, \fedprox, \fedma
and \algopt) on VGG-9;
\algopt consistently outperforms other methods by a large margin for different communication rounds.

\paragraph{Extension to NLP tasks for \ft of DistilBERT.}
Fine-tuning a pre-trained transformer language model like BERT~\cite{devlin2018bert}
yields SOTA results on various NLP benchmarks~\cite{wang2018glue, wang2019superglue}.
DistilBERT~\cite{sanh2019distilbert} is a lighter version of BERT with only marginal quality loss on downstream tasks.
As a proof of concept,
in Figure~\ref{fig:distilbert_results}
we consider \ft of DistilBERT on non-\iid local data ($\alpha \!=\! 1$, depicted in Figure~\ref{fig:nlp_partition}).
For both AG News and SST2 datasets, \algopt achieves significantly faster convergence than \fedavg
and consistently outperforms the latter.

\begin{figure*}[!t]
	\vspace{-0.5em}
	\centering
	\subfigure[\small
		AG News.
	]{
		\includegraphics[width=.45\textwidth,]{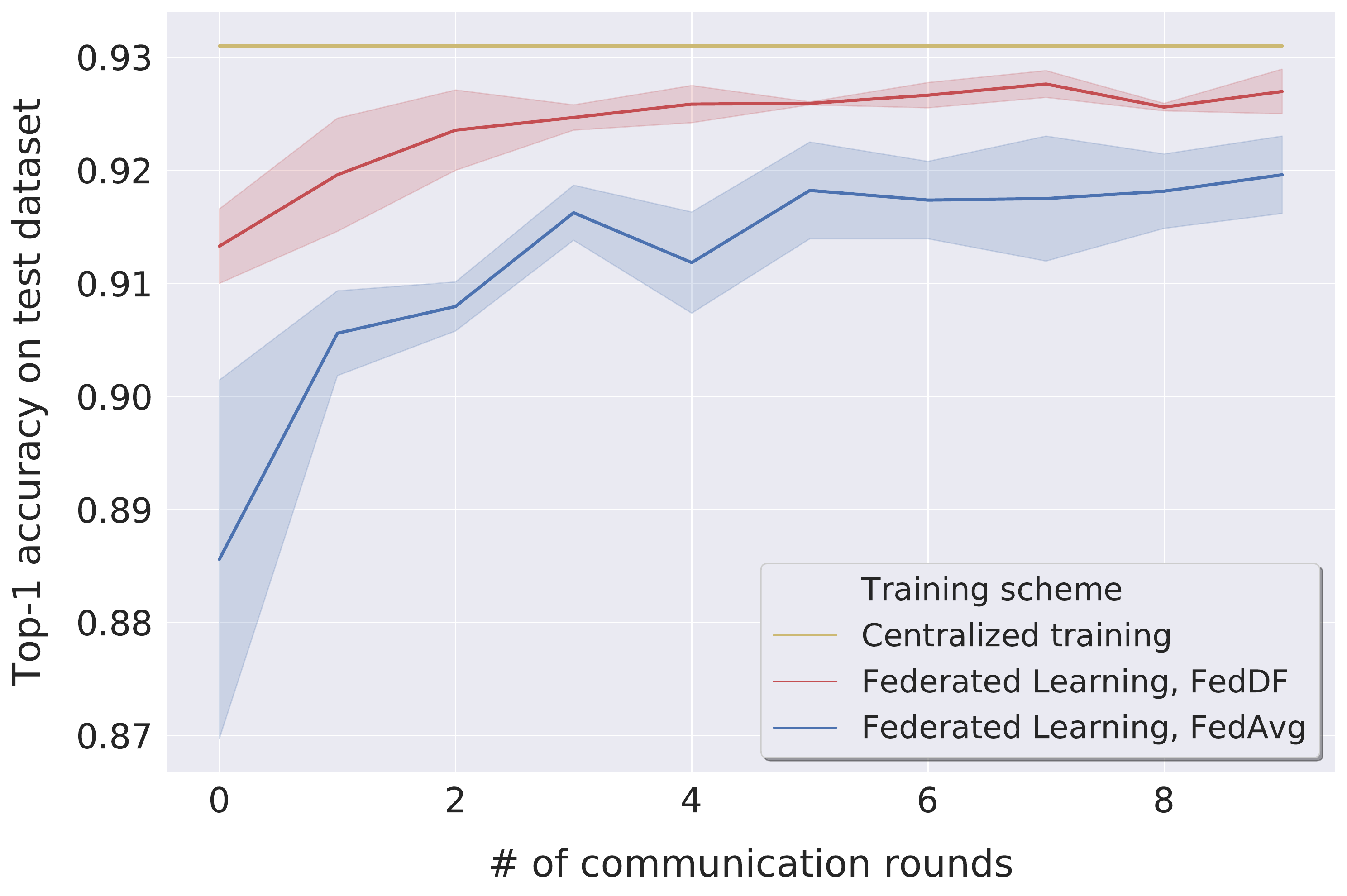}
		\label{fig:agnews_distilbert}
	}
	\hfill
	\subfigure[\small
		SST2.
	]{
		\includegraphics[width=.45\textwidth,]{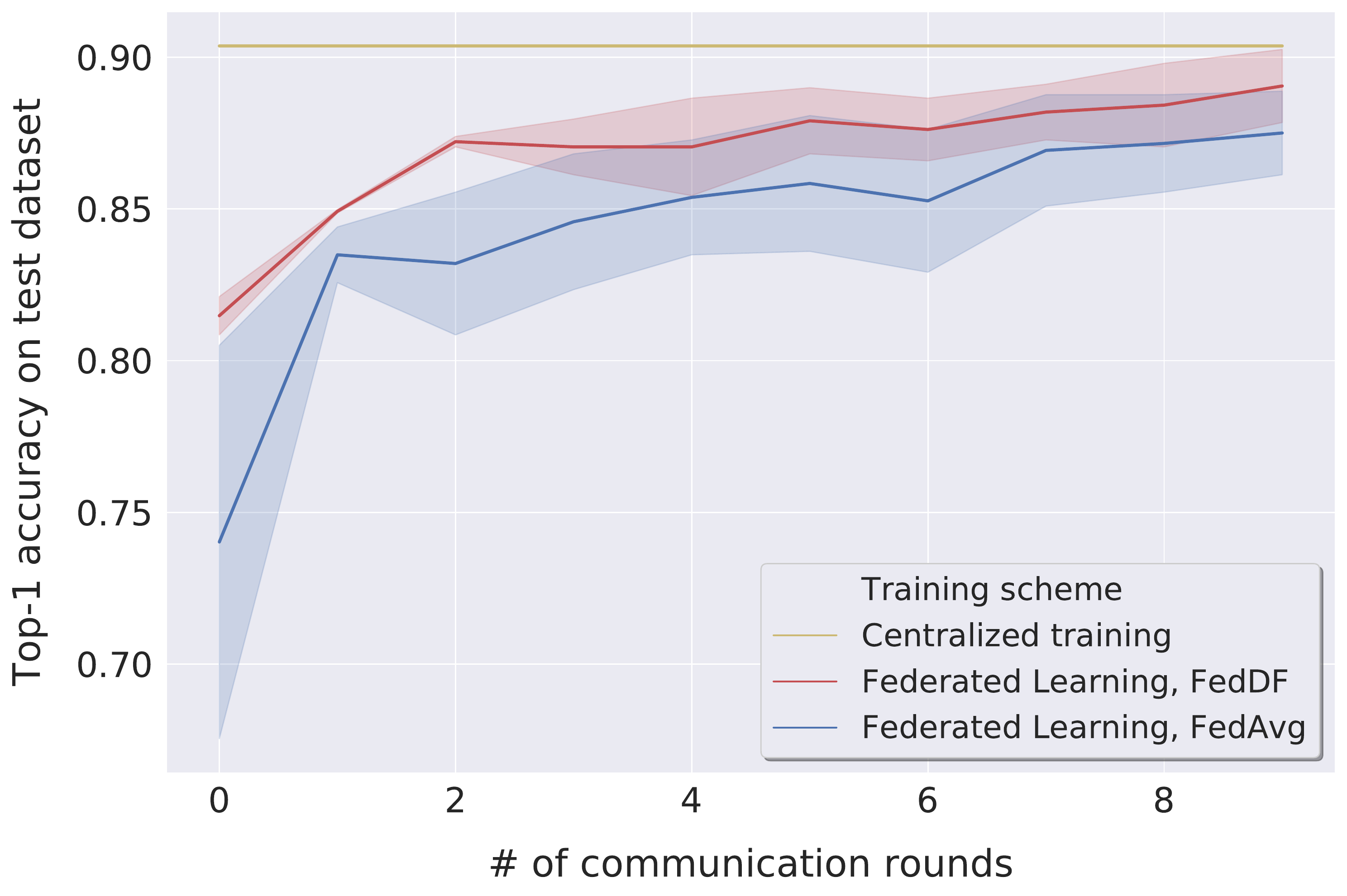}
		\label{fig:sst2_distilbert}
	}
	\vspace{-1em}
	\caption{\small
		\textbf{\Ft DistilBERT} on (a) AG News and (b) SST-2.
		For simplicity, we consider 10 clients with $C\!=\!100\%$ participation ratio and $\alpha \!=\! 1$;
		the number of local training epochs per communication round ($10$ rounds in total) is set to $10$ and $1$ respectively.
		The $50\%$ of the original training dataset is used for the \ft (for all methods)
		and the left $50\%$ is used as the unlabeled distillation dataset for \algopt.
	}
	\vspace{-1em}
	\label{fig:distilbert_results}
\end{figure*}

\subsection{Case Studies} \label{subsec:case_study}
\paragraph{\Fl for low-bit quantized models.}
FL for the Internet of Things (IoT) involves edge devices with diverse hardware, e.g.\ different computational capacities.
Network quantization is hence of great interest to FL by representing the activations/weights in low precision,
with benefits of significantly reduced local computational footprints and communication costs.
Table~\ref{tab:binary_net} examines the model fusion performance for binarized ResNet-8~\cite{rastegari2016xnor,hubara2017quantized}.
\algopt can be on par with or outperform \fedavg by a noticeable margin,
without introducing extra GN tuning overheads.

\begin{table}[!t]
	\caption{\small
		\textbf{Federated learning with low-precision models (1-bit binarized ResNet-8) on CIFAR-10}.
		For each communication round ($100$ in total),
		$40\%$ of the total $20$ clients ($\alpha \!=\! 1$) are randomly selected.
	}
	\label{tab:binary_net}
	\centering
	\resizebox{.8\textwidth}{!}{%
		\begin{tabular}{cccc}
			\toprule
			Local Epochs & ResNet-8-BN (\fedavg) & ResNet-8-GN (\fedavg)     & ResNet-8-BN (\textbf{\algopt}) \\ \midrule
			20           & $44.38 \pm 1.21$      & $\textbf{59.70} \pm 1.65$ & $59.49 \pm 0.98$               \\
			40           & $43.91 \pm 3.26$      & $64.25 \pm 1.31$          & $\textbf{65.49} \pm 0.74$      \\
			80           & $47.62 \pm 1.84$      & $65.99 \pm 1.29$          & $\textbf{70.27} \pm 1.22$      \\ \bottomrule
		\end{tabular}%
	}
	\vspace{-1em}
\end{table}

\paragraph{\Fl on heterogeneous systems.}
Apart from non-\iid local distributions,
another major source of heterogeneity in FL systems manifests in neural architectures~\cite{li2019fedmd}.
Figure~\ref{fig:heterogeneous_system} visualizes the training dynamics of \algopt and \fedavg\footnote{
	Model averaging is only performed among models with identical structures.
}
in a heterogeneous system with three distinct architectures,
i.e., ResNet-20, ResNet-32, and ShuffleNetV2.
On CIFAR-10/100 and ImageNet,
\algopt dominates \fedavg on test accuracy in each communication round with much less variance.
Each fused model exhibits marginal quality loss compared to the ensemble performance,
which suggests unlabeled datasets from other domains are sufficient for model fusion.
Besides, the gap between the fused model and the ensemble one widens
when the training dataset contains a much larger number of classes\footnote{
	\# of classes is a proxy measurement for distribution shift;
	labels are not used in our distillation procedure.
}
than that of the distillation dataset.
For instance, the performance gap is negligible on CIFAR-10,
whereas on ImageNet, the gap increases to around $6 \%$.
In Section~\ref{subsec:understanding},
we study this underlying interaction between training data and unlabeled distillation data in detail.

\begin{figure*}[!t]
	\centering
	\subfigure[\small
		CIFAR-10.
	]{
		\includegraphics[width=0.31\textwidth,]{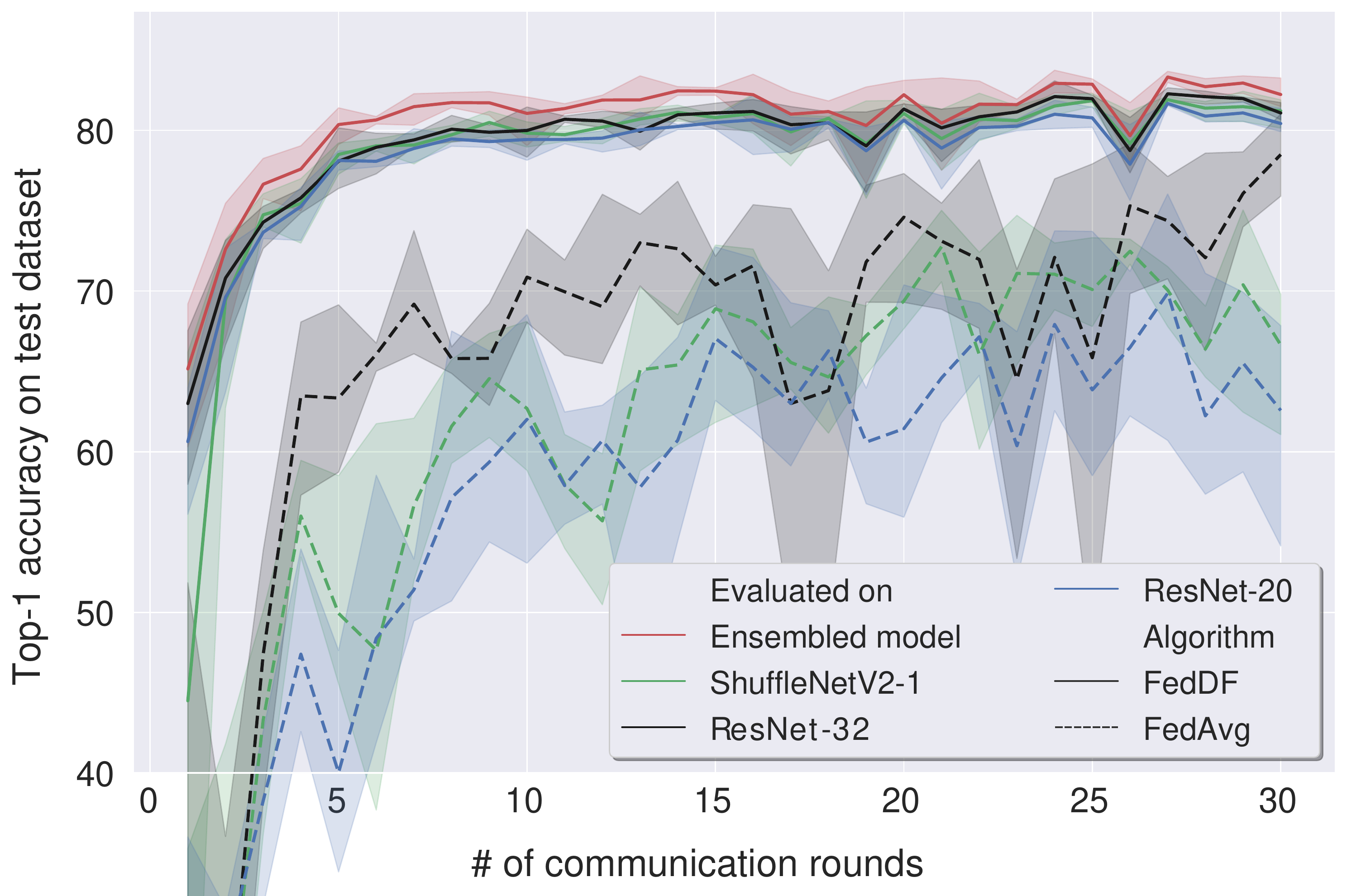}
		\label{fig:cifar10_heterogeneous_via_cifar100_feddf_vs_fedavg}
	}
	\hfill
	\subfigure[\small
		CIFAR-100.
	]{
		\includegraphics[width=0.31\textwidth,]{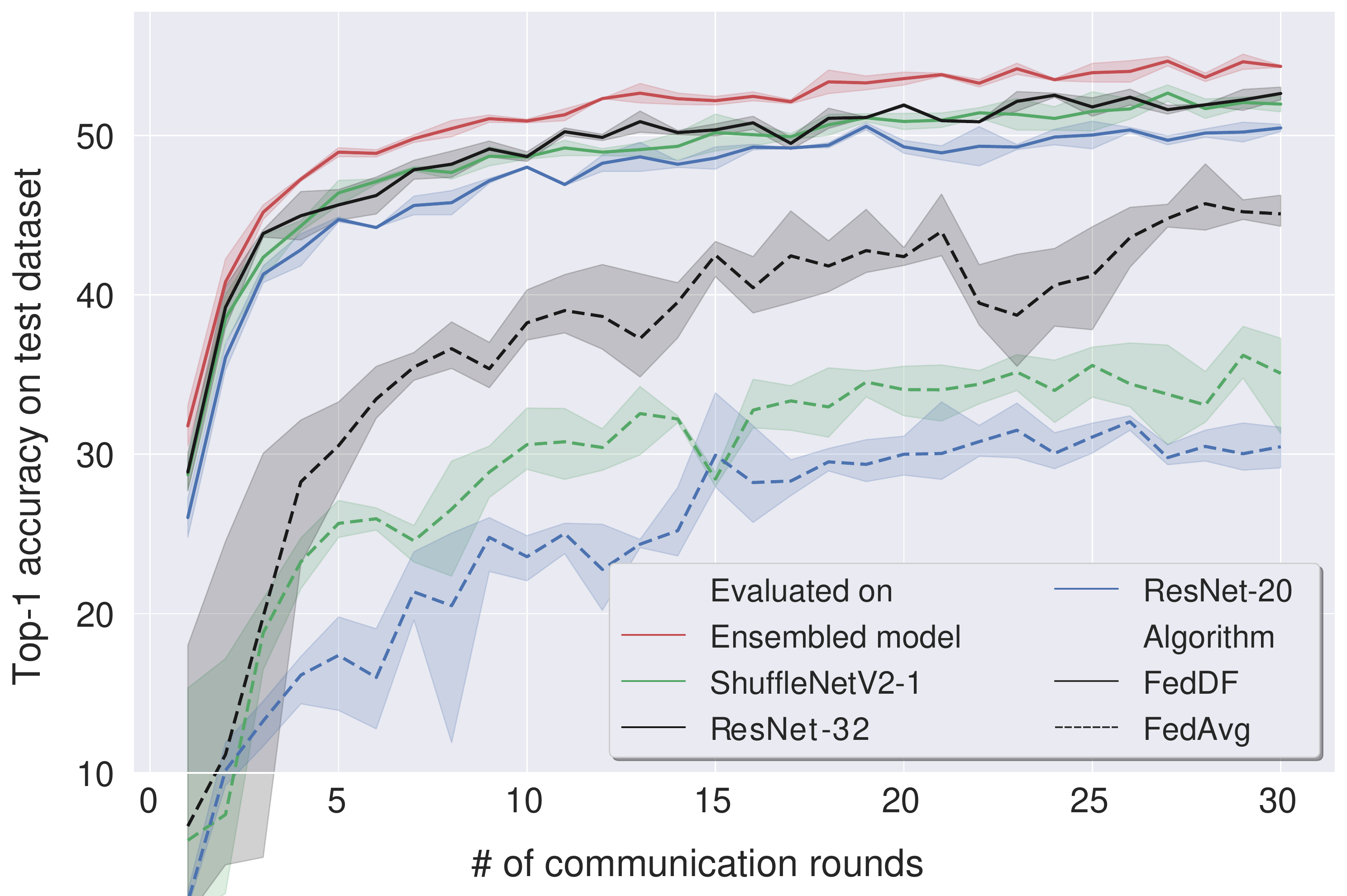}
		\label{fig:cifar100_heterogeneous_via_stl10_feddf_vs_fedavg}
	}
	\hfill
	\subfigure[\small
		ImageNet (image resolution $32$).
	]{
		\includegraphics[width=0.31\textwidth,]{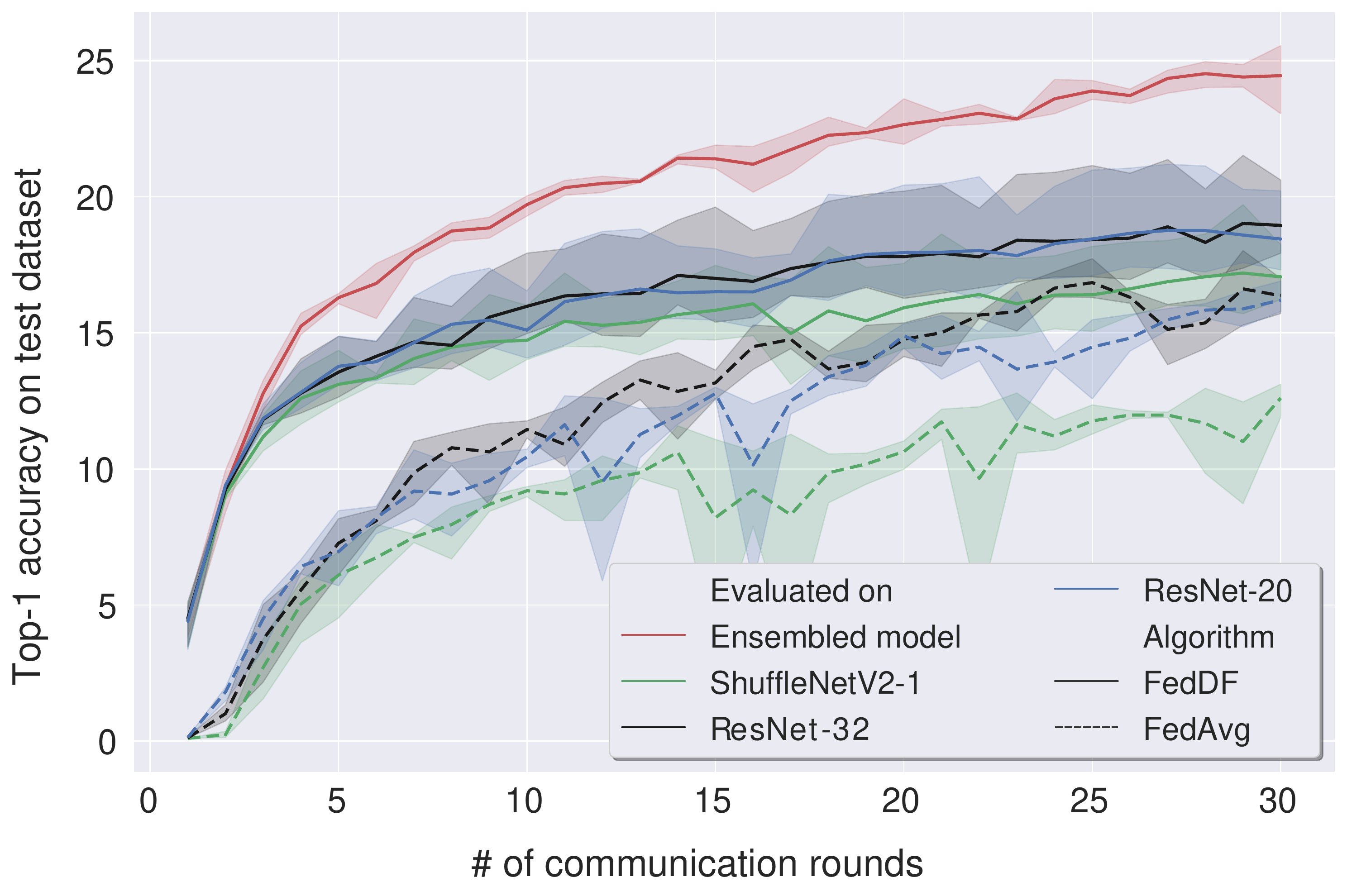}
		\label{fig:imagenet32_heterogeneous_via_cifar100_feddf_vs_fedavg}
	}
	\vspace{-1em}
	\caption{\small
		\textbf{\Fl on heterogeneous systems (model/data)},
		with three neural architectures (ResNet-20, ResNet-32, ShuffleNetV2)
		and non-\iid local data distribution ($\alpha \!=\! 1$).
		We consider $21$ clients for CIFAR (client sampling ratio $\clientfrac \!=\! 0.4$) and $150$ clients for ImageNet ($\clientfrac \!=\! 0.1$);
		different neural architectures are evenly distributed among clients.
		We train $80$ local training epochs per communication round (total $30$ rounds).
		CIFAR-100, STL-10, and STL-10 are used as the distillation datasets for CIFAR-10/100 and ImageNet training respectively.
		The \emph{solid} lines show the results of \algopt for a given communication round,
		while \emph{dashed} lines correspond to that of \fedavg;
		\emph{colors} indicate model architectures.
	}
	\vspace{-1em}
	\label{fig:heterogeneous_system}
\end{figure*}

\section{Understanding \algopt}\label{subsec:understanding}%
\algopt consists of two chief components: ensembling and knowledge distillation via out-of-domain data.
In this section, we first investigate
what affects the ensemble performance on the global distribution (test domain) through a generalization bound.
We then provide empirical understanding of how different attributes of the out-of-domain distillation dataset
affect the student performance on the global distribution.
\begin{figure*}[!ht]
	\centering
	\subfigure[\small
		CIFAR-10.
	]{
		\includegraphics[width=0.31\textwidth,]{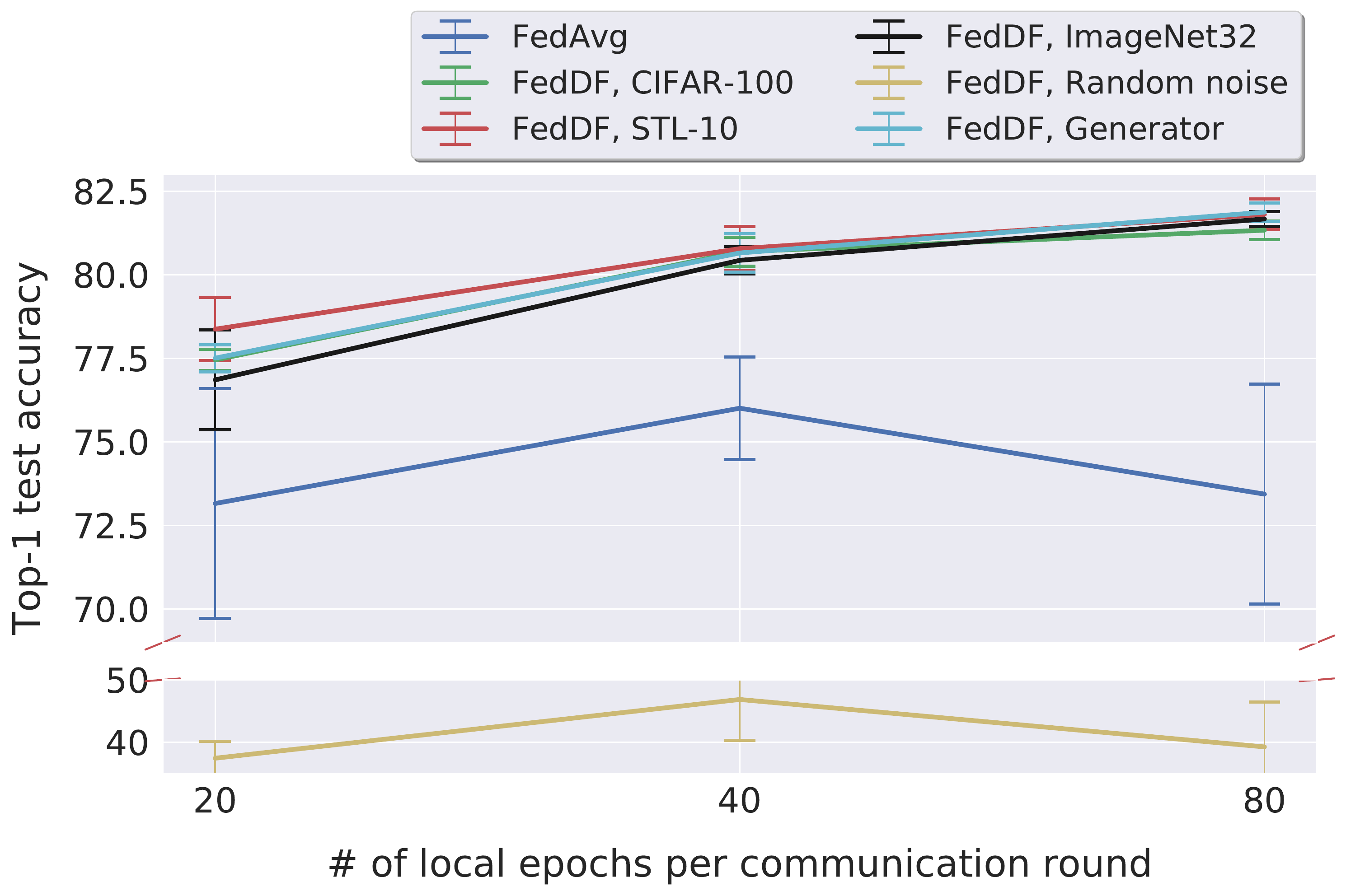}
		\label{fig:resnet8_cifar10_non_iid_1_different_unlabeled_data}
	}
	\hfill
	\subfigure[\small
		CIFAR-10 ($40$ local epochs).
	]{
		\includegraphics[width=0.31\textwidth,]{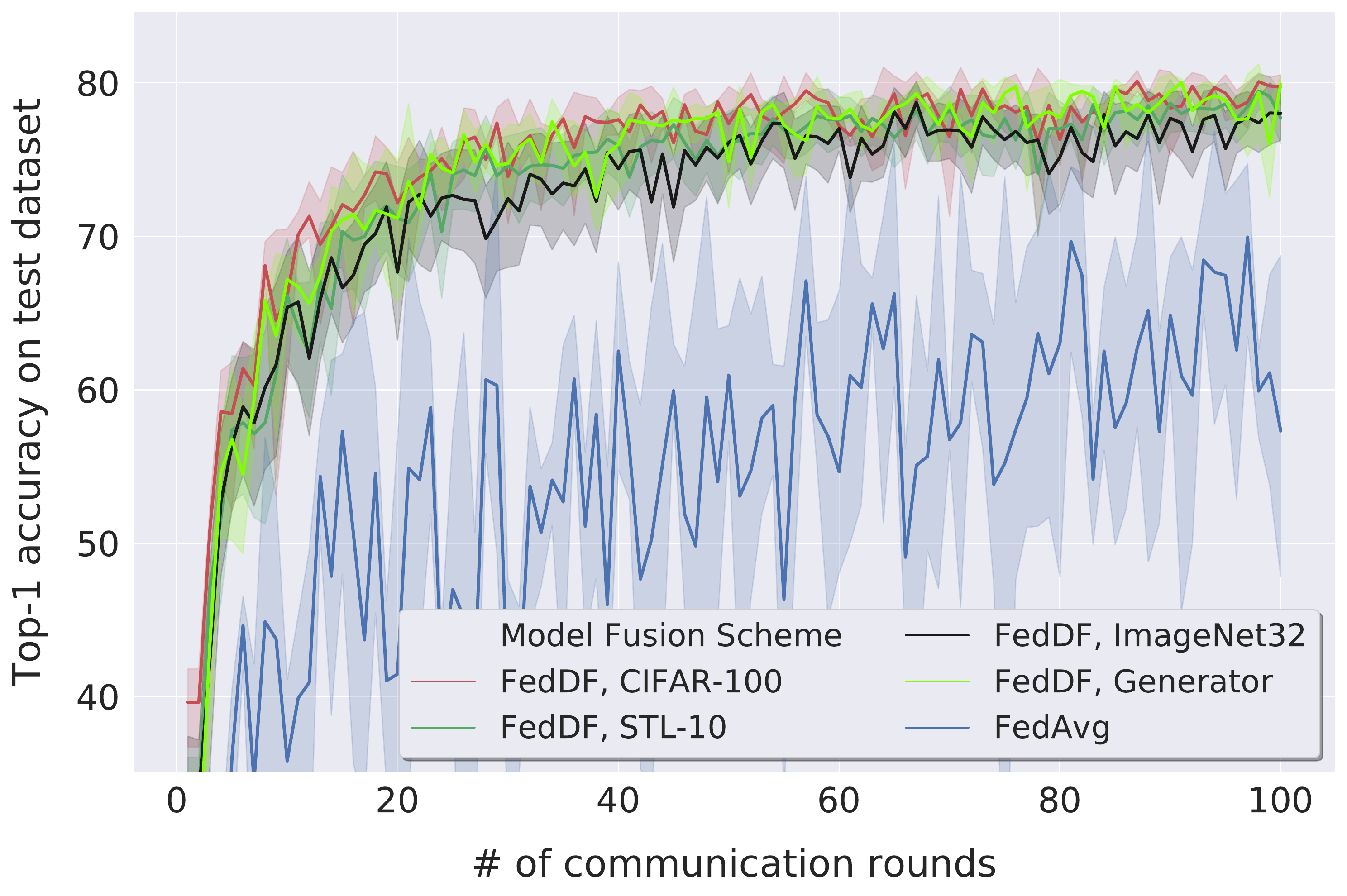}
		\label{fig:resnet8_cifar10_localdata_non_iid_1_different_datasets_40epochs.pdf}
	}
	\hfill
	\subfigure[\small
		CIFAR-100 ($40$ local epochs).
	]{
		\includegraphics[width=0.31\textwidth,]{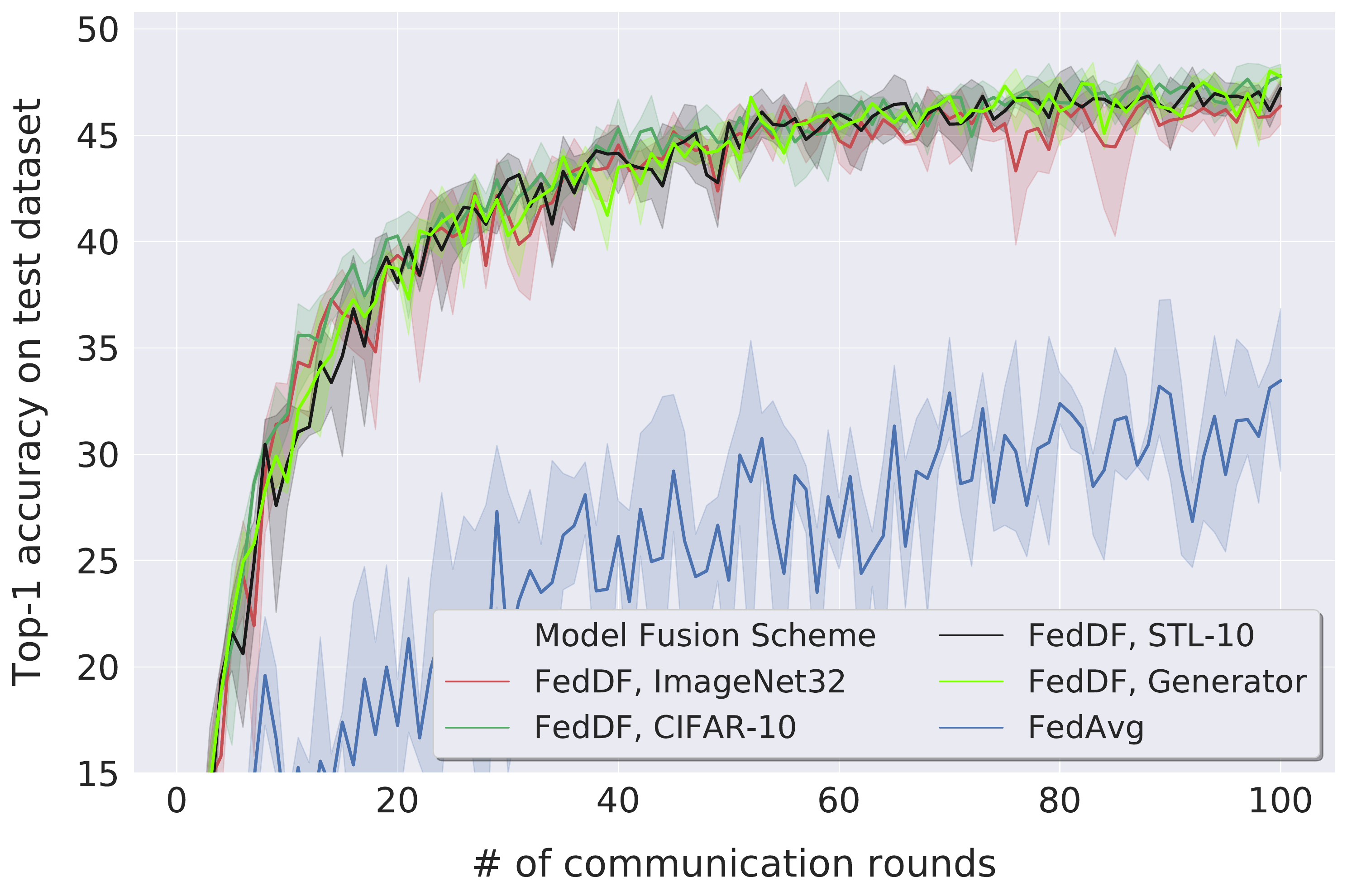}
		\label{fig:resnet8_cifar100_localdata_non_iid_1_different_datasets_40epochs}
	}
	\vspace{-1em}
	\caption{\small
		\textbf{The performance of \algopt on different distillation datasets}:
		random uniformly sampled noises, randomly generated images (from the generator),
		CIFAR, downsampled ImageNet32, and downsampled STL-10.
		We evaluate ResNet-8 on CIFAR for $20$ clients,
		with $\clientfrac \!=\! 0.4$, $\alpha \!=\! 1$ and $100$ communication rounds.
	}
	\vspace{-1em}
	\label{fig:understanding_input_domain_drift1}
\end{figure*}

\begin{figure*}[!t]
	\centering
	\subfigure[\small
		The fusion performance of \algopt through unlabeled ImageNet,
		for different numbers of classes.
	]{
		\includegraphics[width=0.31\textwidth,]{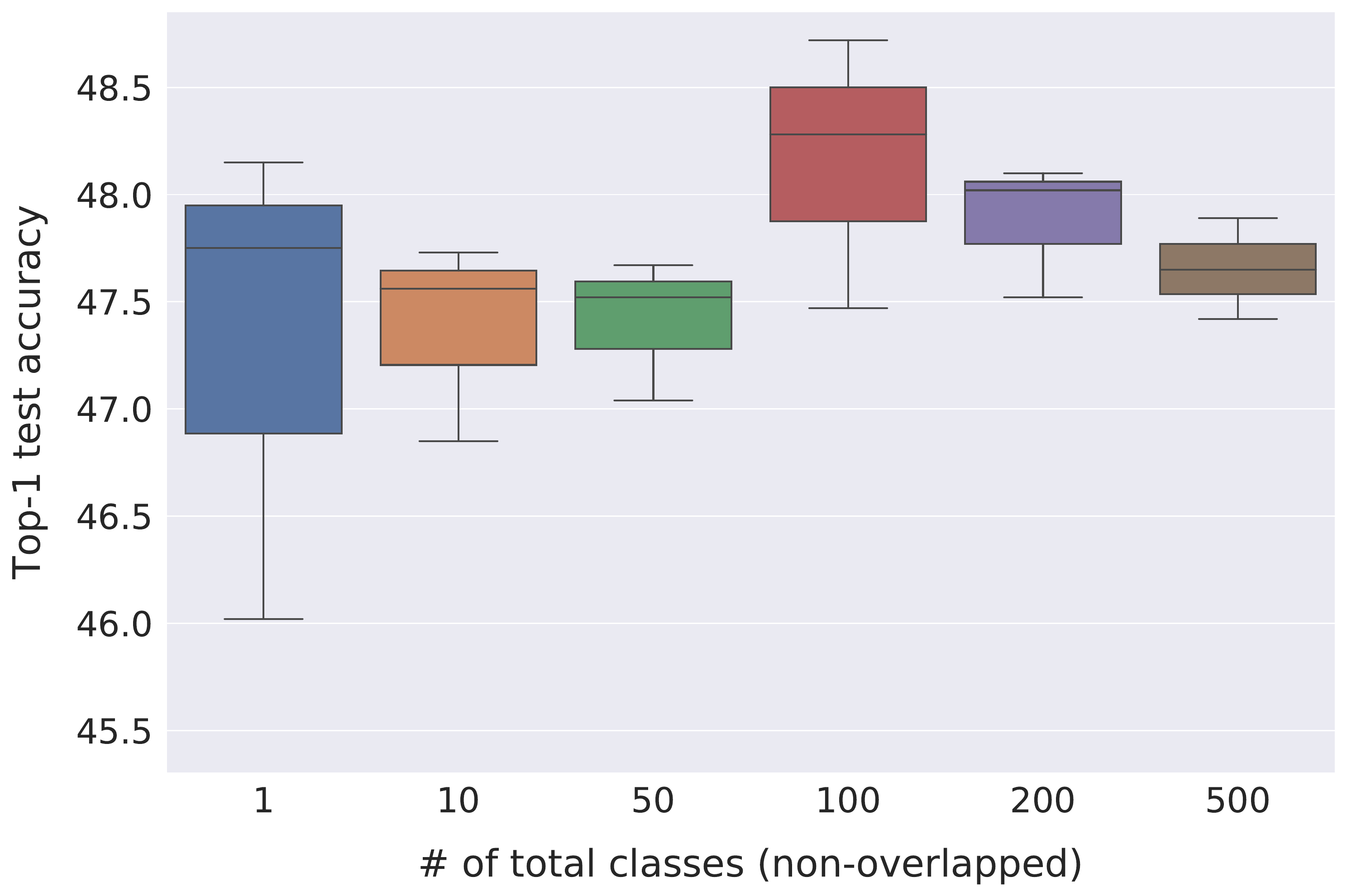}
		\label{fig:resnet8_cifar100_impact_of_num_classes}
	}
	\hfill
	\subfigure[\small
		The performance of \algopt via unlabeled ImageNet
		($100$ classes),
		for different data fractions.
	]{
		\includegraphics[width=0.31\textwidth,]{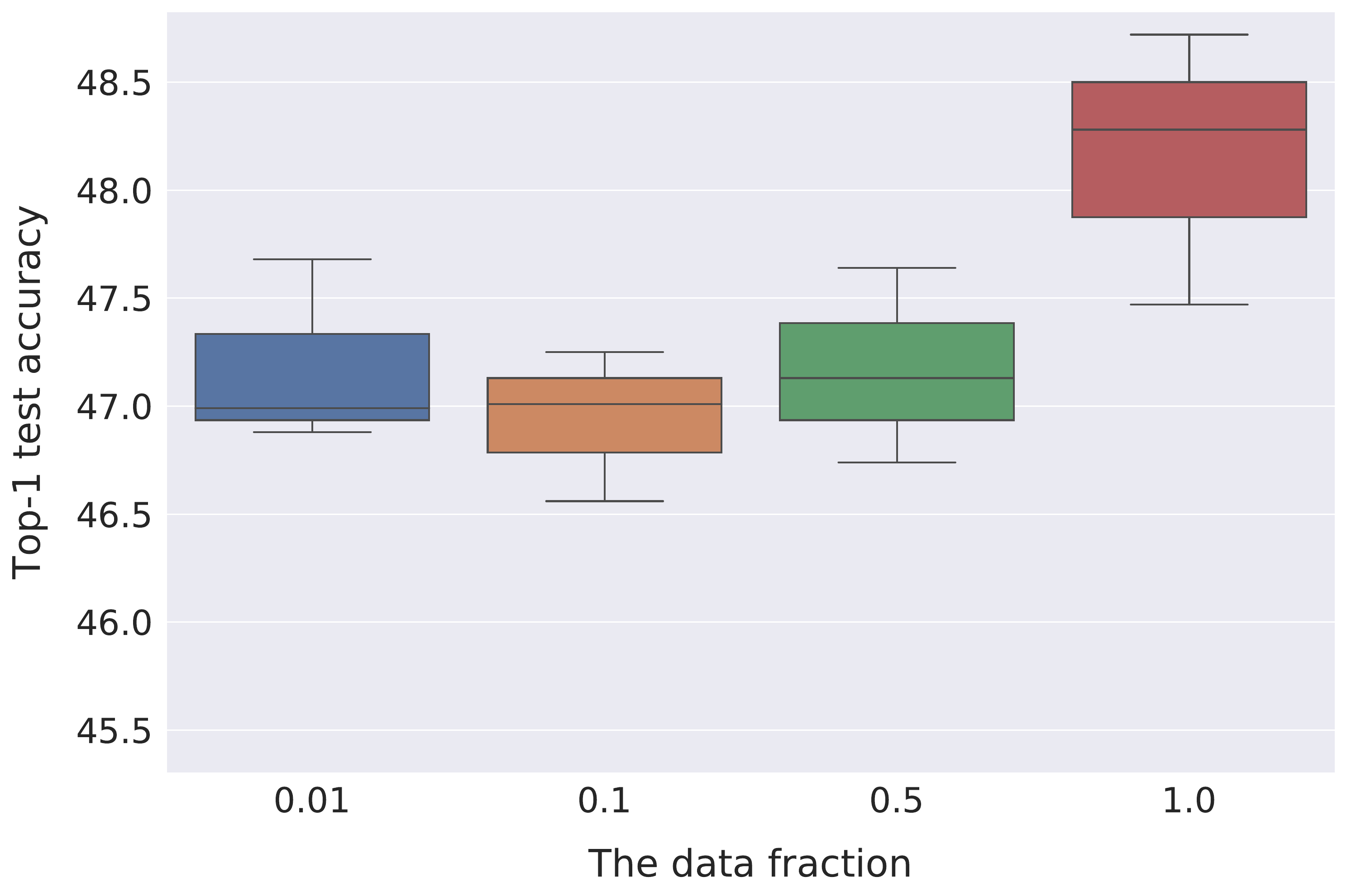}
		\label{fig:resnet8_cifar100_impact_of_data_fraction_for_100}
	}
	\hfill
	\subfigure[\small
		The fusion performance of \algopt under different numbers of distillation steps.
	]{
		\includegraphics[width=0.31\textwidth,]{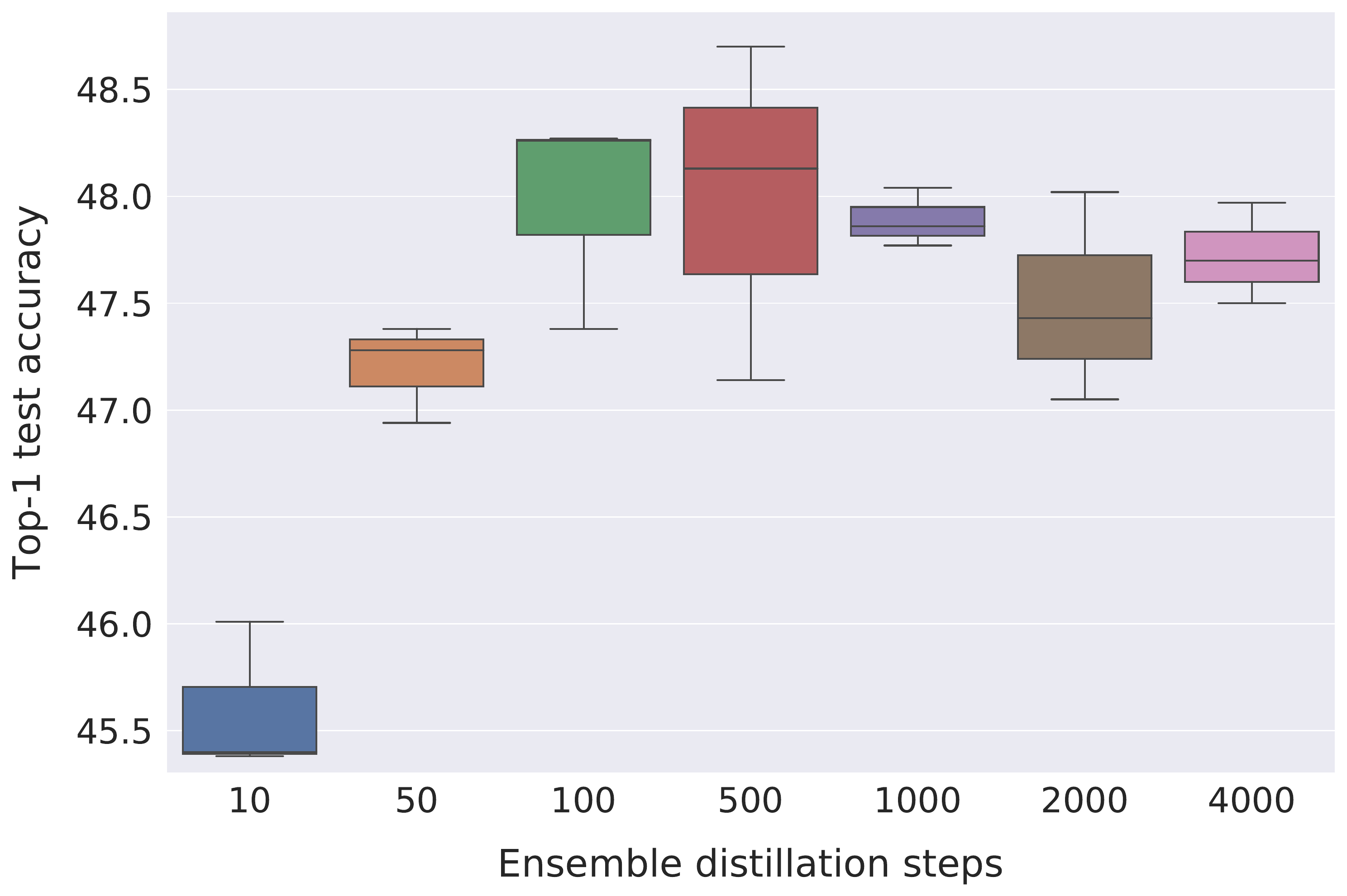}
		\label{fig:resnet8_cifar100_impact_of_distillation_steps}
	}
	\vspace{-1em}
	\caption{\small
		\textbf{Understanding knowledge distillation behaviors of \algopt} on
		\textbf{\# of classes} (\ref{fig:resnet8_cifar100_impact_of_num_classes}),
		\textbf{sizes of the distillation dataset} (\ref{fig:resnet8_cifar100_impact_of_data_fraction_for_100}),
		and \textbf{\# of distillation steps} (\ref{fig:resnet8_cifar100_impact_of_distillation_steps}),
		for \fl ResNet-8 on CIFAR-100,
		with $\clientfrac \!=\! 0.4$, $\alpha \!=\! 1$ and $100$ communication rounds ($40$ local epochs per round).
		ImageNet with image resolution $32$ is considered as our base unlabeled dataset.
		For simplicity, only classes without overlap with CIFAR-100 classes are considered,
		in terms of the synonyms, hyponyms, or hypernyms of the class name.
	}
	\vspace{-1em}
	\label{fig:understanding_input_domain_drift2}
\end{figure*}

\paragraph{Generalization bound.}
Theorem~\ref{thm:informal_risk_upper_bound_for_ensemble_model}
provides insights into ensemble performance on the global distribution.
Detailed description and derivations are deferred to Appendix~\ref{sec:detailed_bounds}.
\begin{theorem}[informal] \label{thm:informal_risk_upper_bound_for_ensemble_model}
	We denote the global distribution as $\cD$, the $k$-th local distribution
	and its empirical distribution as $\cD_k$ and $\hat{\cD}_k$ respectively.
	The hypothesis $h \in \cH$ learned on $\hat{\cD}_k$ is denoted by $\smash{h_{\hat{\cD}_k}}$.
	The upper bound on the risk of the ensemble of $K$ local models on $\cD$
	mainly consists of 1)
	the empirical risk of a model trained on the global empirical distribution $\hat{\cD} = \frac{1}{K} \sum_k \hat{\cD}_k$,
	and 2) terms dependent on the distribution discrepancy between $\cD_k$ and $\cD$,
	with the probability $1 - \delta$:
	\begin{small}
		\begin{align*}
			\textstyle
			L_\cD \Big( \frac{1}{K} \sum_k h_{\hat{\cD}_k} \Big)
			  & \leq
			L_{\hat{\cD}} ( h_{\hat{\cD}} )
			+ \frac{1}{K} \sum_{k} \left( \frac{1}{2} d_{\cH \Delta \cH} (\cD_k, \cD) + \lambda_k \right)
			+ \frac{ 4 + \sqrt{ \log ( \tau_{\cH} (2m) ) } }{ \delta / K \sqrt{ 2m } }
			\,,
		\end{align*}
	\end{small}%
	where $d_{\cH \Delta \cH}$ measures the distribution discrepancy between two distributions~\cite{ben2010theory},
	$m$ is the number of samples per local distribution,
	$\lambda_k$ is the minimum of the combined loss $\cL_{\cD} (h) \!+\! \cL_{\cD_k}(h), \forall h \in \cH$,
	and $\tau_{\cH}$ is the growth function bounded by a polynomial of the VCdim of $\cH$.
\end{theorem}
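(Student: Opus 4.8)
The plan is to bound the ensemble risk by chaining three standard ingredients -- Jensen's inequality, the domain-adaptation bound of Ben-David \textit{et al}.~\cite{ben2010theory} applied one client at a time, and a VC-type uniform-convergence estimate -- and then exploiting that, since every client holds $m$ samples, the global empirical distribution is \emph{exactly} the uniform mixture $\hat{\cD} = \frac{1}{K}\sum_k \hat{\cD}_k$ of the local ones. First, assuming $\loss(\cdot,y)$ is convex in its prediction argument, Jensen's inequality gives $L_\cD\!\big(\frac{1}{K}\sum_k h_{\hat{\cD}_k}\big) \le \frac{1}{K}\sum_k L_\cD(h_{\hat{\cD}_k})$, so it suffices to control each $L_\cD(h_{\hat{\cD}_k})$ and average.

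For a fixed client $k$, I would first pass from the global distribution $\cD$ to the local one $\cD_k$ via~\cite{ben2010theory}: $L_\cD(h_{\hat{\cD}_k}) \le L_{\cD_k}(h_{\hat{\cD}_k}) + \tfrac12 d_{\cH\Delta\cH}(\cD_k,\cD) + \lambda_k$, with $\lambda_k = \min_{h\in\cH}\big(L_\cD(h) + L_{\cD_k}(h)\big)$ the error of the ideal joint hypothesis. Next, pass from population to empirical risk on client $k$: symmetrization together with the growth-function bound gives $\mathbb{E}\big[\sup_{h\in\cH} |L_{\cD_k}(h) - L_{\hat{\cD}_k}(h)|\big] \le \frac{4 + \sqrt{\log(\tau_\cH(2m))}}{\sqrt{2m}}$, and Markov's inequality converts this into: with probability at least $1-\delta/K$, $L_{\cD_k}(h_{\hat{\cD}_k}) \le L_{\hat{\cD}_k}(h_{\hat{\cD}_k}) + \frac{4 + \sqrt{\log(\tau_\cH(2m))}}{(\delta/K)\sqrt{2m}}$. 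Finally, since $h_{\hat{\cD}_k}$ is the empirical risk minimizer over $\cH$ on $\hat{\cD}_k$ and $h_{\hat{\cD}} \in \cH$, we have $L_{\hat{\cD}_k}(h_{\hat{\cD}_k}) \le L_{\hat{\cD}_k}(h_{\hat{\cD}})$; averaging over $k$ and using linearity of the empirical risk in the distribution together with $\hat{\cD} = \frac1K\sum_k \hat{\cD}_k$ gives $\frac1K\sum_k L_{\hat{\cD}_k}(h_{\hat{\cD}_k}) \le L_{\hat{\cD}}(h_{\hat{\cD}})$.

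To assemble, substitute the per-client chain into the Jensen bound, average over $k$, and union-bound over the $K$ uniform-convergence events -- each failing with probability at most $\delta/K$, hence total failure probability at most $\delta$. Collecting the surviving terms reproduces the three pieces of the claimed bound: $L_{\hat{\cD}}(h_{\hat{\cD}})$ from the ERM step, $\frac1K\sum_k\big(\tfrac12 d_{\cH\Delta\cH}(\cD_k,\cD) + \lambda_k\big)$ from the domain-adaptation step, and the complexity term, whose $\delta/K$ in the denominator is exactly the cost of the union bound (it does not shrink under the $\frac1K\sum_k$ average since it is identical for every $k$).

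I expect the main difficulty to be reconciling the convexity used in the Jensen step with the classical $\cH\Delta\cH$ theory, which is stated for the $0$-$1$ loss and binary-valued hypotheses -- making both steps rigorous simultaneously requires either replacing $0$-$1$ loss by a bounded convex surrogate and invoking a corresponding generalization of~\cite{ben2010theory}, or treating $\frac1K\sum_k h_{\hat{\cD}_k}$ as a (weighted) majority vote and bounding its $0$-$1$ risk by the average of the individual risks. This mismatch, together with the precise form of the constants in the uniform-convergence term, is presumably why the statement is labelled ``informal''; the rest of the argument is routine bookkeeping, with the one non-negotiable point being to allocate the failure budget as $\delta/K$ per client from the start so the union bound closes at $1-\delta$.
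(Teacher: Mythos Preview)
Your proposal is correct and follows essentially the same route as the paper's proof: Jensen's inequality via convexity of $\ell$, the Ben-David \textit{et al.}\ domain-adaptation bound applied per client, the growth-function uniform-convergence estimate with failure budget $\delta/K$ per client followed by a union bound, and the ERM comparison $L_{\hat{\cD}_k}(h_{\hat{\cD}_k}) \le L_{\hat{\cD}_k}(h_{\hat{\cD}})$ averaged using $\hat{\cD}=\frac{1}{K}\sum_k\hat{\cD}_k$. Your remark about the tension between the Jensen step and the binary-valued $\cH\Delta\cH$ framework is apt; the paper handles it exactly as you anticipate, by taking $\ell(\hat y,y)=|\hat y-y|$ (convex in $\hat y$) so that $\frac{1}{K}\sum_k h_{\hat{\cD}_k}$ is a $[0,1]$-valued predictor whose risk is well-defined even though it need not lie in $\cH$.
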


The ensemble of the local models sets the performance upper bound for the later distilled model
on the global distribution
as shown in Figure~\ref{fig:heterogeneous_system}.
Theorem~\ref{thm:informal_risk_upper_bound_for_ensemble_model} shows that
compared to a model trained on
the global empirical distribution
(ideal centralized case),
the performance of the ensemble on the global distribution
is associated with the discrepancy between local distributions $\cD_k$'s and the global distribution $\cD$.
Besides, the shift between the distillation and the global distribution
determines the knowledge transfer quality between these two distributions
and hence the test performance of the fused model.
In the following, we empirically examine how the choice of distillation data distributions
and the number of distillation steps
influence the quality of ensemble knowledge distillation.

\paragraph{Source, diversity and size of the distillation dataset.}
The fusion in \algopt demonstrates remarkable consistency across a wide range of realistic data sources
as shown in Figure~\ref{fig:understanding_input_domain_drift1},
although an abrupt performance declination is encountered when the distillation data are sampled from a dramatically different manifold
(e.g.\ random noise).
Notably, synthetic data from the generator of a pre-trained GAN does not incur noticeable quality loss,
opening up numerous possibilities for effective and efficient model fusion.
Figure~\ref{fig:resnet8_cifar100_impact_of_num_classes}
illustrates that
in general the diversity of the distillation data
does not significantly impact the performance of ensemble distillation,
though the optimal performance is achieved when two domains have a similar number of classes.
Figure~\ref{fig:resnet8_cifar100_impact_of_data_fraction_for_100}
shows the \algopt is not demanding on the distillation dataset size:
even $1\%$ of data ($\sim 48\%$ of the local training dataset)
can result in a reasonably good fusion performance.

\paragraph{Distillation steps.}
Figure~\ref{fig:resnet8_cifar100_impact_of_distillation_steps}
depicts the impact of distillation steps on fusion performance,
where \algopt with a moderate number of the distillation steps
is able to approach the optimal performance.
For example,
$100$ distillation steps in Figure~\ref{fig:resnet8_cifar100_impact_of_distillation_steps},
which corresponds to $5$ local epochs of CIFAR-100 (partitioned by 20 clients),
suffice to yield satisfactory performance.
Thus \algopt introduces minor time-wise expense.

\section*{Broader Impact} %
We believe that collaborative learning schemes such as federated learning are an important element towards enabling privacy-preserving training of ML models, as well as a better alignment of each individual's data ownership with the resulting utility from jointly trained machine learning models, especially in applications where data is user-provided and privacy sensitive~\cite{kairouz2019advances,nedic2020review}.

In addition to privacy, efficiency gains and lower resource requirements in distributed training reduce the environmental impact of training large machine learning models.
The introduction of a practical and reliable distillation technique for heterogeneous models and for low-resource clients is a step towards more broadly enabling collaborative privacy-preserving and efficient decentralized learning.

\section*{Acknowledgements}
We acknowledge funding from SNSF grant 200021\_175796, as well as a Google Focused Research Award.

\bibliography{paper}
\bibliographystyle{abbrv}

\clearpage
\appendix

\section{Detailed Related Work Discussion} \label{sec:detailed_related_work}
\paragraph{Prior work.}
We first comment on the two close approaches (FedMD and Cronus), in order to address
1) Distinctions between FedDF and prior work,
2) Privacy/Communication traffic concerns,
3) Omitted experiments on FedMD and Cronus.
\begin{itemize}[nosep,leftmargin=12pt]
	\item Distinctions between FedDF and prior work.
	      As discussed in the related work, most SOTA FL methods directly manipulate received model parameters
	      (e.g.\ FedAvg/FedAvgM/FedMA).
	      To our best knowledge, FedMD and Cronus are the only two that utilize logits information (of neural nets) for FL.
	      The distinctions from them are made below.
	\item {Different objectives and evaluation metrics}.
	      Cronus is designed for robust FL under poisoning attack,
	      whereas FedMD is for personalized FL.
	      In contrast, FedDF is intended for on-server model aggregation (evaluation on the aggregated model),
	      whereas neither FedMD nor Cronus aggregates the model on the server.
	\item Different Operations.
	      \begin{enumerate}[nosep,leftmargin=12pt]
		      \item FedDF, like FedAvg, \emph{only} exchanges models
		            between the server and clients,
		            without transmitting input data.
		            In contrast, FedMD and Cornus rely on exchanging public data logits.
		            As FedAvg,
		            FedDF can include privacy/security extensions
		            and has the same communication cost per round.
		      \item FedDF performs ensemble distillation with unlabeled data \emph{on the server}.
		            In contrast, FedMD/Cronus use averaged logits received from the server for \emph{local client training}.
	      \end{enumerate}

	\item Omitted experiments with FedMD/Cronus.
	      \begin{enumerate}[nosep,leftmargin=12pt]
		      \item FedMD requires to {locally pre-train on the \emph{labeled} public data},
		            thus the model classifier necessitates an output dimension of \# of public classes \emph{plus} \# of private classes
		            (c.f. the output dimension of \# of private classes in other FL methods).
		            We cannot compare FedMD with FedDF with the same architecture (classifier) to ensure fairness.
		      \item Cronus is shown to be consistently worse than FedAvg in normal FL (i.e.\ no attack case) in their Tab.\ IV \& VI.
		      \item Different objectives/metrics argued above.
		            We thoroughly evaluated SOTA baselines with the same objective/metric.
	      \end{enumerate}
\end{itemize}

\paragraph{Contemporaneous work.}
We then detail some contemporaneous work,
e.g.~\cite{sun2020federated, chen2020feddistill, zhou2020distilled,he2020group}.
\cite{sun2020federated} slightly extends FedMD by adding differential privacy.
In~\cite{zhou2020distilled},
the server aggregates the synthetic data distilled from clients' private dataset,
which in turn uses for one-shot on-server learning.
He \textit{et al}~\cite{he2020group} improve FL for resource-constrained edge devices
by combing FL with Split Learning (SL) and knowledge distillation:
edge devices train compact feature extractor through local SGD
and then synchronize extracted features and logits with the server,
while the server (asynchronously) uses the latest received features and logits to train a much larger server-side CNN.
The knowledge distillation is used on both the server and clients to improve the optimization quality.

FedDistill~\cite{chen2020feddistill} is very similar to us,
where it resorts to stochastic weight average-Gaussian (SWAG)~\cite{maddox2019simple}
and the ensemble distillation is achieved via cyclical learning rate schedule with SWA~\cite{izmailov2018averaging}.
In Table~\ref{tab:resnet_8_cifar10_effect_of_different_optimizers_for_distillation} below,
we empirically compare our \algopt with this contemporaneous work (i.e. FedDistill).

\section{Algorithmic Description} \label{appendix:algo_description}
Algorithm~\ref{alg:client_update} below details a general training procedure on local clients.
The local update step of \fedprox corresponds to adding a proximal term
(i.e.\ $\eta \derive{ \frac{\mu}{2} \norm{\xx_t^k - \xx_{t-1}^k }_2^2}{\xx_t^k}$) to line 5.

\begin{algorithm}[!h]
	\begin{algorithmic}[1]
		\Procedure{Client-LocalUpdate}{$k, \xx_{t-1}^k$}
		\myState{Client $k$ receives $\xx_{t-1}^k$ from server and copies it as $\xx_t^k$ }
		\For{ each local epoch $i$ from $1$ to $\lepochs$ }
		\For{mini-batch $b \subset \cP_k$}
		\myState{$\xx_t^k \leftarrow \xx_t^k - \eta \derive{\loss(\xx_t^k; b)}{\xx_t^k}$ } \Comment{can be arbitary optimizers (e.g. Adam)}
		\EndFor
		\EndFor
		\myState{\Return $\xx_t^k$ to server }
		\EndProcedure
	\end{algorithmic}
	\mycaptionof{algorithm}{
		\textbf{Illustration of local client update in \fedavg.}
		The $K$ clients are indexed by $k$; $\cP_k$ indicates the set of indexes of data points on client $k$,
		and $n_k = \abs{\cP_k}$.
		$\lepochs$ is the number of local epochs, and $\eta$ is the learning rate.
		$\loss$ evaluates the loss on model weights for a mini-batch of an arbitrary size.
	}
	\label{alg:client_update}
\end{algorithm}

Algorithm~\ref{alg:heterogeneous_framework} illustrates the model fusion of \algopt
for the FL system with heterogeneous model prototypes. The schematic diagram is presented in Figure~\ref{fig:diagram_heterogeneous_system}.
To perform model fusion in such heterogeneous scenarios,
\algopt constructs several prototypical models on the server.
Each prototype represents all clients with identical architecture/size/precision etc.

\begin{algorithm}[!h]
	\begin{algorithmic}[1]
		\Procedure{Server}{}
		\myState{initialize HashMap $\cM$: map each model prototype $P$ to its weights $\xx^P_{0}$.}
		\myState{initialize HashMap $\cC$: map each client to its model prototype.}
		\myState{initialize HashMap $\tilde{\cC}$: map each model prototype to the associated clients.}

		\For{each communication round $t = 1, \dots, T$}
		\myState{$\cS_t \leftarrow$ a random subset ($C$ fraction) of the $K$ clients}
		\For{ each client $k \in \cS_t$ \textbf{in parallel} }
		\myState{ $\hxx_{t}^k \leftarrow \text{Client-LocalUpdate}(k, \cM\left[ \cC[k] \right] )$}  \Comment{detailed in Algorithm~\ref{alg:client_update}.}
		\EndFor

		\For{ each prototype $P \in \cP$ \textbf{in parallel} }
		\myState{
			initialize the client set $\cS^P_t$ with model prototype $P$,
			where $\cS^P_t \leftarrow \tilde{\cC}[P] \cap \cS_t$
		}
		\myState{ initialize for model fusion
		$\xx^P_{t, 0} \leftarrow \sum_{ k \in \cS^P_t } \frac{n_k}{ \sum_{ k \in \cS^P_t } n_k } \hxx_{t}^k$
		}
		\For{ $j$ in $\{ 1, \dots, N \}$ }
		\myState{ sample $\dd$, from e.g.\ (1) an unlabeled dataset, (2) a generator}
		\myState{ use ensemble of $ \{ \hxx_{t}^k \}_{k \in \cS_t}$ to update server student $\xx^P_{t, j}$ through \avglogits}
		\EndFor
		\myState{ $\cM\left[ P \right] \leftarrow \xx^P_{t, N}$}
		\EndFor

		\EndFor
		\myState{\Return $\cM$}
		\EndProcedure
	\end{algorithmic}

	\mycaptionof{algorithm}{\small
	Illustration of \algopt for heterogeneous FL systems.
	The $K$ clients are indexed by $k$,
	and $n_k$ indicates the number of data points for the $k$-th client.
	The number of communication rounds is $T$,
	and $C$ controls the client participation ratio per communication round.
	The number of total iterations used for model fusion is denoted as $N$.
	The distinct model prototype set $\cP$ has $p$ model prototypes,
	with each initialized as $\xx^P_{0}$.
	}
	\label{alg:heterogeneous_framework}
\end{algorithm}

\begin{figure*}[!h]
	\centering
	\includegraphics[width=0.77\textwidth,]{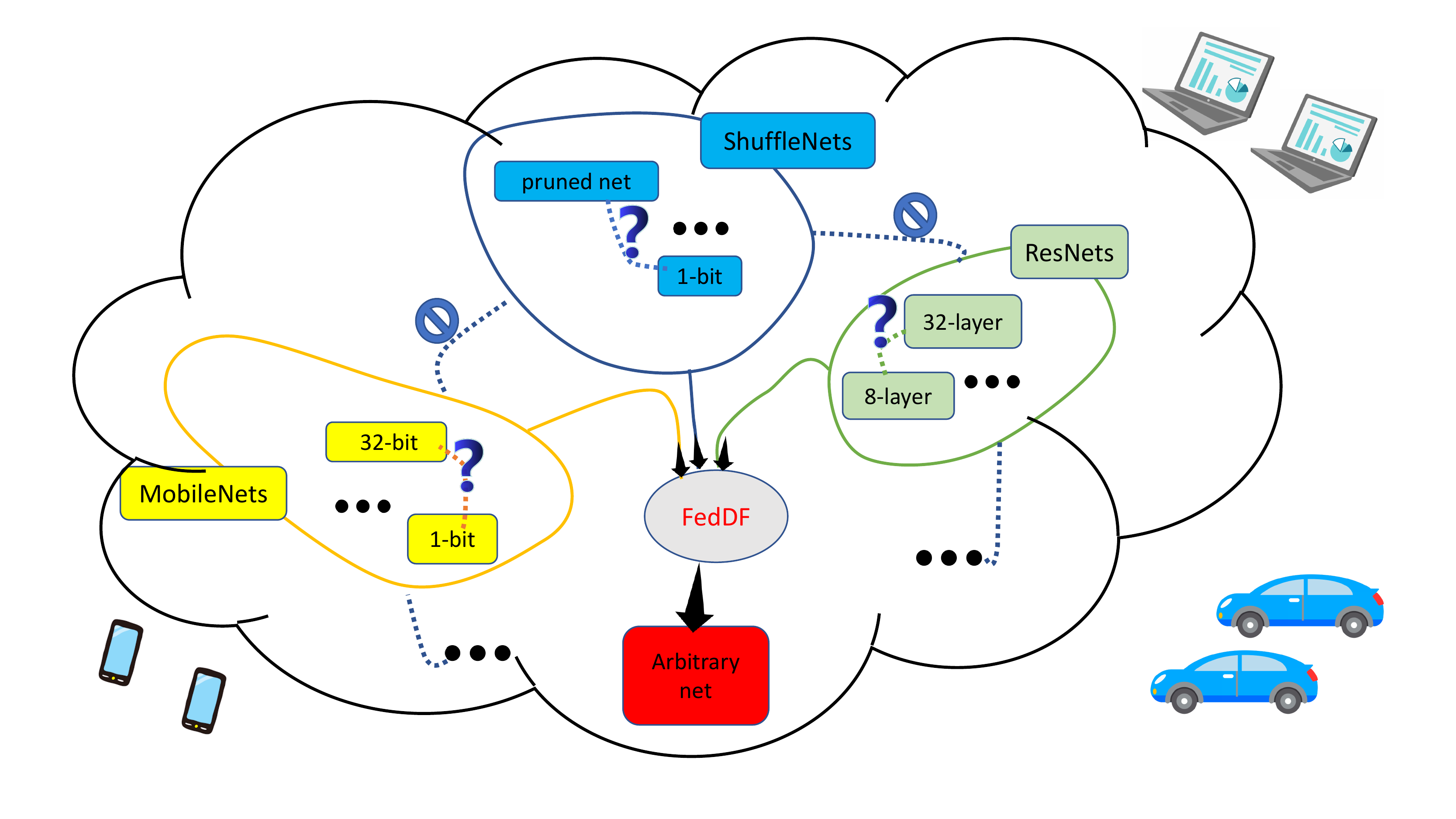}
	\caption{\small
		\textbf{The schematic diagram for heterogeneous model fusion.}
		We use dotted lines to indicate model parameter averaging FL methods such as \fedavg.
		We could notice the architectural/precision discrepancy invalidates these methods
		in heterogeneous FL systems.
		However, \algopt could aggregate knowledge from all available models without hindrance.
	}
	\label{fig:diagram_heterogeneous_system}
\end{figure*}

\clearpage
\section{Additional Experimental Setup and Evaluations} \label{appendix:more_exps}
\subsection{Detailed Description for Toy Example (Figure~\ref{fig:illustration_problems_in_fl})}
\label{subsec:description_toy_example}

Figure~\ref{fig:illustration_problems_in_fl_detail}
provides a detailed illustration of the limitation in \fedavg.

\begin{figure*}[!h]
	\centering
	\includegraphics[width=1\textwidth]{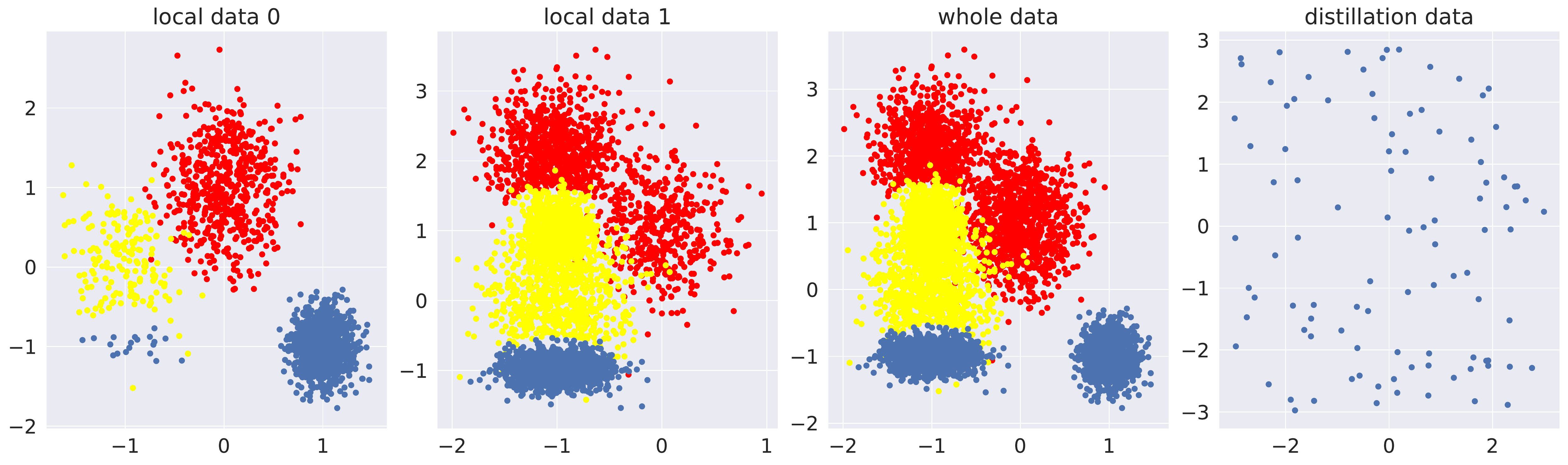}
	\hfill
	\includegraphics[width=1\textwidth]{figures/methods/the_issue_of_fedavg_perf.pdf}
	\caption{\small
		\textbf{The limitation of \fedavg.}
		We consider a toy example of a 3-class classification task with a 3-layer MLP,
		and display the decision boundaries (probabilities over RGB channels) on the input space.
		We illustrate the used datasets in the \textbf{top} row;
		the distillation dataset consists of $60$ data points,
		with each uniformly sampled from the range of $(-3, 3)$.
		In the \textbf{bottom} row,
		the left two figures consider the individually trained local models.
		The right three figures evaluate aggregated models
		and the global data distribution;
		the averaged model (\fedavg) results in much blurred decision boundaries.
	}
	\label{fig:illustration_problems_in_fl_detail}
\end{figure*}

\subsection{Detailed Experiment Setup} \label{appendix:detailed_exp_setup}
\paragraph{The detailed hyperparameter tuning procedure.}
The tuning procedure of hyperparameters ensures that the best hyperparameter lies in the middle of our search grids;
otherwise, we extend our search grid.
The initial search grid of learning rate is $\{1.5, 1, 0.5, 0.1, 0.05, 0.01 \}$.
The initial search grid of proximal factor in~\fedprox is $\{ 0.001, 0.01, 0.1, 1 \}$.
The initial search grid of momentum factor $\beta$ in \fedavgM is $\{ 0.1, 0.2, 0.3, 0.4 \}$;
the update scheme of \fedavgM follows $\Delta \vv := \beta \vv + \Delta \xx \;; \xx := \xx - \Delta \vv$,
where $\Delta \xx$ is the model difference between the updated local model and the sent global model,
for previous communication round.

Unless mentioned (i.e.\ Table~\ref{tab:cifar10_resnet8_detailed_comparison}),
otherwise the learning rate is set to $0.1$
for ResNet like architectures (e.g.\ ResNet-8, ResNet-20, ResNet-32, ShuffleNetV2),
$0.05$ for VGG and $1e{-}5$ for DistilBERT.
When comparing with other methods, e.g.\ \fedprox, \fedavgM,
we always tune their corresponding hyperparameters
(e.g.\ proximal factor in~\fedprox and momentum factor in~\fedavgM).

\paragraph{Experiment details of \fedma.}
We detail our attempts of reproducing \fedma experiments on VGG-9 with CIFAR-10 in this section.
We clone their codebase from GitHub and add functionality to sample clients after synchronizing the whole model.

Different from other methods evaluated in the paper,
\fedma uses a layer-wise local training scheme.
For each round of the local training,
the involved clients only update the model parameters from one specific layer onwards,
while the already matched layers are frozen.
The fusion (matching) is only performed on the chosen layer.
Such a layer
is gradually chosen from the bottom layer to the top layer,
following a bottom-up fashion~\cite{Wang2020Federated}.
One complete model update cycle of~\fedma requires
more frequent (but slightly cheaper) communication,
which is equivalent to the number of layers in the neural network.

In our experiments of \fedma, the number of local training epochs is $5$ epochs per layer
($45$ epochs per model update),
which is slightly larger than $40$ epochs used by other methods.
We ensure a similar\footnote{
	The other methods use $40$ local training epochs per whole model update.
	Given the fact of layer-wise training scheme in~\fedma,
	as well as the used $9$-layer VGG
	(same as the one used in~\cite{Wang2020Federated}
	and we are unable to adapt their code to other architectures
	due to their hard-coded architecture manipulations),
	we decide to slightly increase the number of local epochs per layer for \fedma.
} number of model updates in terms of the whole model.
We consider global-wise learning rate, different from the layer-wise one in Wang et al.~\cite{Wang2020Federated}.
We also turn off the momentum and weight decay during the local training for a consistent evaluation.
The implementation of VGG-9 follows~\url{https://github.com/kuangliu/pytorch-cifar/}.

\paragraph{The detailed experimental setup for \algopt (low-bit quantized models).}
\algopt increases the feasibility of robust model fusion in FL for binarized ResNet-8.
As stated in Table~\ref{tab:binary_net} (Section~\ref{subsec:case_study}),
we employ the ``Straight-through estimator''~\cite{bengio2013estimating,hintonestimator,hubara2016binarized,hubara2017quantized}
or the ``error-feedback''~\cite{Lin2020Dynamic}
to simulate the on-device local training of the binarized ResNet-8.
For each communication round,
the server of the FL system will receive locally trained and binarized ResNet-8 from activated clients.
The server will then distill the knowledge of these low-precision models to a full-precision one\footnote{
	The training of the binarized network requires to maintain a full-precision model~\cite{hubara2016binarized,hubara2017quantized,Lin2020Dynamic} for model update
	(quantized/pruned model is used during the backward pass).
}
and broadcast to newly activated clients for the next communication round.
For the sake of simplicity,
the case study demonstrated in the paper only considers reducing the communication cost (from clients to the server),
and the local computational cost;
a thorough investigation on how to perform a communication-efficient and memory-efficient FL is left as future work.

\paragraph{The synthetic formulation of non-\iid client data.}
Assume every client training example is drawn
independently with class labels following a categorical distribution
over $M$ classes parameterized by a vector $\qq$ ($q_i \geq 0, i \in [1, M]$ and $\norm{\qq}_1 = 1$).
Following the partition scheme introduced and used in~\cite{yurochkin2019bayesian,hsu2019measuring}\footnote{
	We heavily borrowed the partition description of~\cite{hsu2019measuring} for the completeness of the paper.
},
to synthesize client non-\iid local data distributions,
we draw $\alpha \sim \text{Dir} (\alpha \pp)$ from a Dirichlet distribution,
where $\pp$ characterizes a prior class distribution over $M$ classes,
and $\alpha > 0$ is a concentration parameter controlling the identicalness among clients.
With $\alpha \rightarrow \infty$, all clients have identical distributions to the prior;
with $\alpha \rightarrow 0$, each client holds examples from only one random class.

To better understand the local data distribution for the datasets we considered in the experiments,
we visualize the partition results of CIFAR-10 and CIFAR-100 on $\alpha \!=\! \{ 0.01, 0.1, 0.5, 1, 100 \}$ for $20$ clients,
in Figure~\ref{fig:class_distribution_for_alphas_cifar10_20_clients}
and Figure~\ref{fig:class_distribution_for_alphas_cifar100_20_clients}, respectively.

In Figure~\ref{fig:nlp_partition}
we visualize the partitioned local data on $10$ clients with $\alpha \!=\! 1$,
for AG News and SST-2.

\begin{figure*}[!h]
	\centering
	\subfigure[\small
		$\alpha \!=\! 100$
	]{
		\includegraphics[width=0.31\textwidth,]{figures/exp/cifar10_n_clients_20_alpha_100.pdf}
		\label{fig:cifar10_20_clients_alpha_100}
	}
	\hfill
	\subfigure[\small
		$\alpha \!=\! 1$
	]{
		\includegraphics[width=0.31\textwidth,]{figures/exp/cifar10_n_clients_20_alpha_1.pdf}
		\label{fig:cifar10_20_clients_alpha_1}
	}
	\hfill
	\subfigure[\small
		$\alpha \!=\! 0.5$
	]{
		\includegraphics[width=0.31\textwidth,]{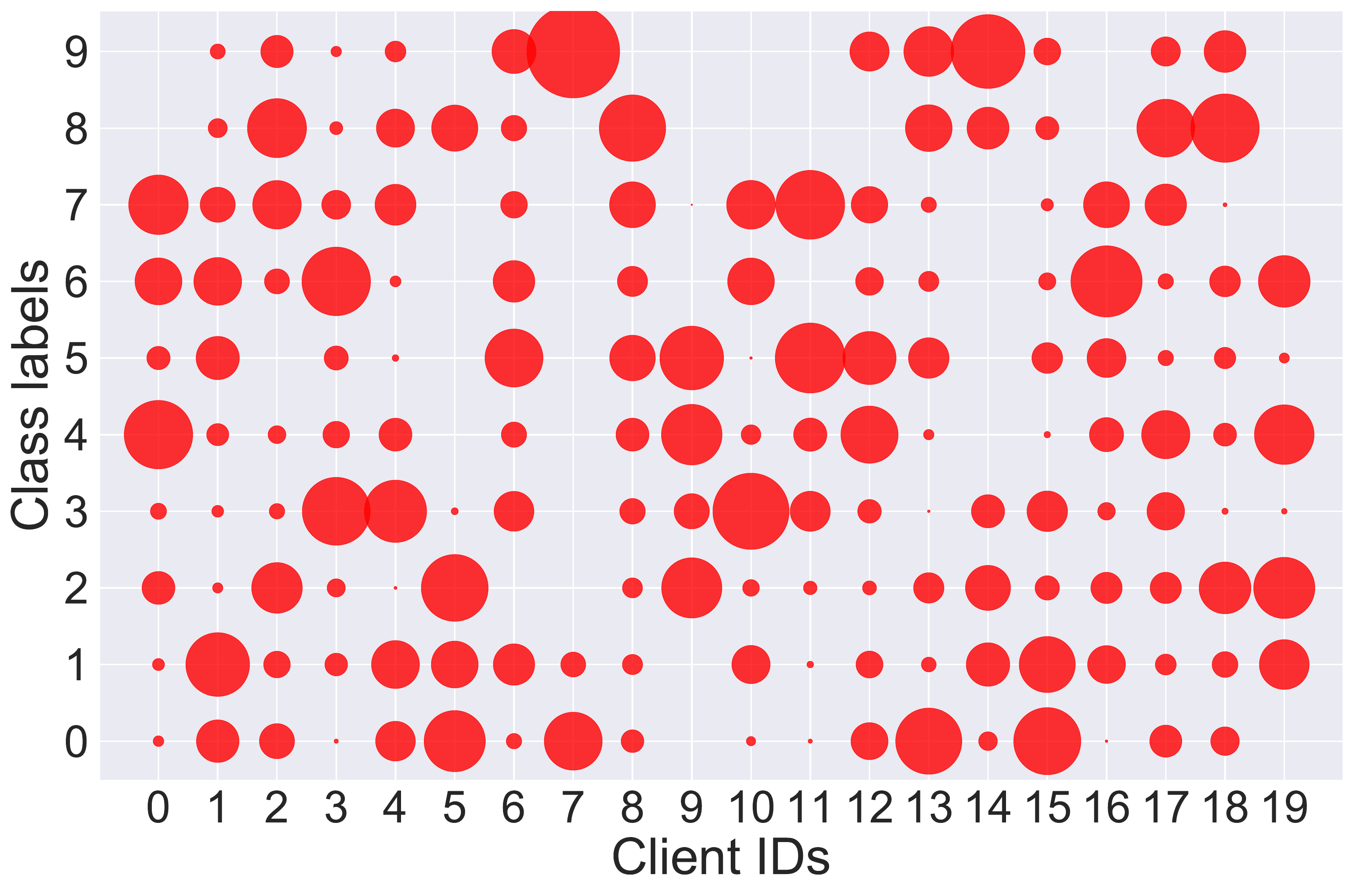}
		\label{fig:cifar10_20_clients_alpha_05}
	}
	\hfill
	\subfigure[\small
		$\alpha \!=\! 0.1$
	]{
		\includegraphics[width=0.31\textwidth,]{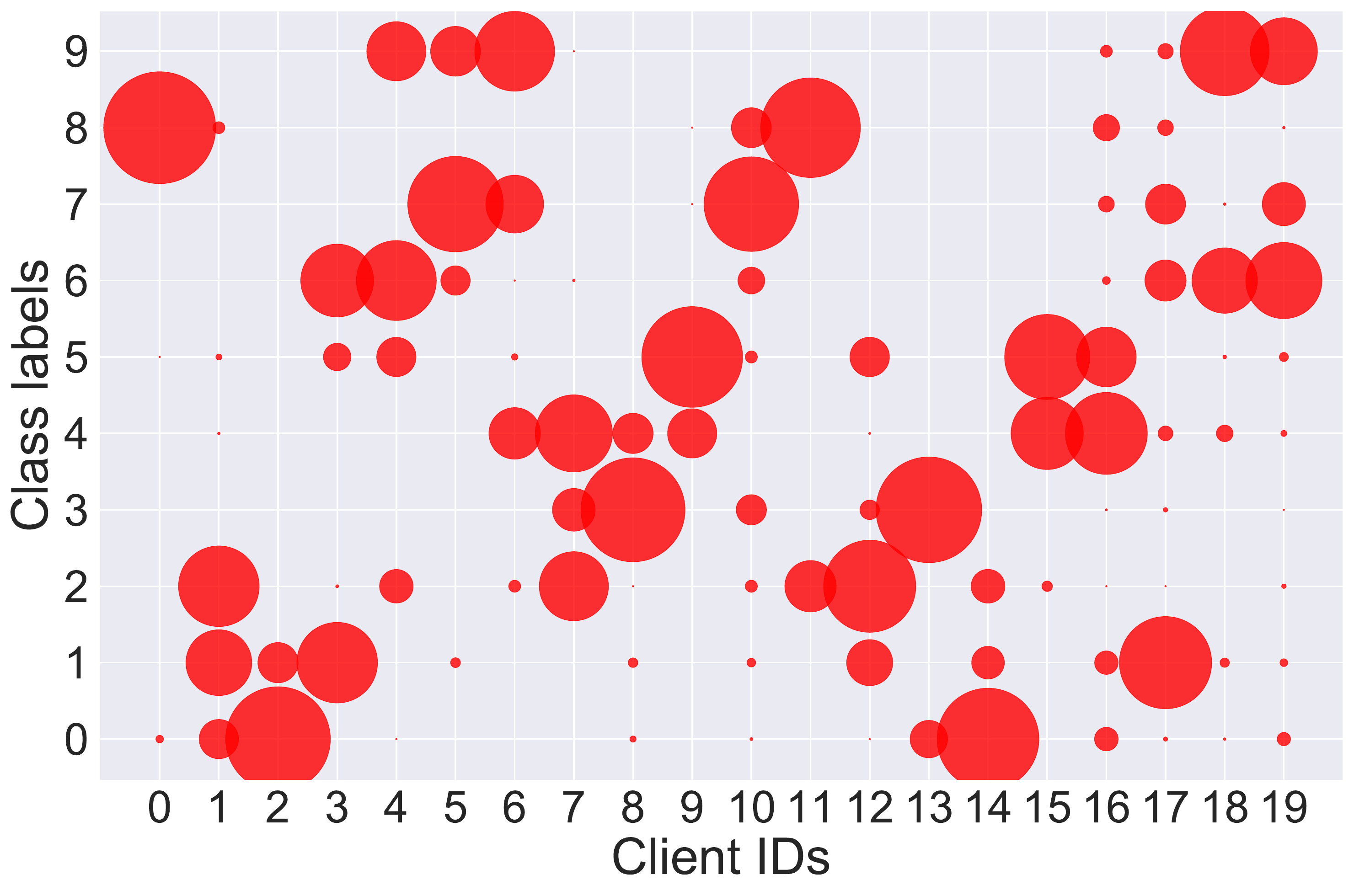}
		\label{fig:cifar10_20_clients_alpha_01}
	}
	\hfill
	\subfigure[\small
		$\alpha \!=\! 0.01$
	]{
		\includegraphics[width=0.31\textwidth,]{figures/exp/cifar10_n_clients_20_alpha_001.pdf}
		\label{fig:cifar10_20_clients_alpha_001}
	}
	\vspace{-0.5em}
	\caption{\small
		Classes allocated to each client at different Dirichlet distribution alpha values,
		for CIFAR-10 with 20 clients.
		The size of each dot reflects the magnitude of the samples number.
	}
	\vspace{-0.5em}
	\label{fig:class_distribution_for_alphas_cifar10_20_clients}
\end{figure*}

\begin{figure*}[!h]
	\centering
	\subfigure[\small
		$\alpha \!=\! 100$
	]{
		\includegraphics[width=0.31\textwidth,]{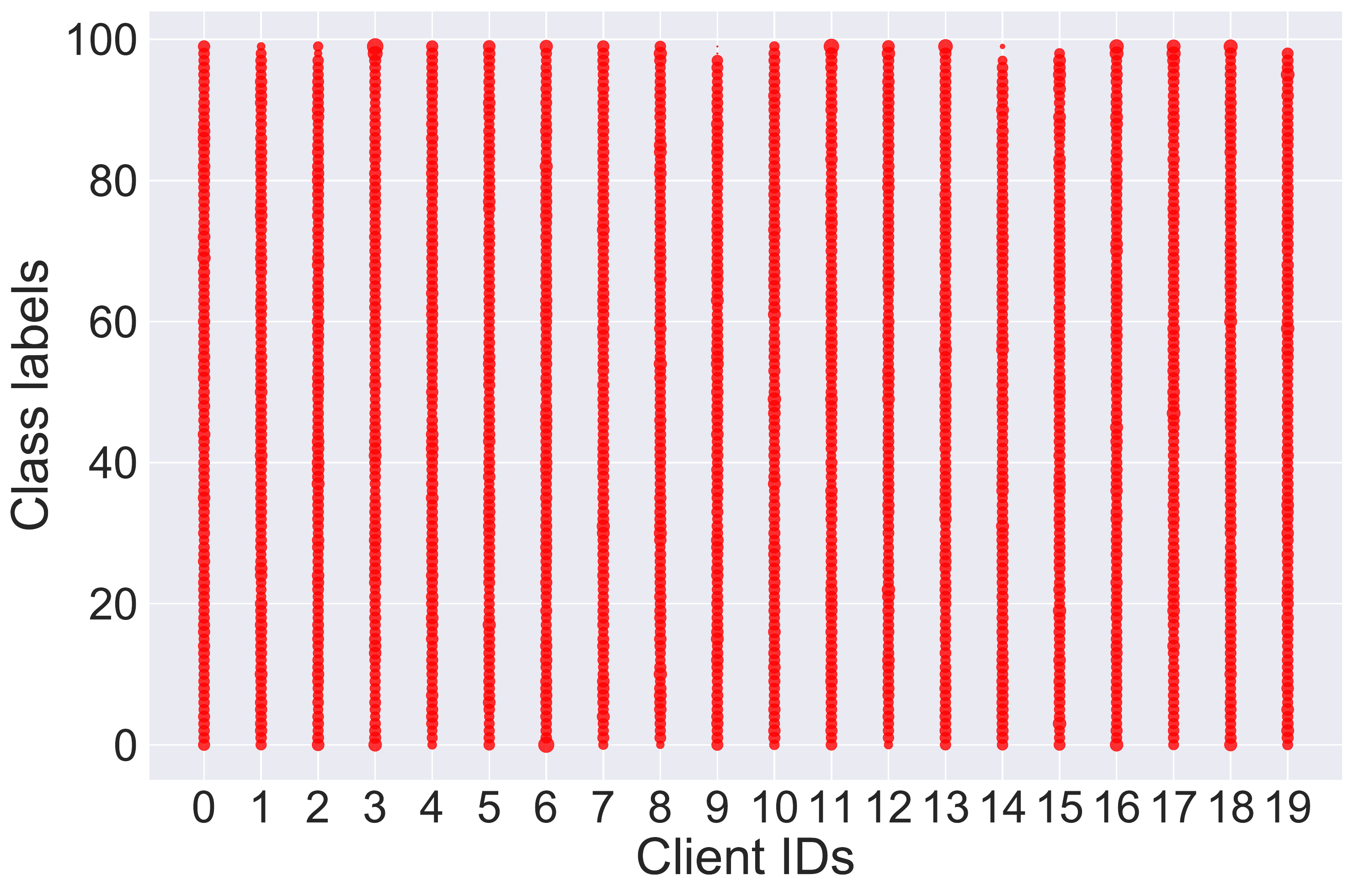}
		\label{fig:cifar100_20_clients_alpha_100}
	}
	\hfill
	\subfigure[\small
		$\alpha \!=\! 1$
	]{
		\includegraphics[width=0.31\textwidth,]{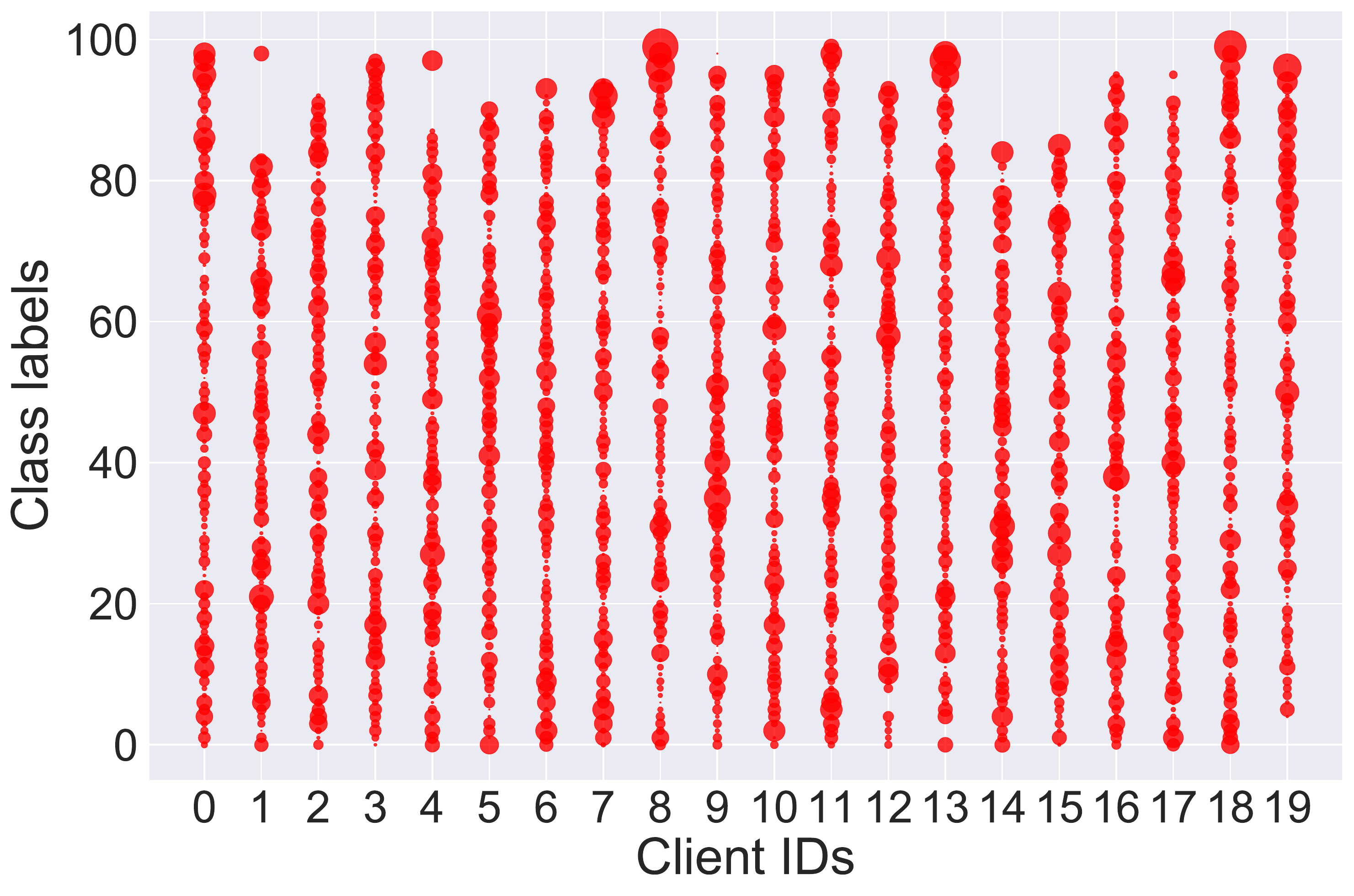}
		\label{fig:cifar100_20_clients_alpha_1}
	}
	\hfill
	\subfigure[\small
		$\alpha \!=\! 0.5$
	]{
		\includegraphics[width=0.31\textwidth,]{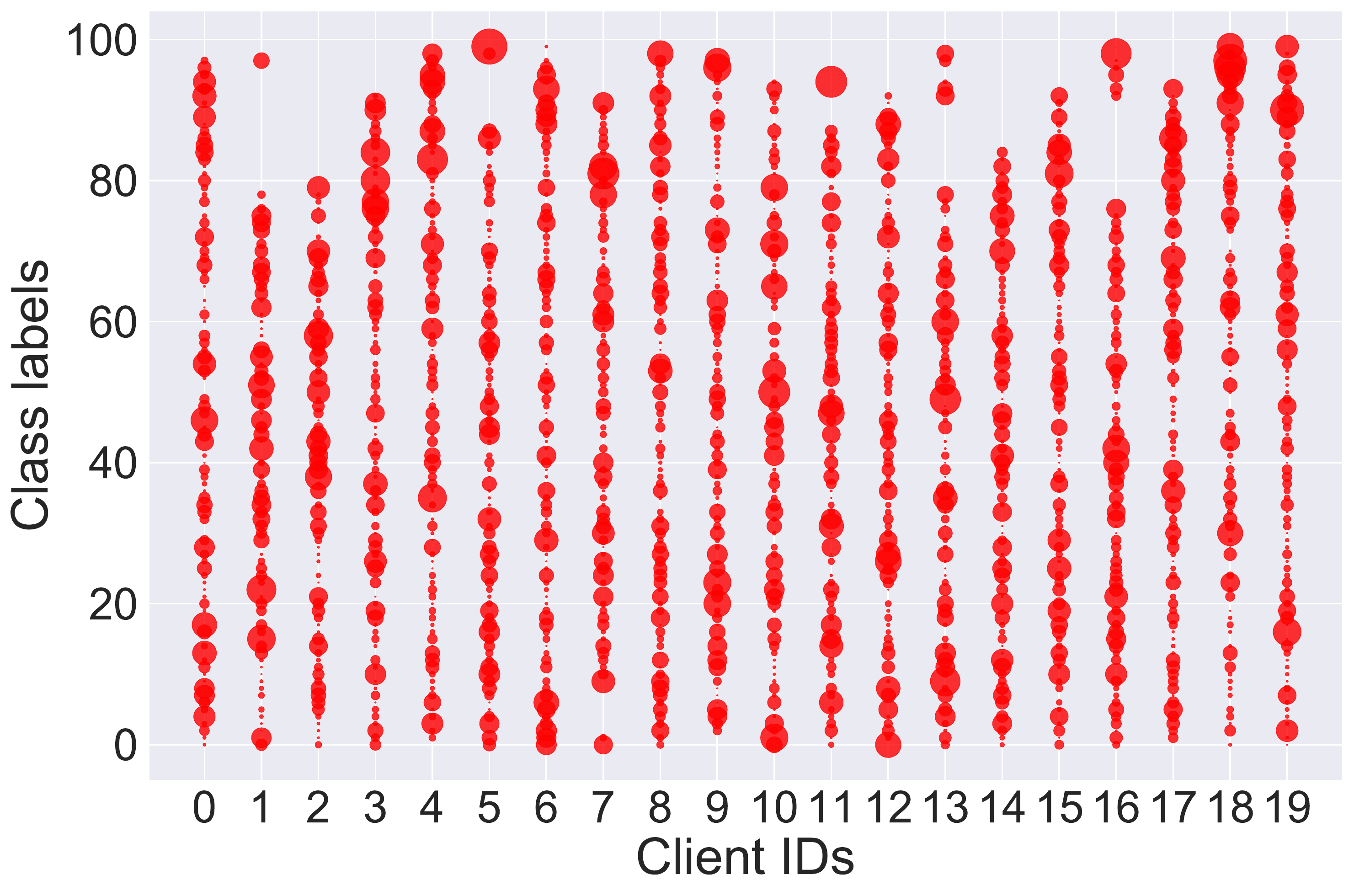}
		\label{fig:cifar100_20_clients_alpha_05}
	}
	\hfill
	\subfigure[\small
		$\alpha \!=\! 0.1$
	]{
		\includegraphics[width=0.31\textwidth,]{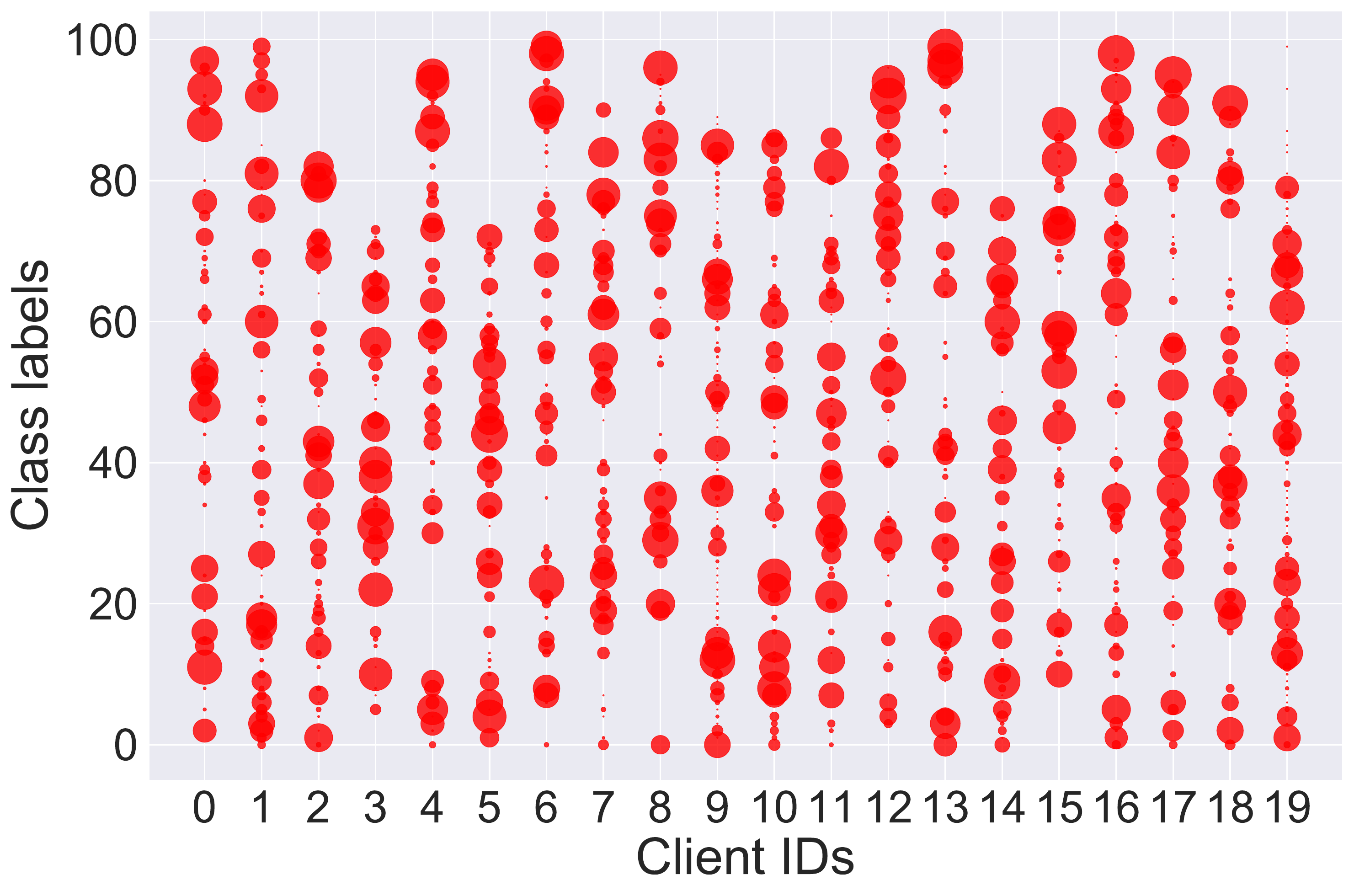}
		\label{fig:cifar100_20_clients_alpha_01}
	}
	\hfill
	\subfigure[\small
		$\alpha \!=\! 0.01$
	]{
		\includegraphics[width=0.31\textwidth,]{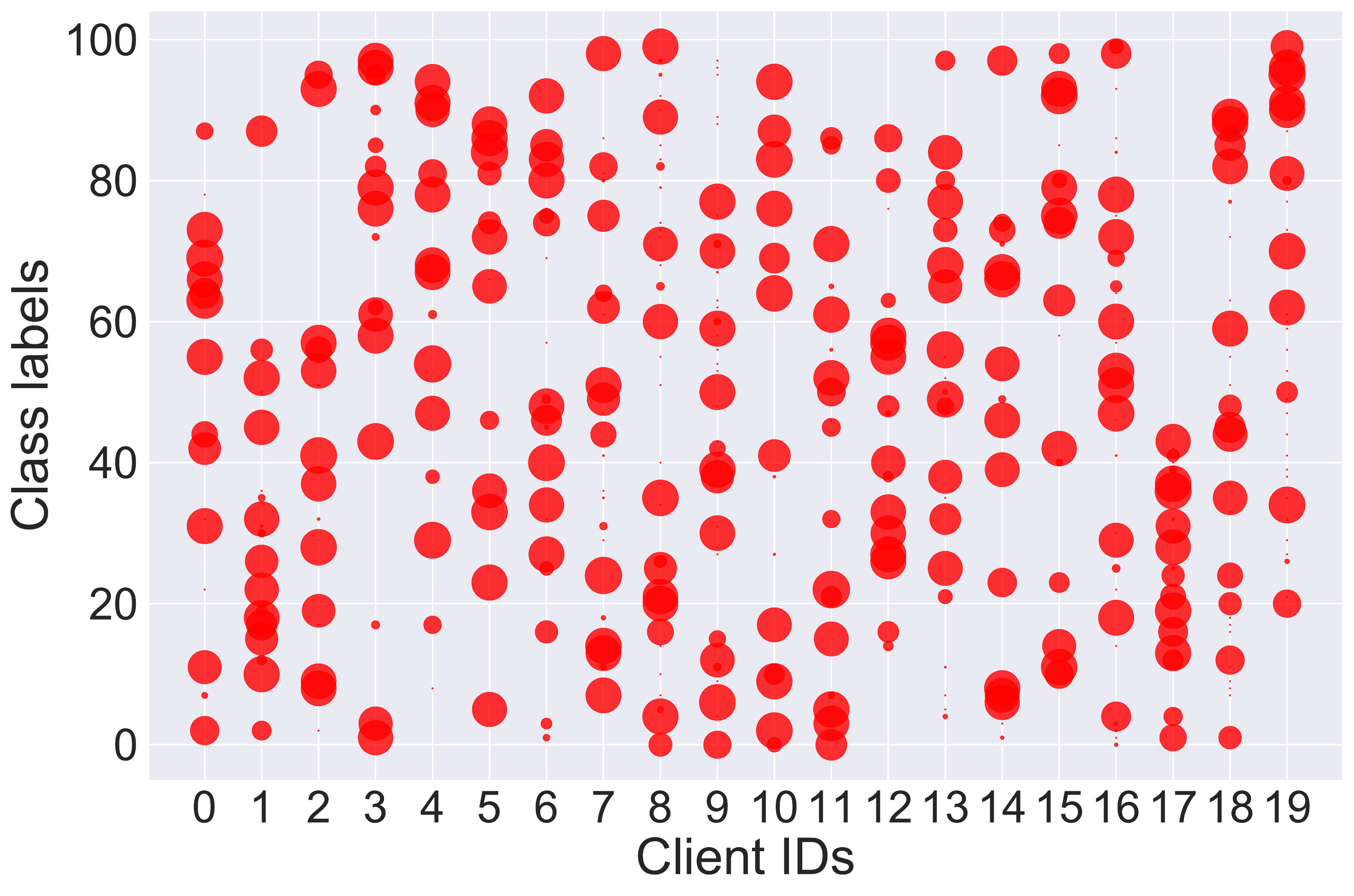}
		\label{fig:cifar100_20_clients_alpha_001}
	}
	\vspace{-0.5em}
	\caption{\small
		Classes allocated to each client at different Dirichlet distribution alpha values,
		for CIFAR-100 with 20 clients.
		The size of each dot reflects the magnitude of the samples number.
	}
	\vspace{-0.5em}
	\label{fig:class_distribution_for_alphas_cifar100_20_clients}
\end{figure*}

\begin{figure*}[!h]
	\centering
	\subfigure[\small
		AG News
	]{
		\includegraphics[width=0.46\textwidth,]{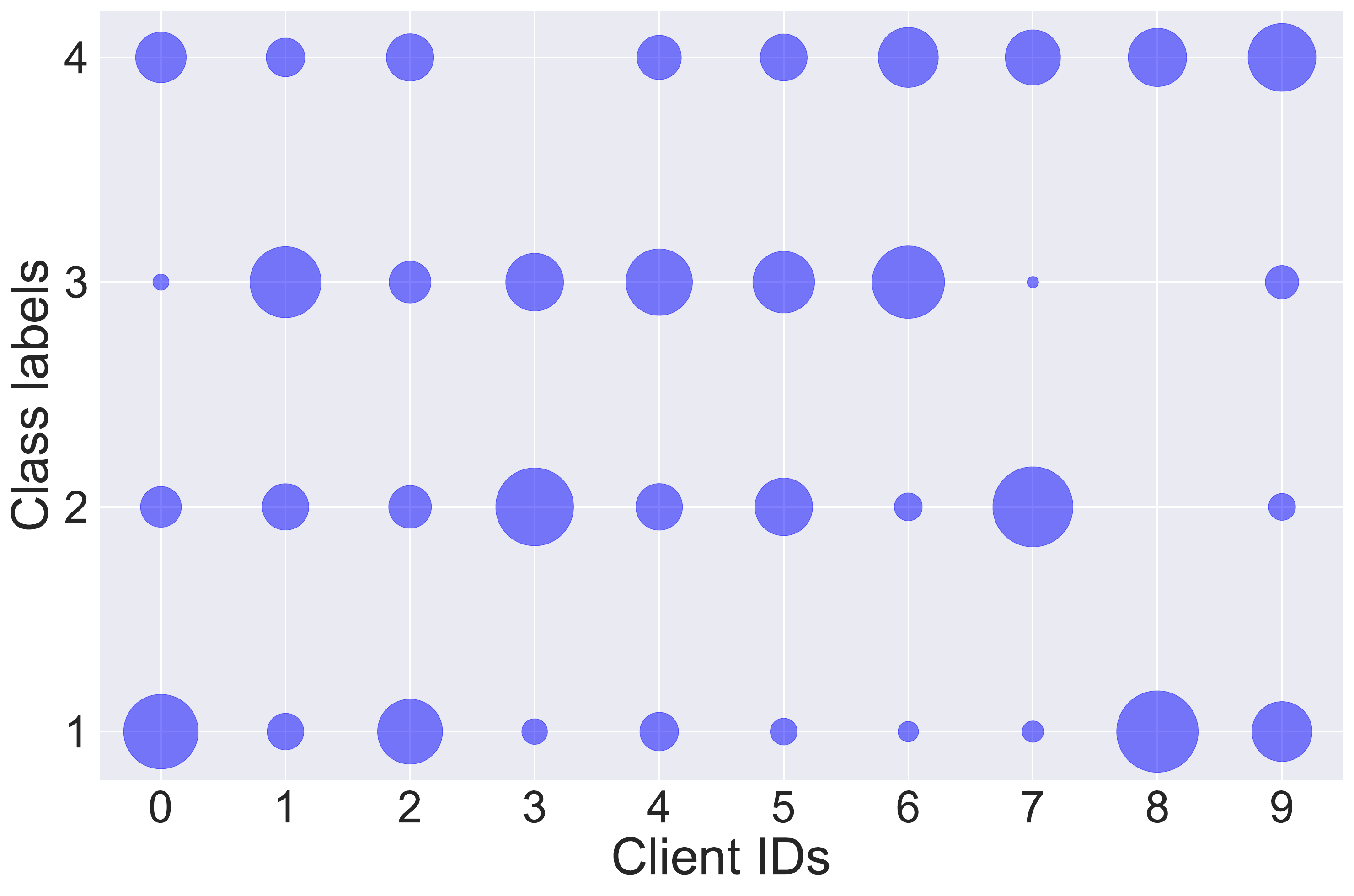}
		\label{fig:agnews_partition}
	}
	\hfill
	\subfigure[\small
		SST2
	]{
		\includegraphics[width=0.46\textwidth,]{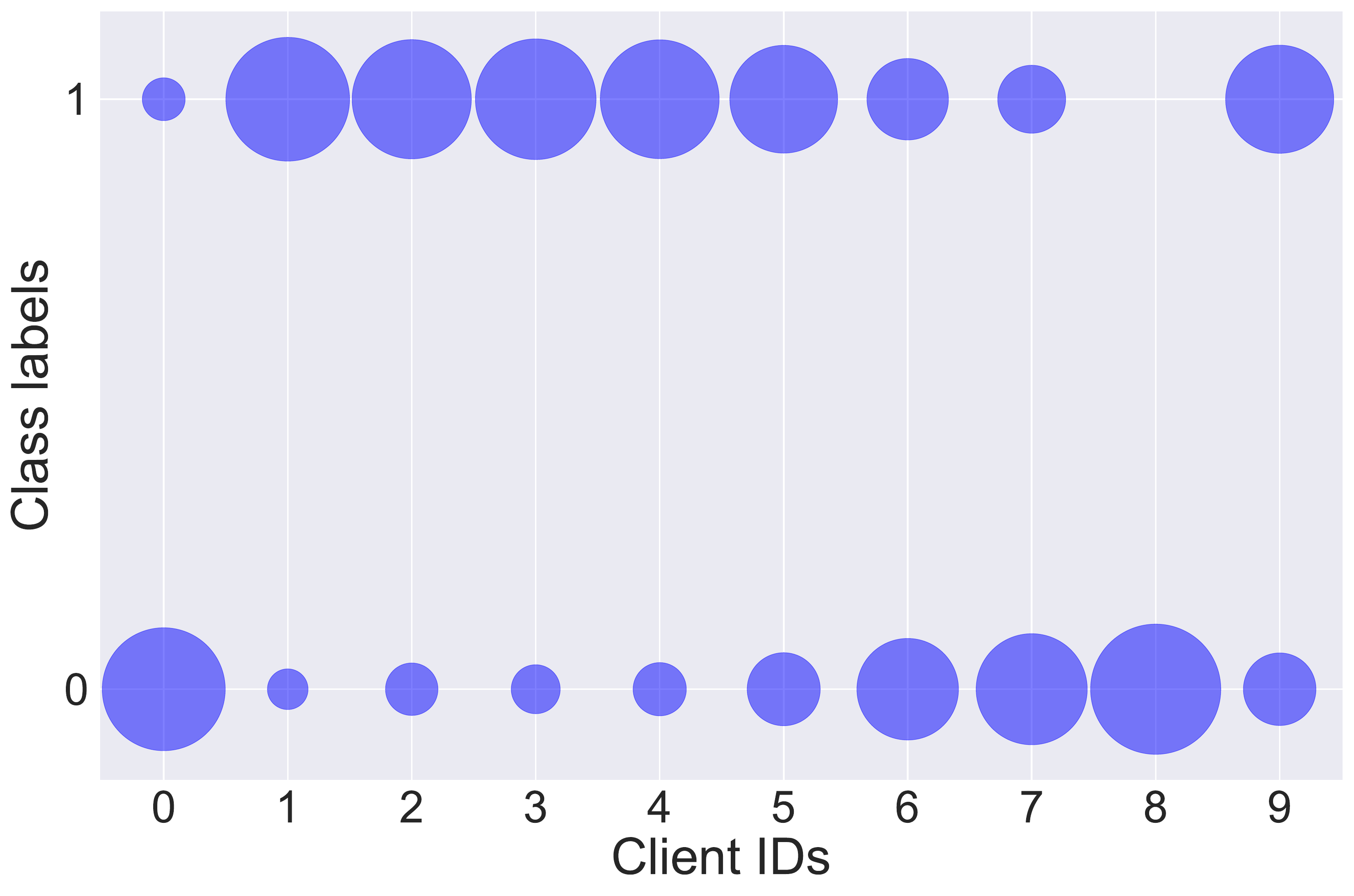}
		\label{fig:sst2_partition}
	}
	\vspace{-0.5em}
	\caption{\small
		Classes allocated to each client at Dirichlet distribution $\alpha = 1$,
		for AG News and SST2 datasets with 10 clients.
		The size of each dot reflects the magnitude of the samples number.
	}
	\vspace{-0.5em}
	\label{fig:nlp_partition}
\end{figure*}

\subsection{Some Empirical Understanding of \fedavg} \label{appendix:empirical_understanding_fedavg}
Figure~\ref{fig:ablation_study_resnet8_cifar10_fedavg_different_non_iid_different_local_epochs_different_lr_schedule}
reviews the general behaviors of \fedavg under different non-iid degrees of local data,
different local data sizes, different numbers of local epochs per communication round,
as well as the learning rate schedule during the local training.
Since we cannot observe the benefits of decaying the learning rate during the local training phase,
we turn off the learning rate decay for the experiments in the main text.

In Figure~\ref{fig:resnet8_cifar10_impact_of_normalization},
we visualize the learning curves of training ResNet-8 on CIFAR-10 with different normalization techniques.
The numerical results correspond to Table~\ref{tab:resnet8_cifar10_impact_of_normalization} in the main text.

\begin{figure*}[!h]
	\centering
	\subfigure[\small $\alpha \!=\! 100$]{
		\includegraphics[width=0.31\textwidth,]{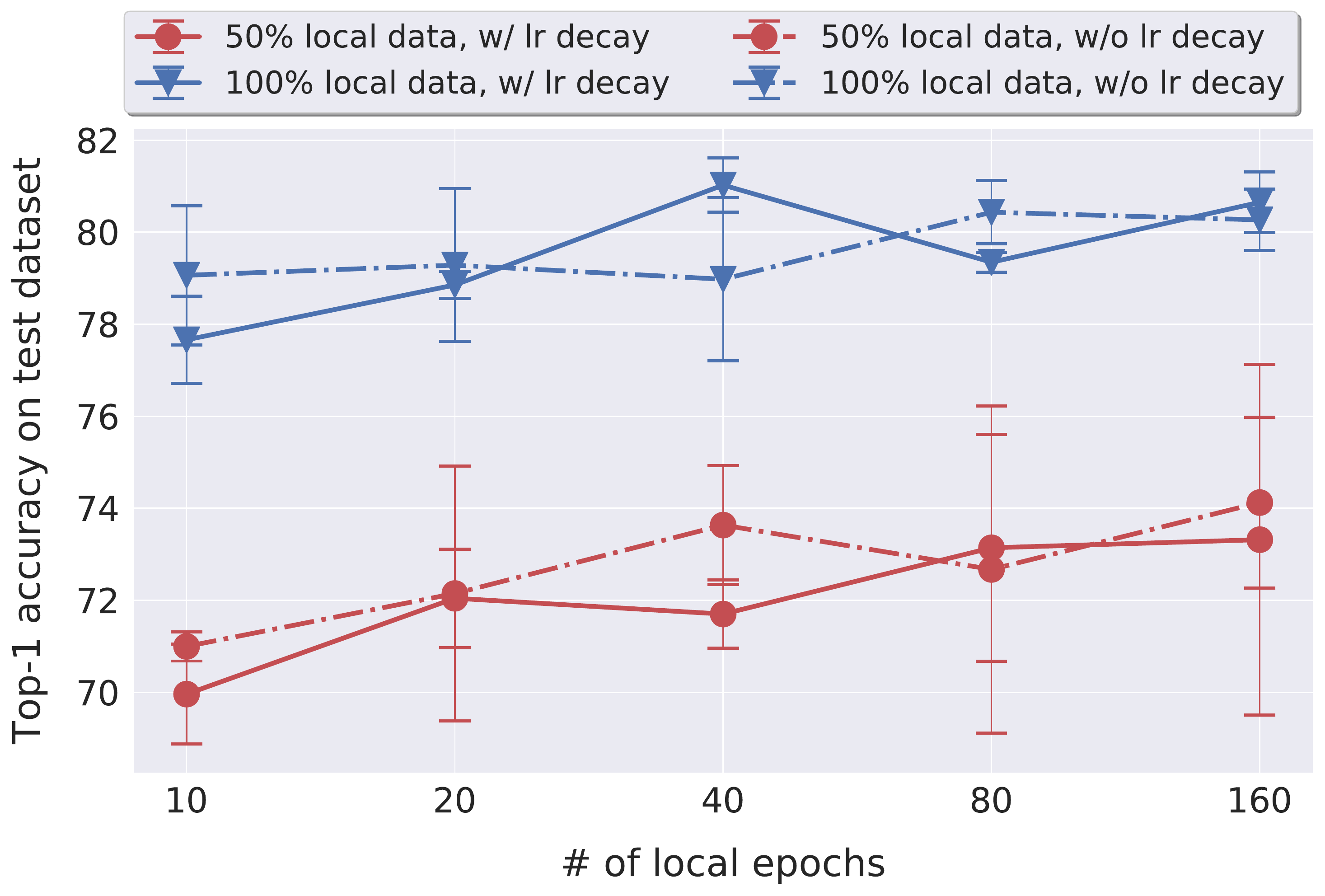}
		\label{fig:resnet8_cifar10_non_iid_100_fedavg_impact_of_lr_decay}
	}
	\hfill
	\subfigure[\small $\alpha \!=\! 1$]{
		\includegraphics[width=0.31\textwidth,]{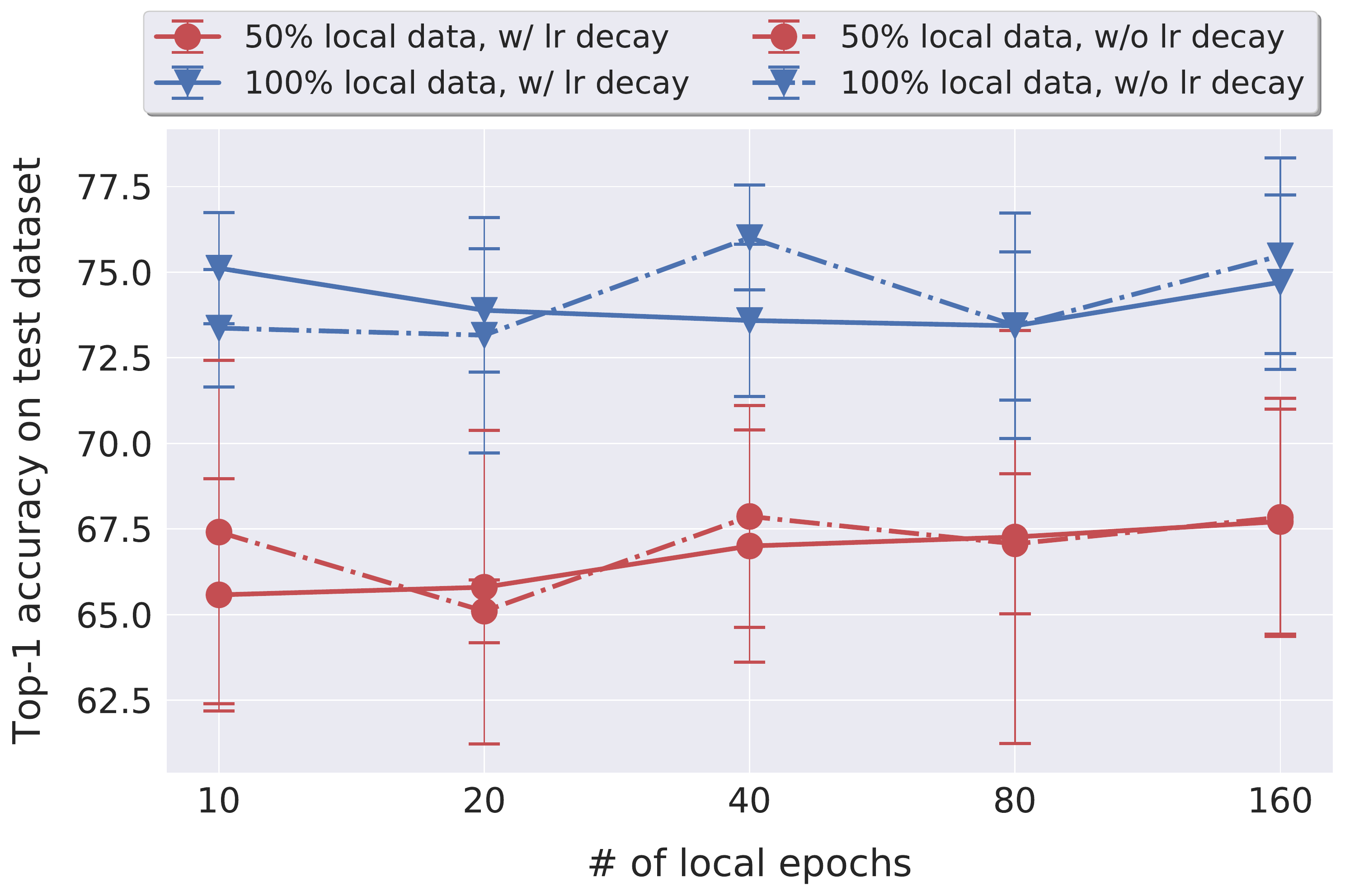}
		\label{fig:resnet8_cifar10_non_iid_1_fedavg_impact_of_lr_decay}
	}
	\hfill
	\subfigure[\small $\alpha \!=\! 0.01$]{
		\includegraphics[width=0.31\textwidth,]{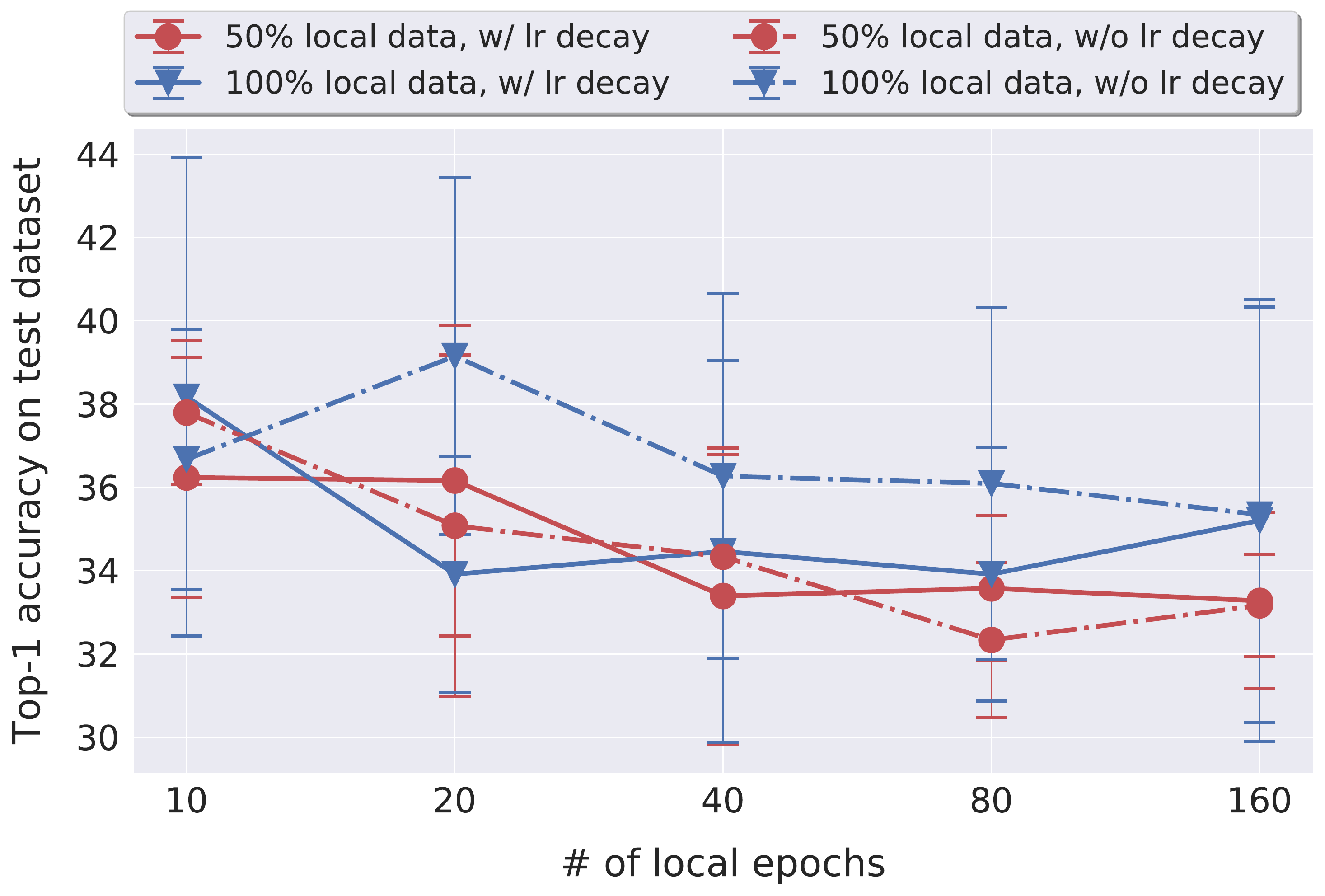}
		\label{fig:resnet8_cifar10_non_iid_001_fedavg_impact_of_lr_decay}
	}
	\vspace{-0.5em}
	\caption{\small
		\textbf{The ablation study of \fedavg for different \# of local epochs and learning rate schedules},
		for standard federated learning on CIFAR-10 with ResNet-8.
		For each communication round ($100$ in total),
		$40\%$ of the total $20$ clients are randomly selected.
		We use $\alpha$ to synthetically control the non-iid degree of the local data,
		as in~\cite{yurochkin2019bayesian,hsu2019measuring}.
		The smaller $\alpha$, the larger discrepancy between local data distributions
		($\alpha \!=\! 100$ mimics identical local data distributions).
		We report the top-1 accuracy (on three different seeds) on the test dataset.
	}
	\vspace{-0.5em}
	\label{fig:ablation_study_resnet8_cifar10_fedavg_different_non_iid_different_local_epochs_different_lr_schedule}
\end{figure*}

\begin{figure*}[!h]
	\centering
	\includegraphics[width=0.5\textwidth,]{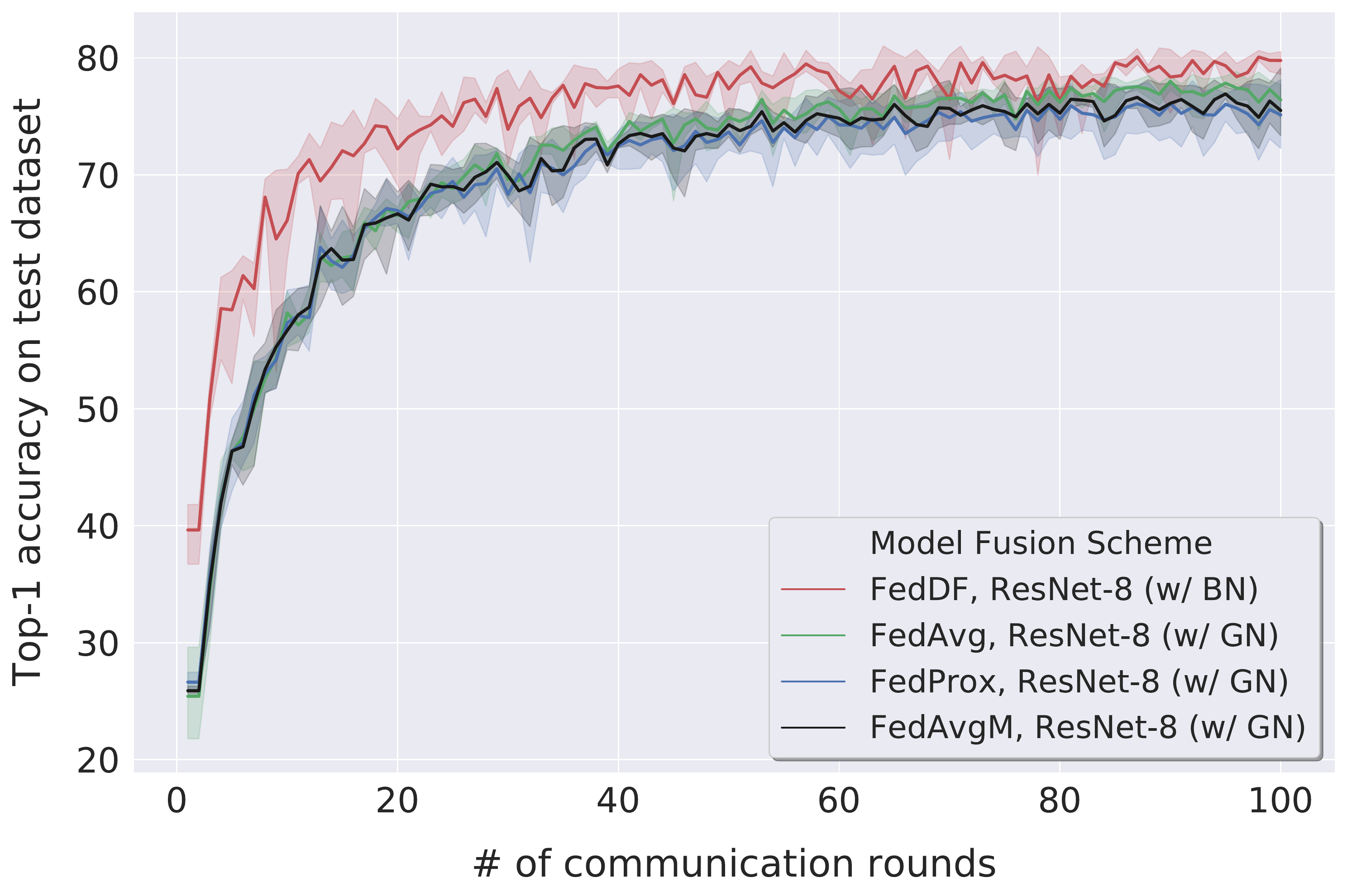}
	\vspace{-0.5em}
	\caption{\small
		The impact of different normalization techniques, i.e., Batch Normalization (BN), Group Normalization (GN),
		for \fl on CIFAR-10 with ResNet-8 with $\alpha = 1$.
		For each communication round ($100$ in total), $40\%$ of the total $20$ clients are randomly selected for $40$ local epochs.
	}
	\vspace{-0.5em}
	\label{fig:resnet8_cifar10_impact_of_normalization}
\end{figure*}

\subsection{The Advantages of \algopt} \label{appendix:standard_fl_scenario}
\subsubsection{Ablation Study} \label{appendix:ablation_study}
\paragraph{The Importance of the Model Initialization in \algopt.}
We empirically study the importance of the initialization (before performing ensemble distillation) in \algopt.
Table~\ref{tab:resnet_8_cifar10_importance_of_init}
demonstrates the performance difference of~\algopt for two different model initialization schemes:
1) ``from average'',
where the uniformly averaged model from this communication round is used as the initial model
(i.e.\ the default design choice of \algopt as illustrated in Algorithm~\ref{alg:homogeneous_framework}
and Algorithm~\ref{alg:heterogeneous_framework});
and 2) ``from previous'',
where we initialize the model for ensemble distillation
by utilizing the fusion result of \algopt from the previous communication round.
The noticeable performance differences illustrated in Table~\ref{tab:resnet_8_cifar10_importance_of_init}
identify the importance of using the uniformly averaged model\footnote{
	The related preprints~\cite{li2019fedmd,chang2019cronus}
	are closer to the second initialization scheme.
	They do not or cannot introduce the uniformly averaged model (on the server) into the federated learning pipeline;
	instead, they only utilize the averaged logits (on the same data) for each client's local training.
} (from the current communication round)
as a starting model for better ensemble distillation.

\begin{table}[!h]
	\centering
	\caption{\small
		\textbf{Understanding the importance of model initialization in \algopt}, on CIFAR-10 with ResNet-8.
		For each communication round ($100$ in total),
		$40\%$ of the total $20$ clients are randomly selected.
		The scheme ``from average'' indicates initializing the model
		for ensemble distillation from the uniformly averaged model of this communication round;
		while the scheme ``from previous'' instead uses the fused model from the previous communication round
		as the starting point.
		We report the top-1 accuracy (on three different seeds) on the test dataset.
	}
	\label{tab:resnet_8_cifar10_importance_of_init}
	\resizebox{.7\textwidth}{!}{%
		\begin{tabular}{ccccc}
			\toprule
			& \multicolumn{2}{c}{$\alpha \!=\! 1$} & \multicolumn{2}{c}{$\alpha \!=\! 0.1$}  \\ \cmidrule(lr){2-3} \cmidrule(lr){4-5}
			local training epochs & from average     & from previous    & from average     & from previous    \\ \midrule
			40                    & $80.43 \pm 0.37$ & $74.13 \pm 0.91$ & $71.84 \pm 0.86$ & $62.94 \pm 1.12$ \\
			80                    & $81.17 \pm 0.53$ & $76.37 \pm 0.60$ & $74.73 \pm 0.65$ & $67.88 \pm 0.90$ \\
			\bottomrule
		\end{tabular}%
	}
\end{table}

\paragraph{The performance gain in~\algopt.}
To distinguish the benefits of FedDF from the small learning rate (during the local training)
or Adam optimizer (used for ensemble distillation in~\algopt),
we report the results of using
Adam (lr=1e-3) for both local training and model fusion (over three seeds), on CIFAR-10 with ResNet-8,
in Table~\ref{tab:resnet_8_cifar10_different_local_training_scheme}.
Improving the local training through Adam might help \FL
but the benefit vanishes with higher data heterogeneity (e.g. $\alpha = 0.1$).
Performance gain from \algopt is robust to data heterogeneity
and also orthogonal to effects of learning rates and Adam.

\begin{table}[!h]
	\centering
	\caption{\small
		\textbf{Understanding the impact of local training quality}, on CIFAR-10 with ResNet-8.
		For each communication round ($100$ in total),
		$40\%$ of the total $20$ clients are randomly selected for $40$ local epochs.
		We report the top-1 accuracy (on three different seeds) on the test dataset.
	}
	\label{tab:resnet_8_cifar10_different_local_training_scheme}
	\resizebox{.6\textwidth}{!}{%
		\begin{tabular}{ccccc}
			\toprule
			& \multicolumn{2}{c}{$\alpha \!=\! 1$} & \multicolumn{2}{c}{$\alpha \!=\! 0.1$}  \\ \cmidrule(lr){2-3} \cmidrule(lr){4-5}
			local client training scheme & \algopt & \fedavg & \algopt & \fedavg \\ \midrule
			SGD                          & $80.27$ & $72.73$ & $71.52$ & $62.44$ \\
			Adam                         & $83.32$ & $78.13$ & $72.58$ & $62.53$ \\
			\bottomrule
		\end{tabular}%
	}
\end{table}

Table~\ref{tab:resnet_8_cifar10_effect_of_different_optimizers_for_distillation}
examines the effect of different optimization schemes on the quality of ensemble distillation.
We can witness that with two extra hyper-parameters (sampling scale for SWAG and the number of models to be sampled),
SWAG can slightly improve the distillation performance.
In contrast, we use Adam with default hyper-parameters as our design choice in~\algopt:
it demonstrates similar performance (compared to the choice of SWAG) with trivial tuning overhead.

\begin{table}[!h]
	\centering
	\caption{\small
		\textbf{On the impact of using different optimizers for ensemble distillation} in~\algopt,
		on CIFAR-10 with ResNet-8.
		For each communication round ($100$ in total),
		$40\%$ of the total $20$ clients are randomly selected for $40$ local epochs.
		We report the top-1 accuracy (on three different seeds) on the test dataset.
		``SGD'' uses the same learning rate scheduler as our ``Adam'' choice (i.e.\ cosine annealing),
		and with fine-tuned initial learning rate.
		``SWAG'' refers to the mechanism to form an approximated posterior distribution~\cite{maddox2019simple}
		where more models can be sampled from,
		and \cite{chen2020feddistill} further propose to
		use SWAG on the received client models for better ensemble distillation;
		our default design resorts to directly averaged logits
		from received local clients with Adam optimizer.
		To ensure a fair comparison,
		we use the same distillation dataset as in~\algopt (i.e., CIFAR-100) for ``SWAG''~\cite{chen2020feddistill}.
		We fine-tune other hyper-parameters in ``SWAG'':
		we use all received client models and $10$ sampled models from Gaussian distribution
		(as suggested in~\cite{chen2020feddistill}) for the ensemble distillation.
	}
	\label{tab:resnet_8_cifar10_effect_of_different_optimizers_for_distillation}
	\resizebox{.6\textwidth}{!}{%
		\begin{tabular}{ccccc}
			\toprule
			& \multicolumn{2}{c}{$\alpha \!=\! 1$} & \multicolumn{2}{c}{$\alpha \!=\! 0.1$}  \\ \cmidrule(lr){2-3} \cmidrule(lr){4-5}
			optimizer used on the server                    & \algopt & \fedavg & \algopt & \fedavg \\ \midrule
			SGD                                             & $76.68$ & $72.73$ & $57.33$ & $62.44$ \\
			Adam (our default design)                       & $80.27$ & $72.73$ & $71.52$ & $62.44$ \\
			SWAG~\cite{maddox2019simple,chen2020feddistill} & $80.84$ & $72.73$ & $72.40$ & $62.44$ \\
			\bottomrule
		\end{tabular}%
	}
\end{table}

\paragraph{The compatibility of~\algopt with other methods.}
Table~\ref{tab:resnet_8_cifar10_compatibility_with_other_methods} justifies the compatibility of~\algopt.
Our empirical results demonstrate a significant performance gain of~\algopt over the \fedavg,
even in the case of using local proximal regularizer to
avoid catastrophically over-fitting the heterogeneous local data,
which reduces the diversity of local models that \algopt benefits from.

\begin{table}[!h]
	\centering
	\caption{\small
		\textbf{The compatibility of \algopt with other training schemes}, on CIFAR-10 with ResNet-8.
		For each communication round ($100$ in total),
		$40\%$ of the total $20$ clients are randomly selected for $40$ local epochs.
		We consider the fine-tuned proximal penalty from FedDF.
		We report the top-1 accuracy (on three different seeds) on the test dataset.
	}
	\label{tab:resnet_8_cifar10_compatibility_with_other_methods}
	\resizebox{.6\textwidth}{!}{%
		\begin{tabular}{ccccc}
			\toprule
			& \multicolumn{2}{c}{$\alpha \!=\! 1$} & \multicolumn{2}{c}{$\alpha \!=\! 0.1$}  \\ \cmidrule(lr){2-3} \cmidrule(lr){4-5}
			local client training scheme & \algopt & \fedavg & \algopt & \fedavg \\ \midrule
			SGD                          & $80.27$ & $72.73$ & $71.52$ & $62.44$ \\
			SGD + proximal penalty       & $80.56$ & $76.11$ & $71.64$ & $62.53$ \\
			\bottomrule
		\end{tabular}%
	}
\end{table}

\subsubsection{Comparison with \fedavg}
Figure~\ref{fig:understanding_learning_behaviors_resnet8_cifar10_kt_different_non_iid_degrees_complete}
complements Figure~\ref{fig:understanding_learning_behaviors_resnet8_cifar10_kt_different_non_iid_degrees}
in the main text
and presents a thorough comparison between \fedavg and \algopt,
for a variety of different local training epochs, data fractions, non-\iid degrees.
The detailed learning curves of the cases in this figure
are visualized in
Figure~\ref{fig:appendix_understanding_learning_behaviors_resnet8_cifar10_kt_non_iid_100_localdata},
Figure~\ref{fig:appendix_understanding_learning_behaviors_resnet8_cifar10_kt_non_iid_1_localdata},
and Figure~\ref{fig:appendix_understanding_learning_behaviors_resnet8_cifar10_kt_non_iid_001_localdata}.

\begin{figure*}[!h]
	\centering
	\subfigure[\small
		$\alpha \!=\! 100$.
	]{
		\includegraphics[width=0.31\textwidth,]{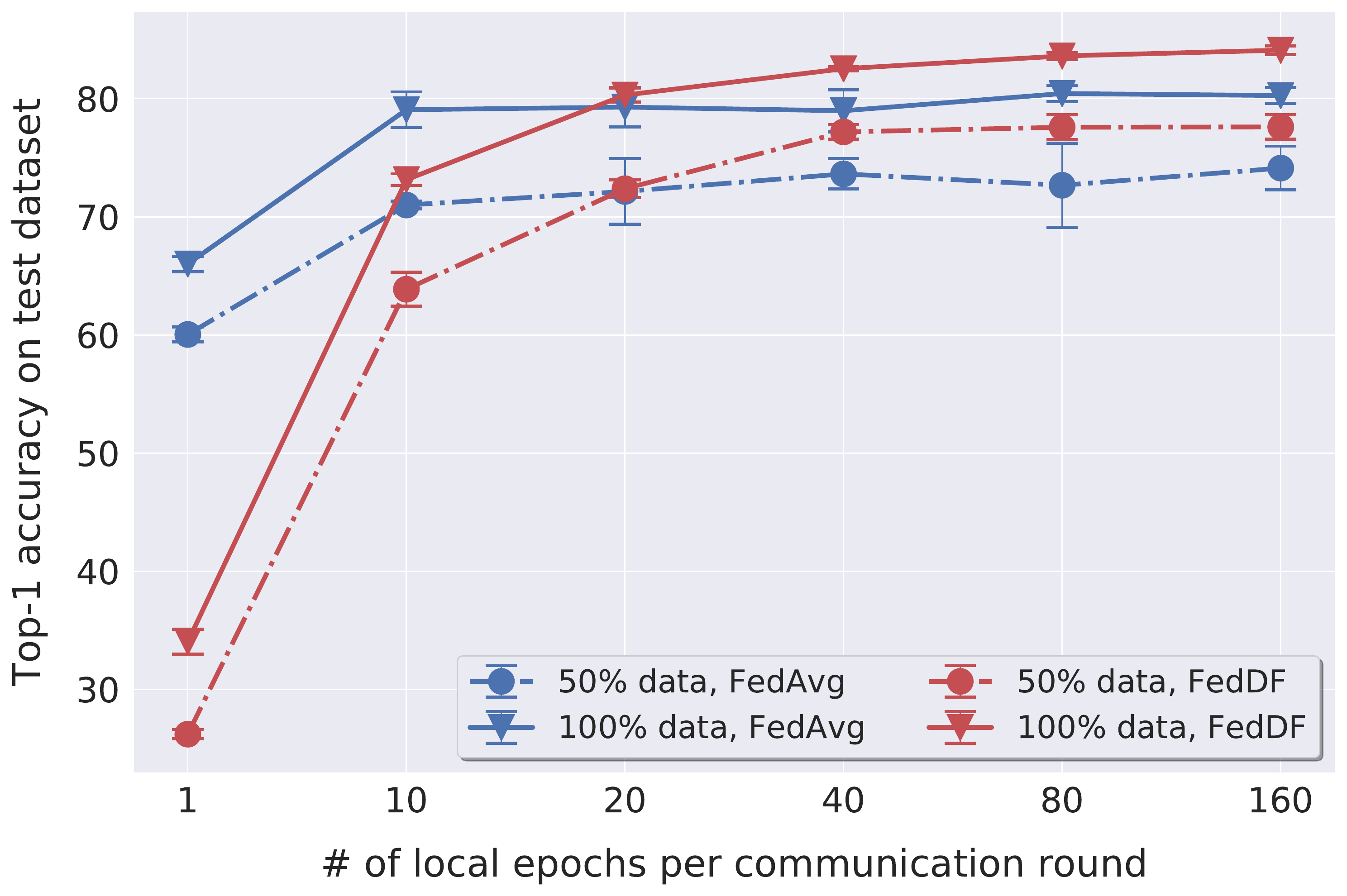}
		\label{fig:resnet8_cifar10_non_iid_100_fedavg_vs_kt_results}
	}
	\hfill
	\subfigure[\small
		$\alpha \!=\! 1$.
	]{
		\includegraphics[width=0.31\textwidth,]{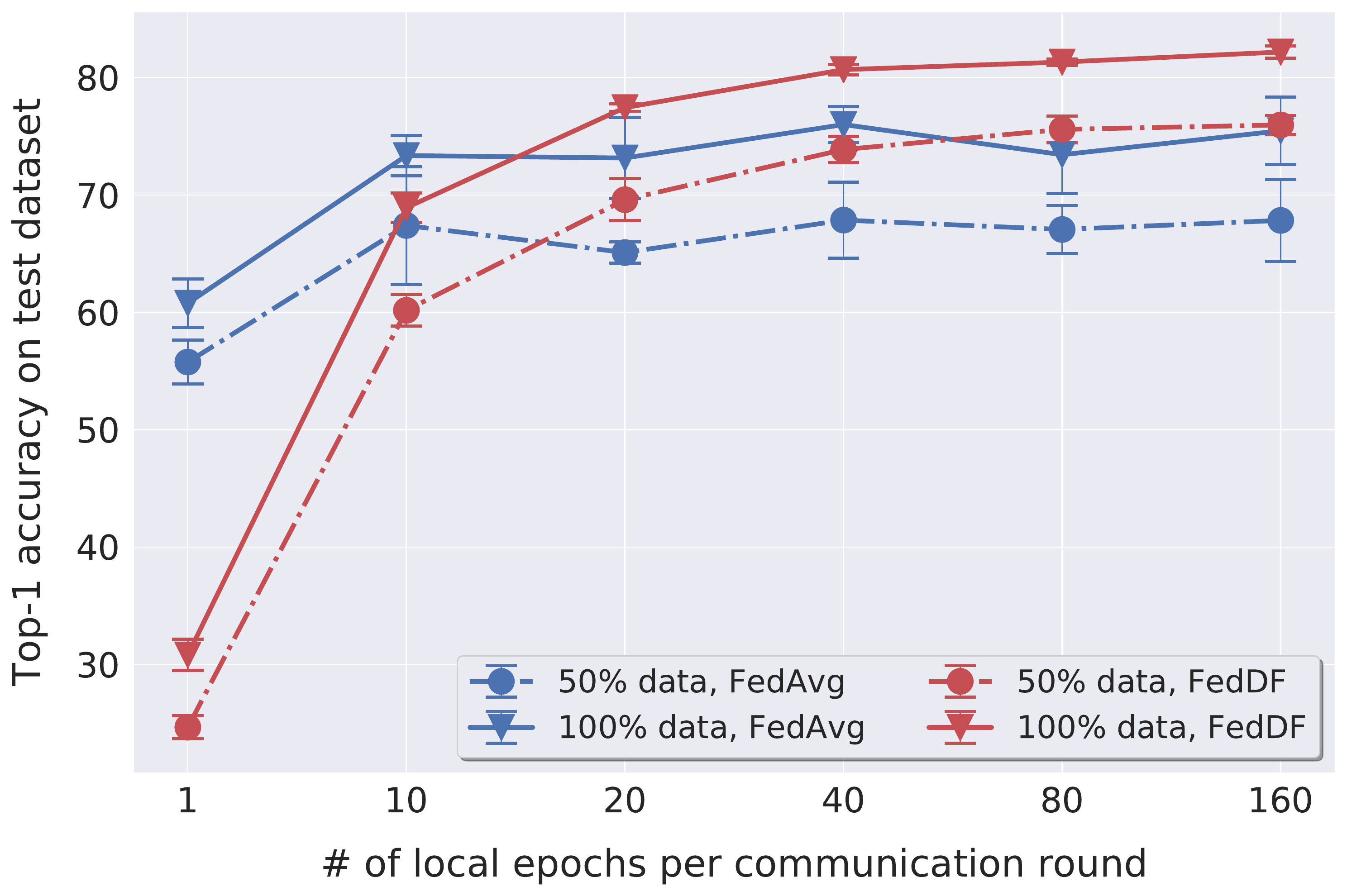}
		\label{fig:resnet8_cifar10_non_iid_1_fedavg_vs_kt_results}
	}
	\hfill
	\subfigure[\small
		$\alpha \!=\! 0.01$.
	]{
		\includegraphics[width=0.31\textwidth,]{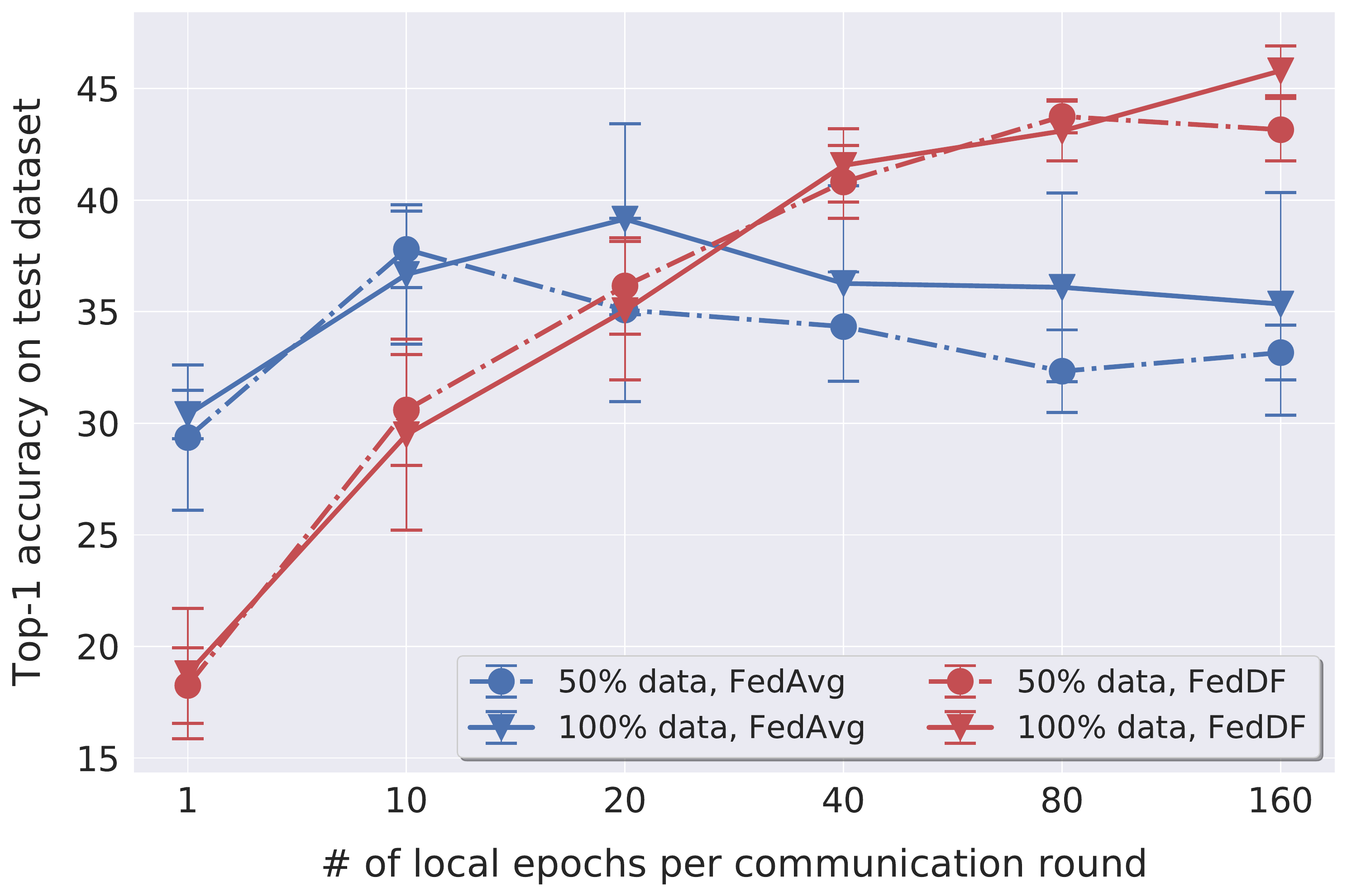}
		\label{fig:resnet8_cifar10_non_iid_001_fedavg_vs_kt_results}
	}
	\vspace{-0.5em}
	\caption{\small
		The \textbf{test performance} of \textbf{\algopt} and \textbf{\fedavg} on \textbf{CIFAR-10} with \textbf{ResNet-8},
		for different local data non-iid degrees $\alpha$, data fractions,
		and \# of local epochs per communication round.
		For each communication round ($100$ in total),
		$40\%$ of the total $20$ clients are randomly selected.
		We report the top-1 accuracy (on three different seeds) on the test dataset.
		This Figure complements Figure~\ref{fig:understanding_learning_behaviors_resnet8_cifar10_kt_different_non_iid_degrees}.
	}
	\vspace{-0.5em}
	\label{fig:understanding_learning_behaviors_resnet8_cifar10_kt_different_non_iid_degrees_complete}
\end{figure*}

\begin{figure*}[!h]
	\centering
	\subfigure[\small
		The learning behaviors of \algopt and \fedavg.
		We evaluate different \# of local epochs on $100\%$ local data.
	]{
		\includegraphics[width=0.475\textwidth,]{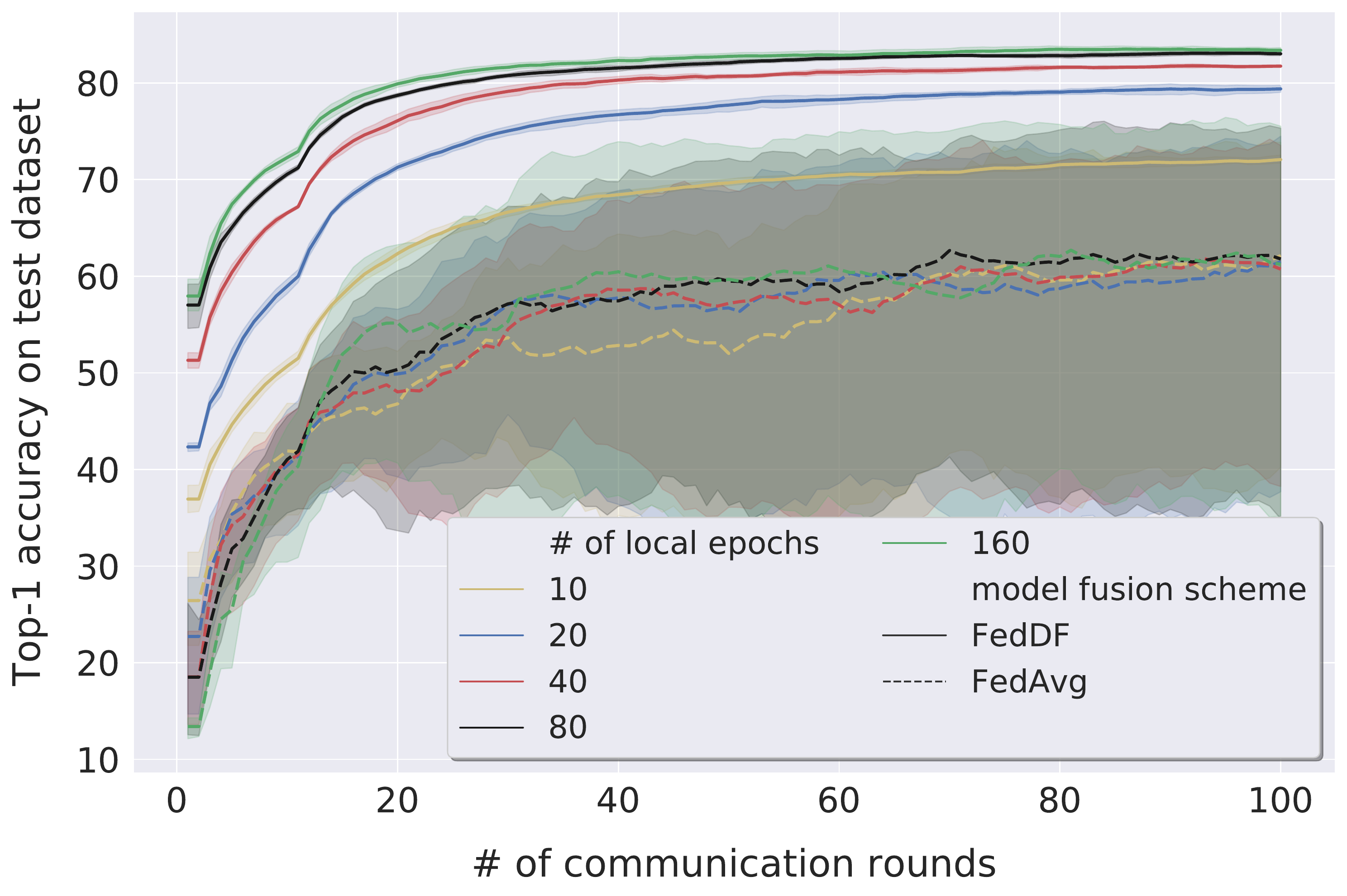}
		\label{fig:appendix_resnet8_cifar10_1_localdata_non_iid_100_fedavg_vs_kt_training_curves}
	}
	\hfill
	\subfigure[\small
		The fused model performance before (i.e.\ line 6 in Algorithm~\ref{alg:homogeneous_framework}) and after \algopt (i.e.\ line 10 in Algorithm~\ref{alg:homogeneous_framework}).
		We evaluate different \# of local epochs on $100\%$ local data.
	]{
		\includegraphics[width=0.475\textwidth,]{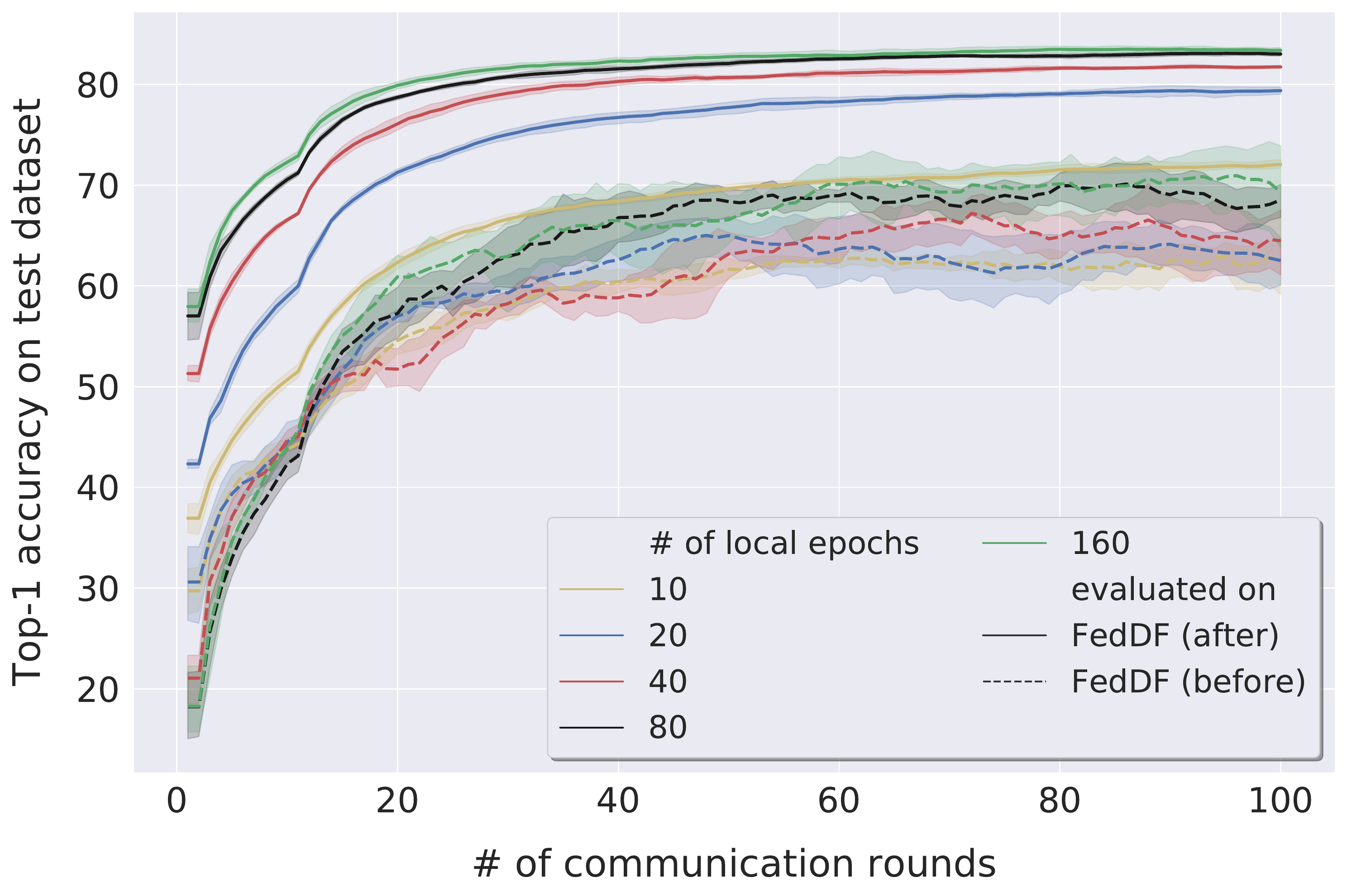}
		\label{fig:appendix_resnet8_cifar10_1_localdata_non_iid_100_kt_agg_vs_post_agg}
	}
	\subfigure[\small
		The learning behaviors of \algopt and \fedavg.
		We evaluate different \# of local epochs on $50\%$ local data.
	]{
		\includegraphics[width=0.475\textwidth,]{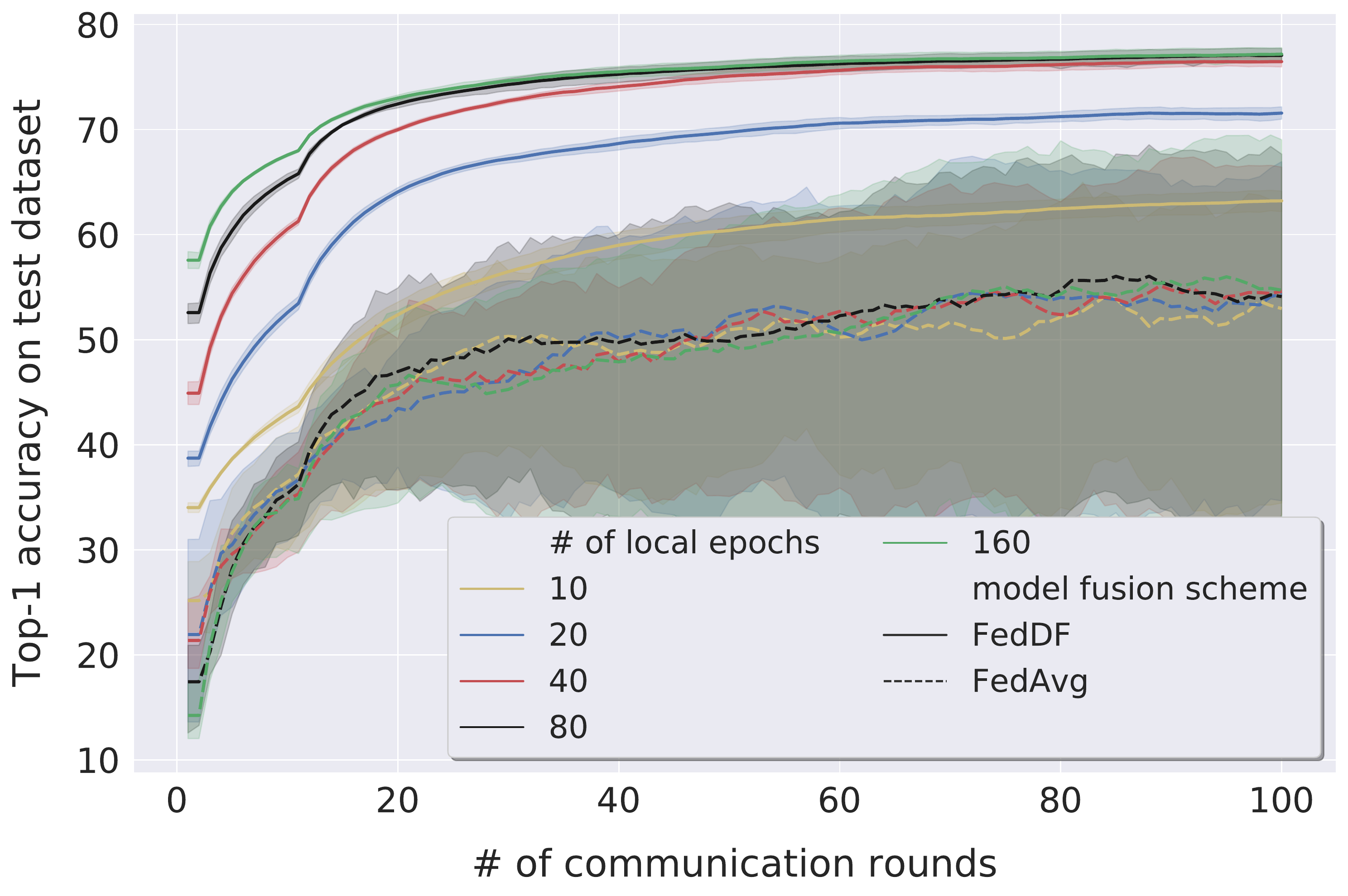}
		\label{fig:appendix_resnet8_cifar10_05_localdata_non_iid_100_fedavg_vs_kt_training_curves}
	}
	\hfill
	\subfigure[\small
		The fused model performance before (i.e.\ line 6 in Algorithm~\ref{alg:homogeneous_framework}) and after \algopt (i.e.\ line 10 in Algorithm~\ref{alg:homogeneous_framework}).
		We evaluate different \# of local epochs on $50\%$ local data.
	]{
		\includegraphics[width=0.475\textwidth,]{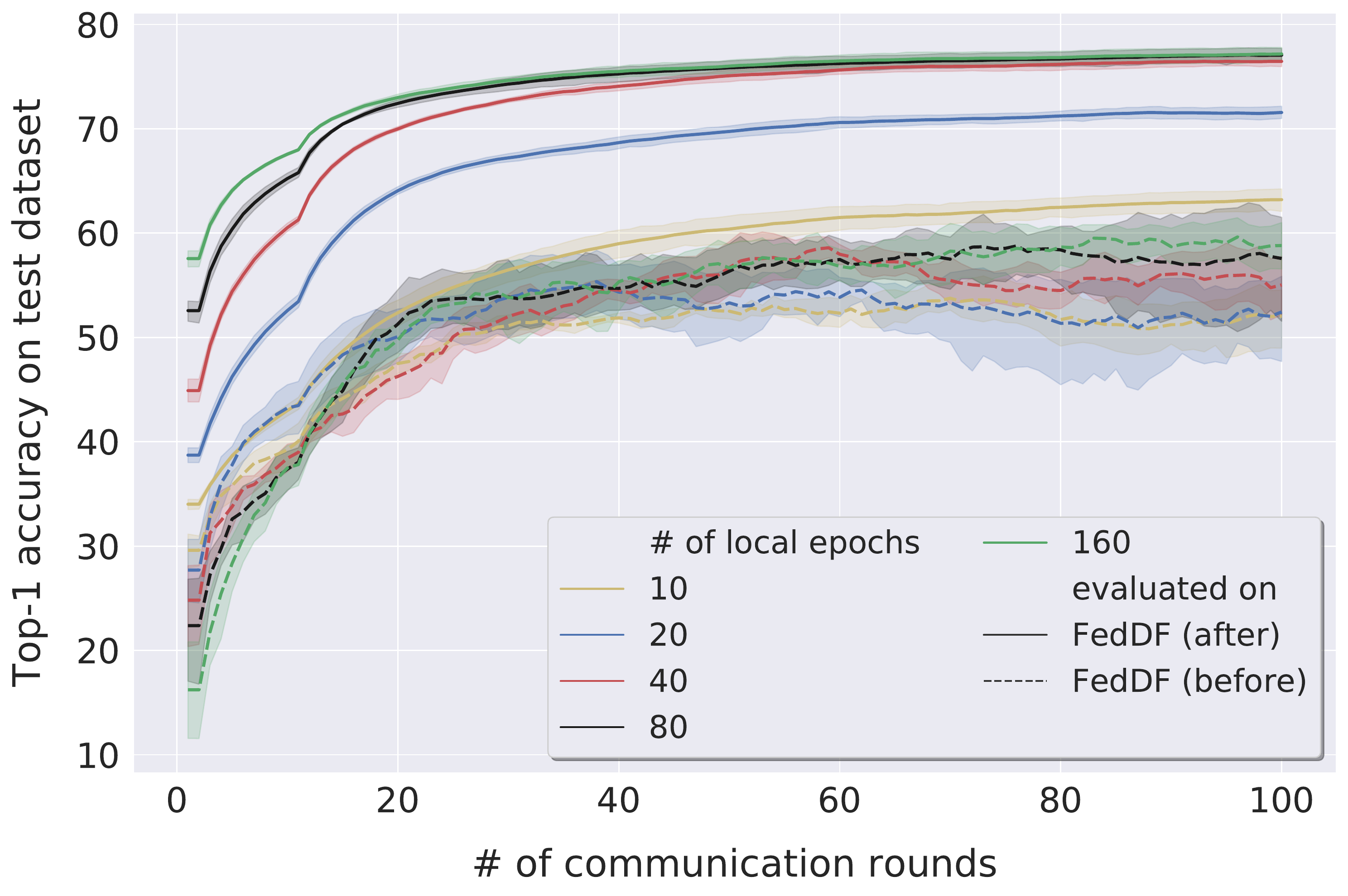}
		\label{fig:appendix_resnet8_cifar10_05_localdata_non_iid_100_kt_agg_vs_post_agg}
	}
	\vspace{-0.5em}
	\caption{\small
		\textbf{Understanding the learning behaviors of \algopt} on CIFAR-10 with ResNet-8 for $\alpha \!=\! 100$.
		For each communication round ($100$ in total),
		$40\%$ of the total $20$ clients are randomly selected.
		We report the top-1 accuracy (on three different seeds) on the test dataset.
	}
	\vspace{-0.5em}
	\label{fig:appendix_understanding_learning_behaviors_resnet8_cifar10_kt_non_iid_100_localdata}
\end{figure*}

\begin{figure*}[!h]
	\centering
	\subfigure[\small
		The learning behaviors of \algopt and \fedavg.
		We evaluate different \# of local epochs on $100\%$ local data.
	]{
		\includegraphics[width=0.475\textwidth,]{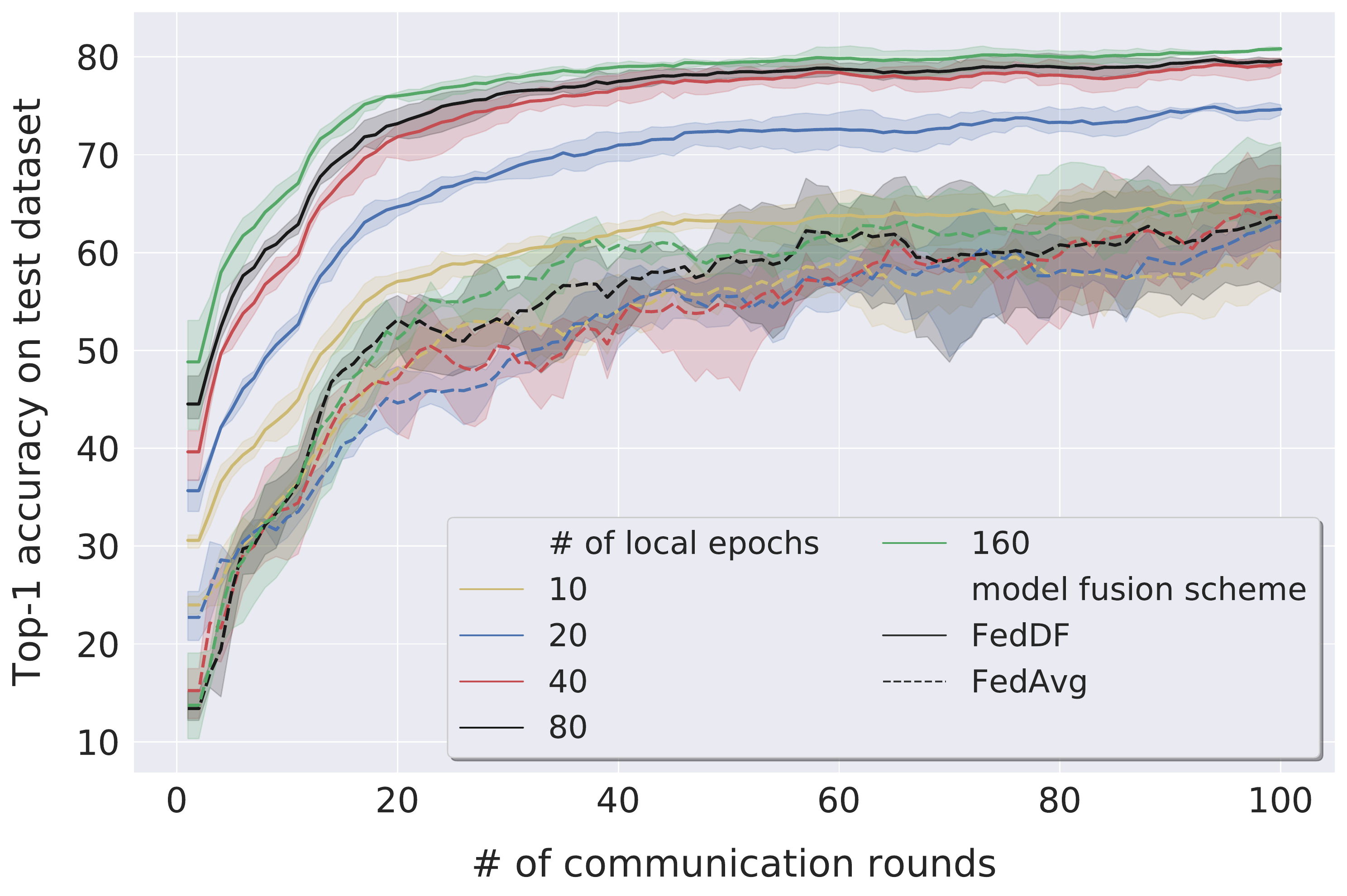}
		\label{fig:appendix_resnet8_cifar10_1_localdata_non_iid_1_fedavg_vs_kt_training_curves}
	}
	\hfill
	\subfigure[\small
		The fused model performance before (i.e.\ line 6 in Algorithm~\ref{alg:homogeneous_framework}) and after \algopt (i.e.\ line 10 in Algorithm~\ref{alg:homogeneous_framework}).
		We evaluate different \# of local epochs on $100\%$ local data.
	]{
		\includegraphics[width=0.475\textwidth,]{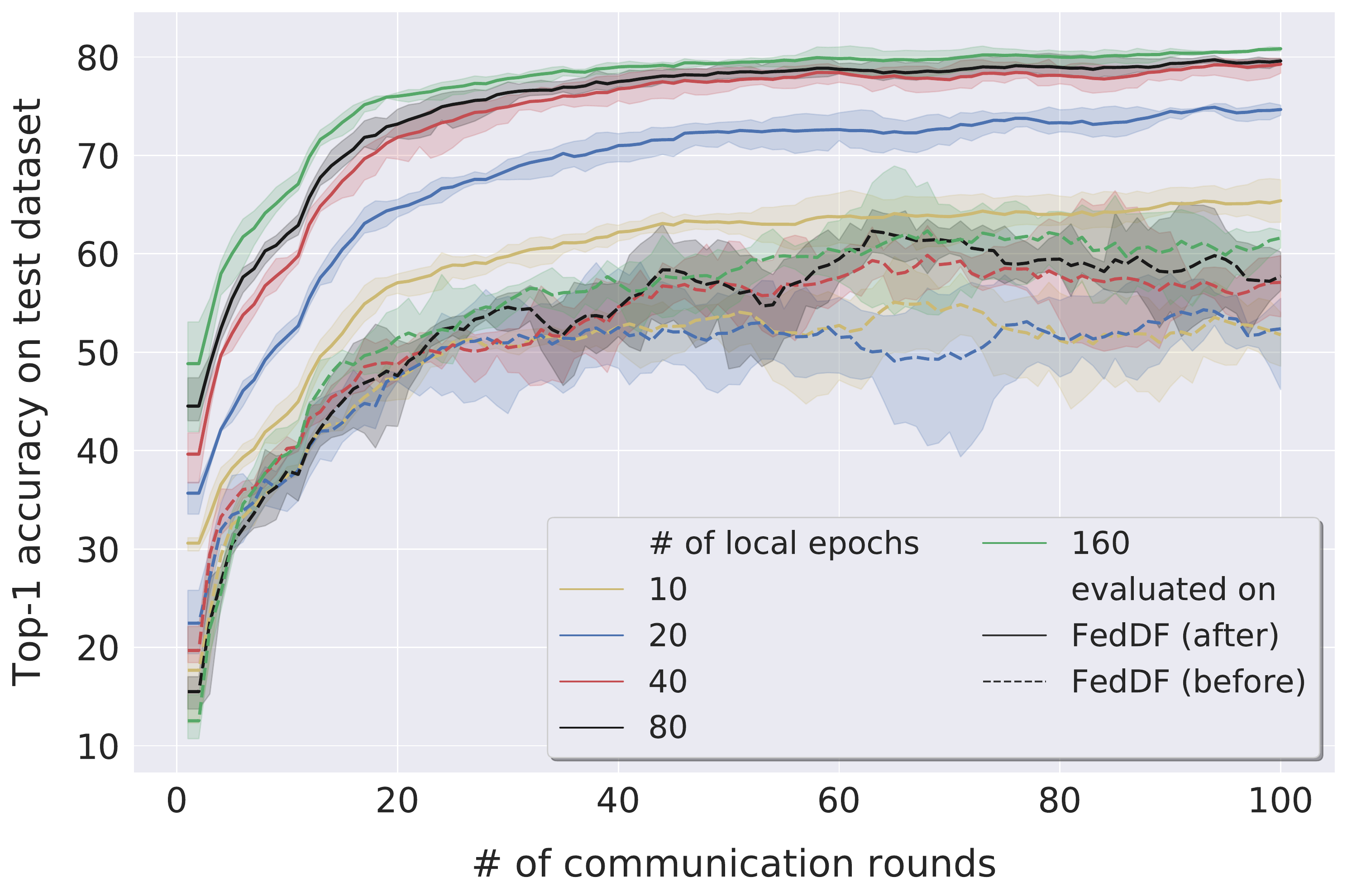}
		\label{fig:appendix_resnet8_cifar10_1_localdata_non_iid_1_kt_agg_vs_post_agg}
	}
	\subfigure[\small
		The learning behaviors of \algopt and \fedavg.
		We evaluate different \# of local epochs on $50\%$ local data.
	]{
		\includegraphics[width=0.475\textwidth,]{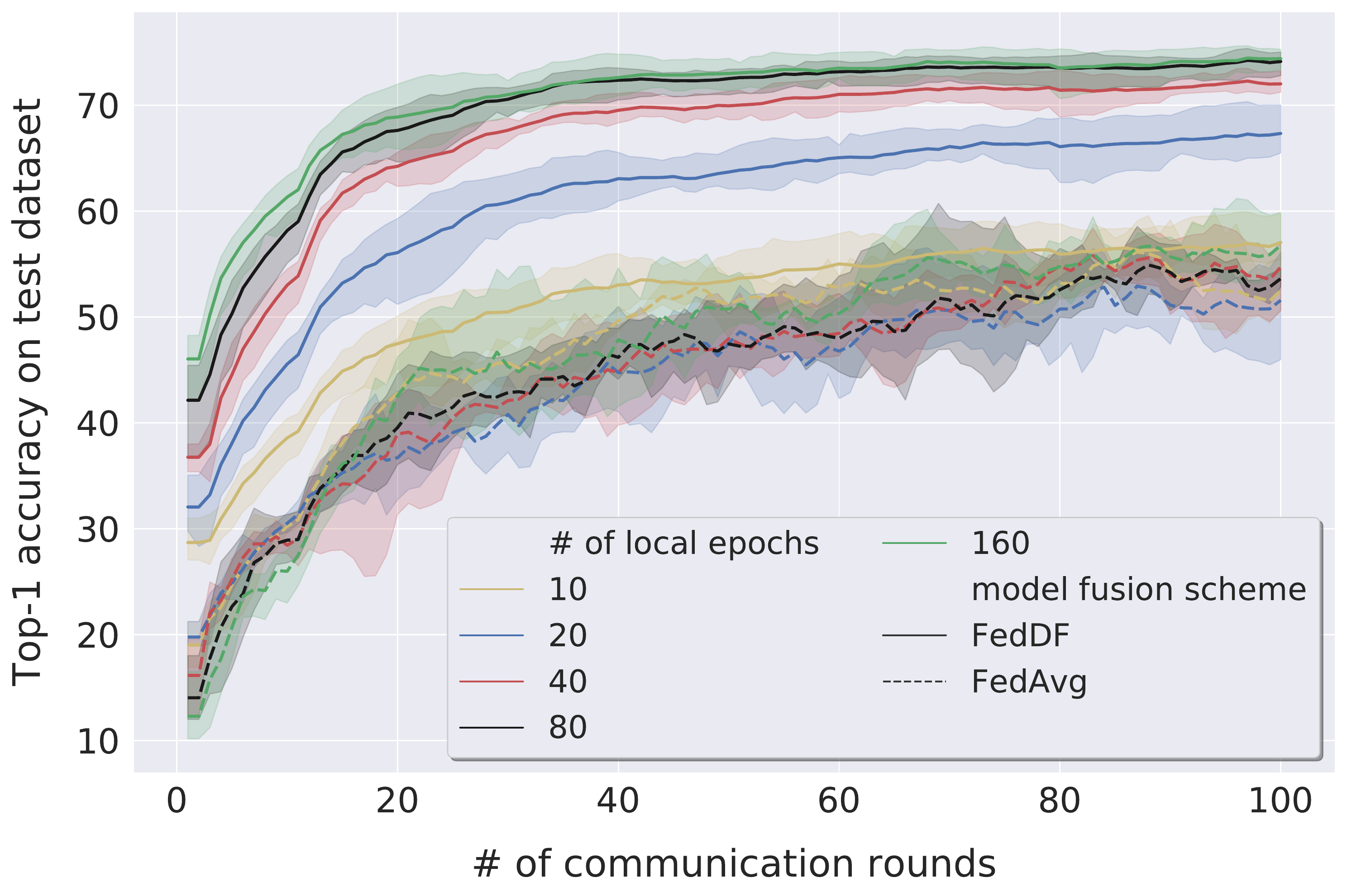}
		\label{fig:appendix_resnet8_cifar10_05_localdata_non_iid_1_fedavg_vs_kt_training_curves}
	}
	\hfill
	\subfigure[\small
		The fused model performance before (i.e.\ line 6 in Algorithm~\ref{alg:homogeneous_framework}) and after \algopt (i.e.\ line 10 in Algorithm~\ref{alg:homogeneous_framework}).
		We evaluate different \# of local epochs on $50\%$ local data.
	]{
		\includegraphics[width=0.475\textwidth,]{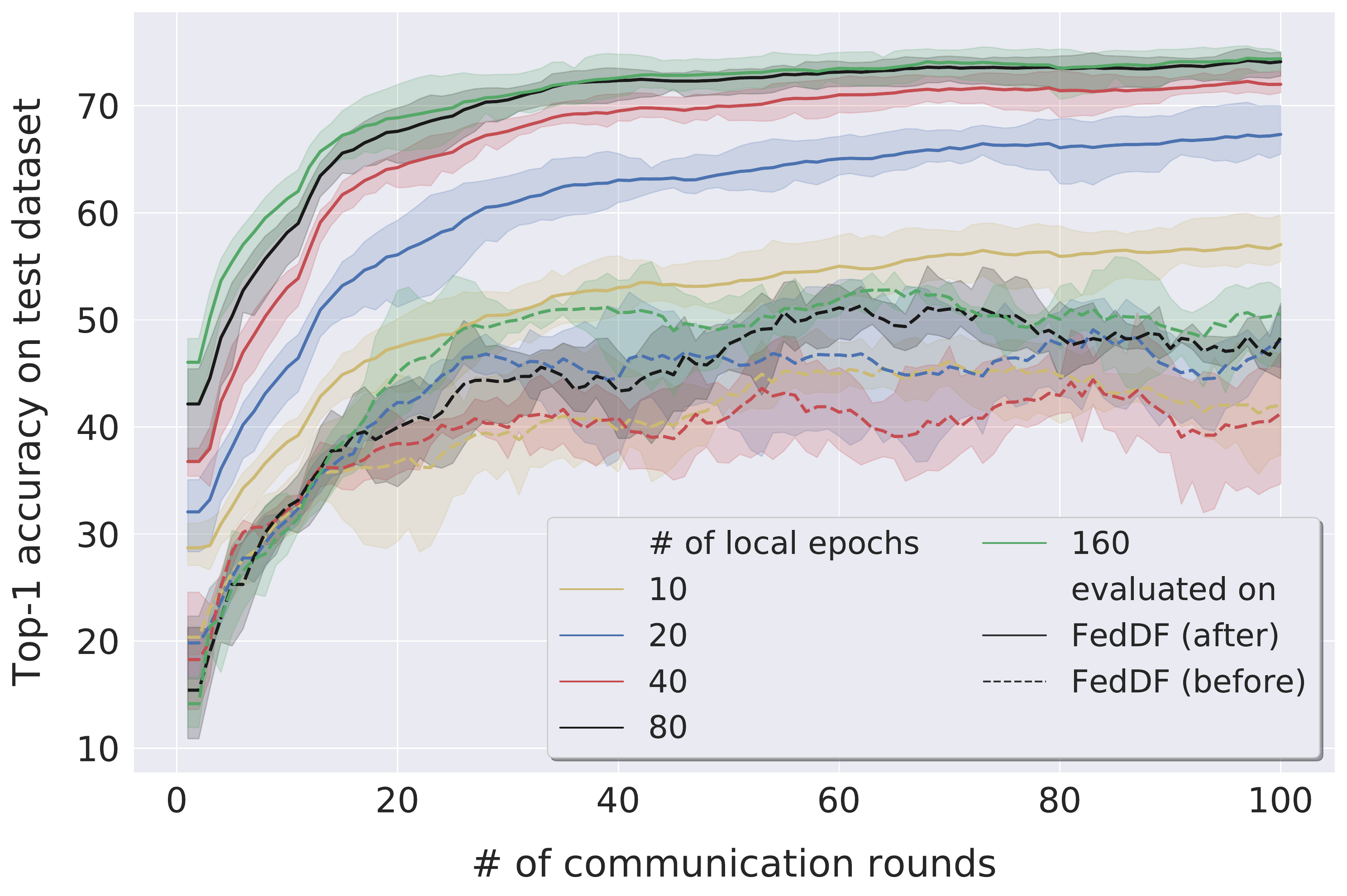}
		\label{fig:appendix_resnet8_cifar10_05_localdata_non_iid_1_kt_agg_vs_post_agg}
	}
	\vspace{-0.5em}
	\caption{\small
		\textbf{Understanding the learning behaviors of \algopt} on CIFAR-10 with ResNet-8 for $\alpha \!=\! 1$.
		For each communication round ($100$ in total),
		$40\%$ of the total $20$ clients are randomly selected.
		We report the top-1 accuracy (on three different seeds) on the test dataset.
	}
	\vspace{-0.5em}
	\label{fig:appendix_understanding_learning_behaviors_resnet8_cifar10_kt_non_iid_1_localdata}
\end{figure*}

\begin{figure*}[!h]
	\centering
	\subfigure[\small
		The learning behaviors of \algopt and \fedavg.
		We evaluate different \# of local epochs on $100\%$ local data.
	]{
		\includegraphics[width=0.475\textwidth,]{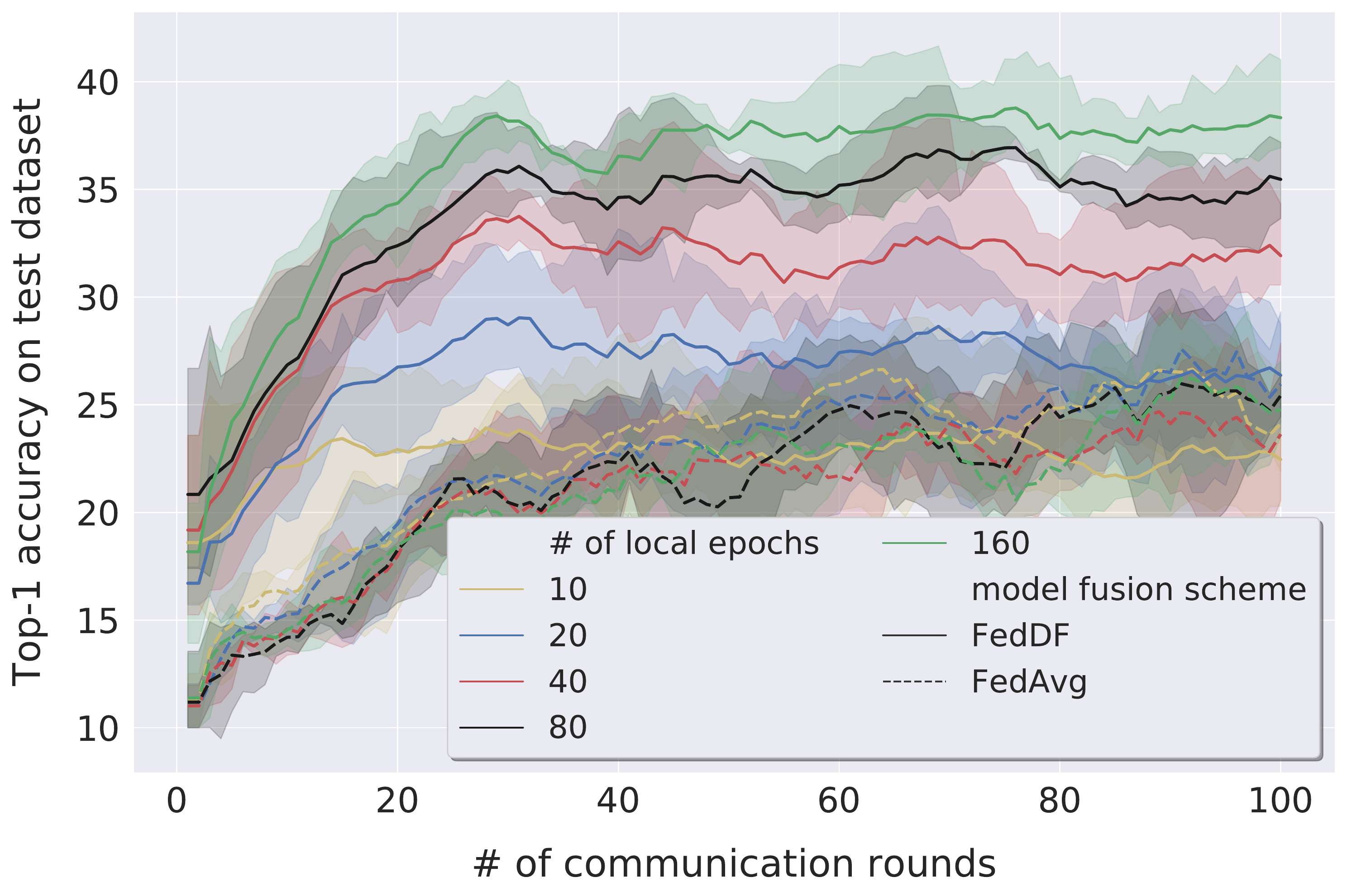}
		\label{fig:appendix_resnet8_cifar10_1_localdata_non_iid_001_fedavg_vs_kt_training_curves}
	}
	\hfill
	\subfigure[\small
		The fused model performance before (i.e.\ line 6 in Algorithm~\ref{alg:homogeneous_framework}) and after \algopt (i.e.\ line 10 in Algorithm~\ref{alg:homogeneous_framework}).
		We evaluate different \# of local epochs on $100\%$ local data.
	]{
		\includegraphics[width=0.475\textwidth,]{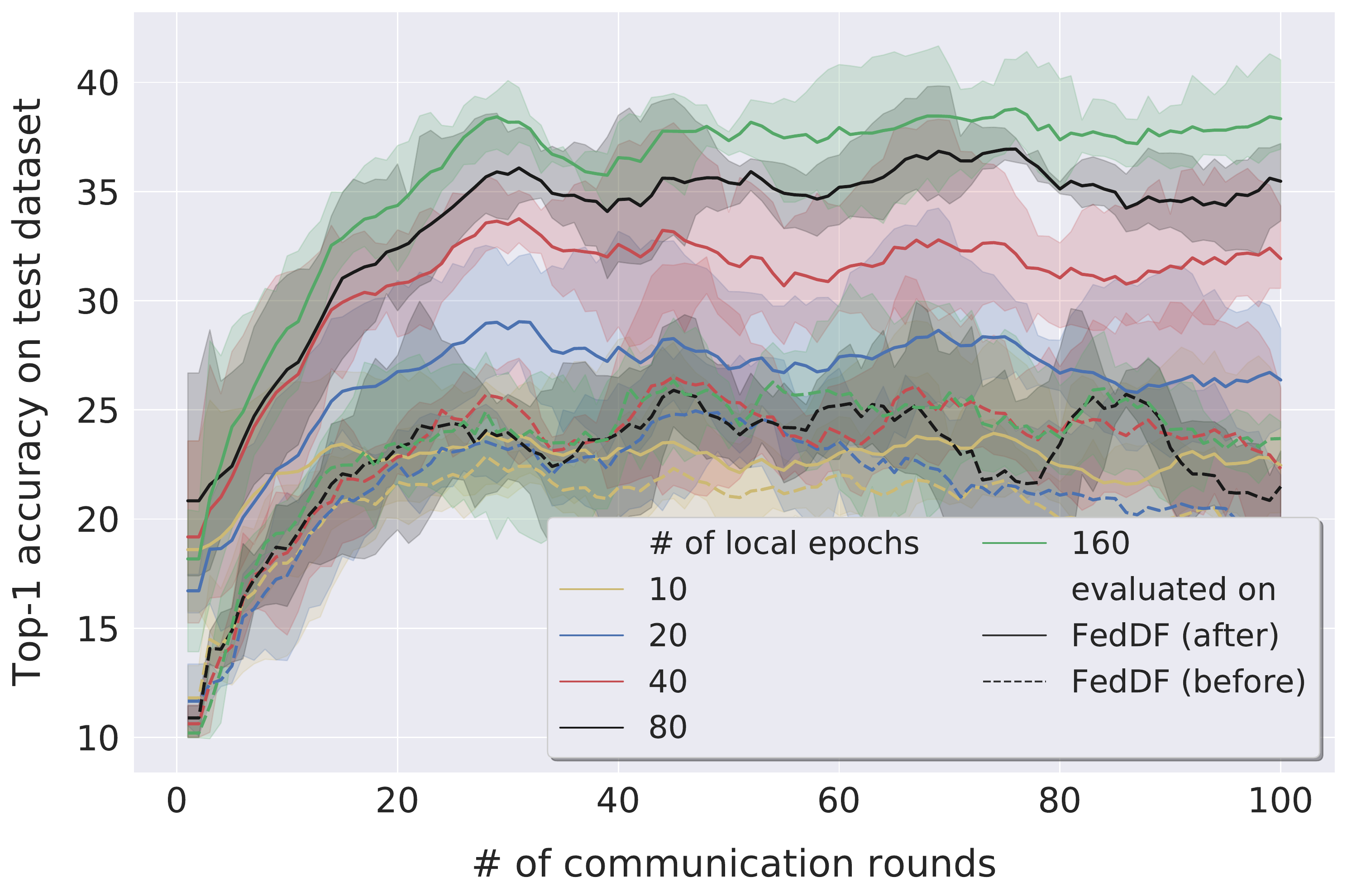}
		\label{fig:appendix_resnet8_cifar10_1_localdata_non_iid_001_kt_agg_vs_post_agg}
	}
	\subfigure[\small
		The learning behaviors of \algopt and \fedavg.
		We evaluate different \# of local epochs on $50\%$ local data.
	]{
		\includegraphics[width=0.475\textwidth,]{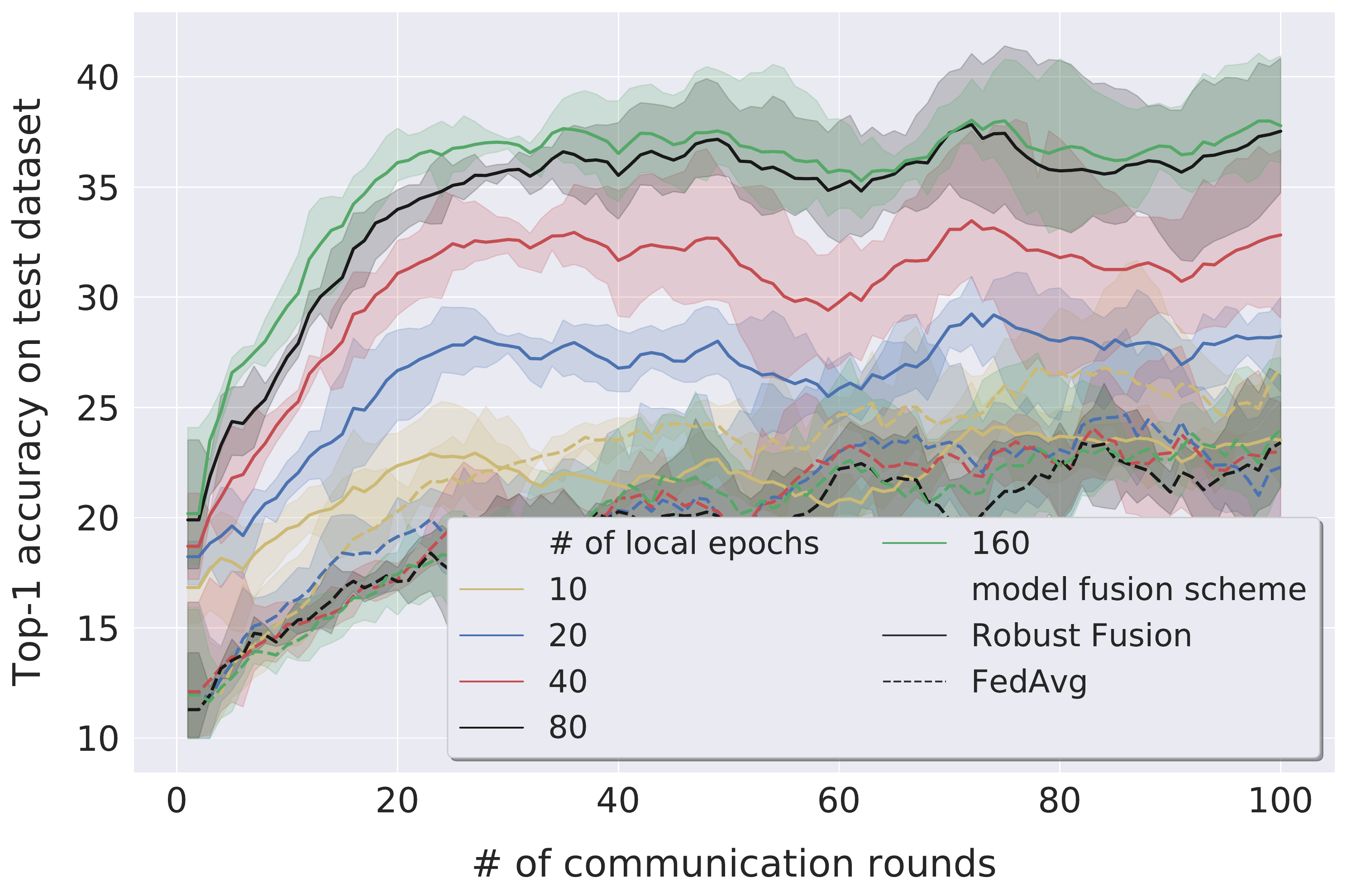}
		\label{fig:appendix_resnet8_cifar10_05_localdata_non_iid_001_fedavg_vs_kt_training_curves}
	}
	\hfill
	\subfigure[\small
		The fused model performance before (i.e.\ line 6 in Algorithm~\ref{alg:homogeneous_framework}) and after \algopt (i.e.\ line 10 in Algorithm~\ref{alg:homogeneous_framework}).
		We evaluate different \# of local epochs on $50\%$ local data.
	]{
		\includegraphics[width=0.475\textwidth,]{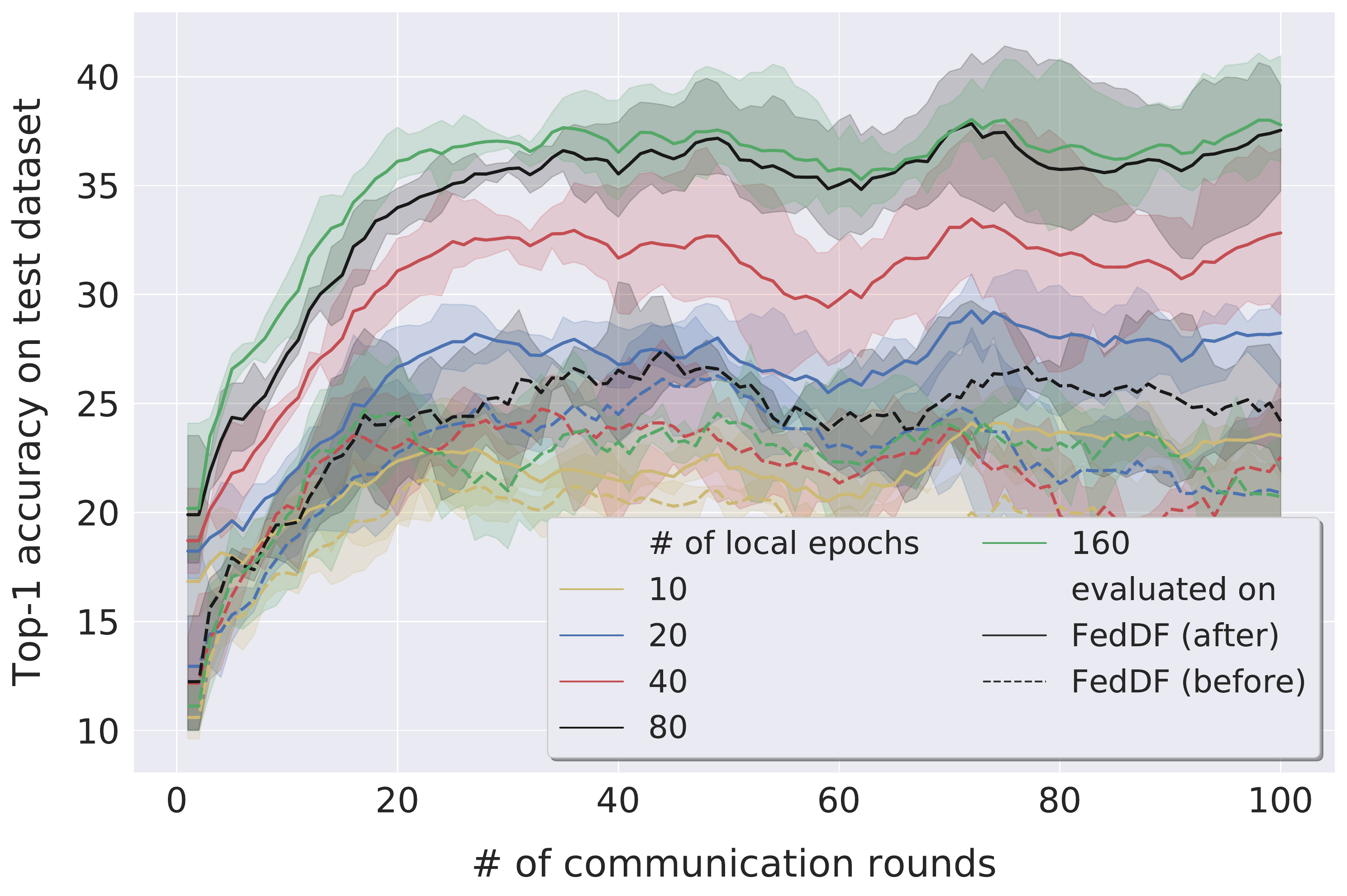}
		\label{fig:appendix_resnet8_cifar10_05_localdata_non_iid_001_kt_agg_vs_post_agg}
	}
	\vspace{-0.5em}
	\caption{\small
		\textbf{Understanding the learning behaviors of \algopt} on CIFAR-10 with ResNet-8 for $\alpha \!=\! 0.01$.
		For each communication round ($100$ in total),
		$40\%$ of the total $20$ clients are randomly selected.
		We report the top-1 accuracy (on three different seeds) on the test dataset.
	}
	\vspace{-0.5em}
	\label{fig:appendix_understanding_learning_behaviors_resnet8_cifar10_kt_non_iid_001_localdata}
\end{figure*}

\clearpage
\section{Details on Generalization Bounds} \label{sec:detailed_bounds}
The derivation of the generalization bound starts from the following notations.
In FL, each client has access to its own data distribution $\cD_i$
over domain $\Xi := \cX \times \cY$,
where $\cX \in \R^d$ is the input space and $\cY$ is the output space.
The global distribution on the server is denoted as $\cD$.
For the empirical distribution by the given dataset, we assume that each local model has access to an equal amount ($m$) of local data.
Thus, each local empirical distribution has equal contribution to the global
empirical distribution: $ \hat{\cD} = \frac{1}{K} \sum_{k=1}^K \hat{\cD}_k$,
where $ \hat{\cD}_k $ denotes the empirical distribution from client $ k $.

For our analysis we assume a binary classification task,
with hypothesis $h$ as a function $h: \cX \rightarrow \{ 0, 1 \}$.
The loss function of the task is defined as $\ell( h (\xx), y) = \abs{ \hat{y} - y }$, where $\hat{y} := h(\xx)$.
Note that $ \ell(\hat{y}, y ) $ is convex with respect to $ \hat{y} $.
We denote $ \argmin_{h \in \cH} L_{\hat{\cD}} (h) $ by $ h_{\hat{\cD}} $.

The theorem below leverages the domain measurement tools developed in multi-domain learning theory~\cite{ben2010theory}
and provides insights for the generalization bound of the ensemble\footnote{
	The uniformly weighted hypothesis average in multi-source adaptation
	is equivalent to the ensemble of a list of models,
	by considering the output of each hypothesis/model.
} of local models (trained on local empirical distribution $\hat{\cD}_i$).

\begin{theorem}
	\label{thm:formal_risk_upper_bound_for_ensemble_model}
	Let $\cH$ be a hypothesis class with $\text{VCdim} (\cH) \leq d < \infty $.
	The difference between $ L_\cD ( \frac{1}{K} \sum_k h_{\hat{\cD}_k} ) $ and $ L_{\hat{\cD}} ( h_{\hat{\cD}} ) $,
	i.e., the distance between the risk of our ``ensembled'' model in \algopt
	and the empirical risk of the ``virtual ERM'' with access to all local data,
	can be bounded with probability at least $ 1 - \delta $:
	\begin{small}
		\begin{align*}
			\begin{split}
				L_\cD \Big( \frac{1}{K} \sum_k h_{\hat{\cD}_k} \Big)
				&\leq
				L_{\hat{\cD}} ( h_{\hat{\cD}} )
				+ \frac{ 4 + \sqrt{ \log ( \tau_{\cH} (2m) ) } }{ \delta / K \sqrt{ 2m } }
				+ \frac{1}{K} \sum_{k} \left( \frac{1}{2} d_{\cH \Delta \cH} (\cD_k, \cD) + \lambda_k \right)
				\,,
			\end{split}
		\end{align*}
		where
		$ \hat{\cD} \!=\! \frac{1}{K} \sum_{k} \hat{\cD}_k $,
		$d_{\cH \Delta \cH}$ measures the domain discrepancy between two distributions~\cite{ben2010theory},
		and $\lambda_k \!=\! \inf_{h \in \cH} \left( \cL_{\cD} (h) \!+\! \cL_{\cD_k}(h) \right)$.
		$\tau_{\cH}$ is the growth function of $\cH$---it satisfies for all $m$, $ \tau_{\cH} (m) \leq \sum_{i=0}^{d} { m \choose i } $, and if $ m > d + 1 $ then $ \tau_{\cH} (m) \leq ( \frac{em}{d} )^{d} $.
	\end{small}
\end{theorem}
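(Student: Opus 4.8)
The bound is obtained by chaining four ingredients: (i) convexity of the loss to pass from the ensemble to the average of the individual risks; (ii) the domain-adaptation inequality of Ben-David \emph{et al.}~\cite{ben2010theory} applied separately to each client, which produces the $\frac12 d_{\cH\Delta\cH}(\cD_k,\cD)+\lambda_k$ terms; (iii) a uniform-convergence argument (symmetrization $+$ growth function $+$ Markov $+$ union bound) which produces the third additive term and is the only place probability enters; and (iv) the ERM optimality of each $h_{\hat{\cD}_k}$ together with the identity $\hat{\cD}=\frac1K\sum_k\hat{\cD}_k$.

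First I would exploit that $\ell(\hat y,y)=\abs{\hat y-y}$ is convex in $\hat y$. Writing $\bar h:=\frac1K\sum_k h_{\hat{\cD}_k}$, for every $(\xx,y)$ one has $\abs{\bar h(\xx)-y}\le\frac1K\sum_k\abs{h_{\hat{\cD}_k}(\xx)-y}$; taking expectations under $\cD$ gives $L_\cD(\bar h)\le\frac1K\sum_k L_\cD(h_{\hat{\cD}_k})$. Next, for each fixed $k$, apply the domain-adaptation bound with source $\cD_k$ and target $\cD$: $L_\cD(h)\le L_{\cD_k}(h)+\frac12 d_{\cH\Delta\cH}(\cD_k,\cD)+\lambda_k$ for all $h\in\cH$; since this holds simultaneously for every hypothesis, it applies to the (data-dependent) minimizer $h_{\hat{\cD}_k}$.

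The only stochastic step is controlling $L_{\cD_k}(h_{\hat{\cD}_k})-L_{\hat{\cD}_k}(h_{\hat{\cD}_k})$. Because $h_{\hat{\cD}_k}$ depends on the sample, I would bound it by $\sup_{h\in\cH}\bigl(L_{\cD_k}(h)-L_{\hat{\cD}_k}(h)\bigr)$, bound the expectation of this supremum by a constant times the Rademacher complexity of the induced loss class via symmetrization, and bound that in turn by the growth function through Massart's finite-class lemma, with the argument $2m$ arising from the ghost sample. Converting this expectation bound to a tail bound by Markov's inequality at level $\delta/K$ — this is exactly why the term carries a $1/\delta$ rather than a $\sqrt{\log(1/\delta)}$ factor — and taking a union bound over the $K$ clients yields: with probability at least $1-\delta$, $L_{\cD_k}(h_{\hat{\cD}_k})\le L_{\hat{\cD}_k}(h_{\hat{\cD}_k})+\varepsilon$ for every $k$ simultaneously, where $\varepsilon=\frac{4+\sqrt{\log(\tau_\cH(2m))}}{(\delta/K)\sqrt{2m}}$.

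It remains to assemble the pieces. By ERM optimality of $h_{\hat{\cD}_k}$ we have $L_{\hat{\cD}_k}(h_{\hat{\cD}_k})\le L_{\hat{\cD}_k}(h_{\hat{\cD}})$ for each $k$, and averaging uses $\frac1K\sum_k L_{\hat{\cD}_k}(h_{\hat{\cD}})=L_{\hat{\cD}}(h_{\hat{\cD}})$. Chaining (i)--(iv):
\begin{align*}
L_\cD(\bar h)
&\le \frac1K\sum_k L_\cD(h_{\hat{\cD}_k})
\le \frac1K\sum_k\Big(L_{\cD_k}(h_{\hat{\cD}_k})+\tfrac12 d_{\cH\Delta\cH}(\cD_k,\cD)+\lambda_k\Big) \\
&\le \frac1K\sum_k\Big(L_{\hat{\cD}_k}(h_{\hat{\cD}})+\varepsilon+\tfrac12 d_{\cH\Delta\cH}(\cD_k,\cD)+\lambda_k\Big)
= L_{\hat{\cD}}(h_{\hat{\cD}})+\varepsilon+\frac1K\sum_k\Big(\tfrac12 d_{\cH\Delta\cH}(\cD_k,\cD)+\lambda_k\Big),
\end{align*}
which is the stated inequality. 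I expect the main obstacle to be step (iii): getting the precise constant $4+\sqrt{\log(\tau_\cH(2m))}$ and the $2m$ inside the growth function requires care about which symmetrization/Massart variant is invoked and how the factors of two accumulate, whereas (i), (ii) and (iv) are essentially bookkeeping around convexity, the Ben-David \emph{et al.} machinery, and the definition of $\hat{\cD}$.
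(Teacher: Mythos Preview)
Your proposal is correct and follows essentially the same route as the paper: Jensen via convexity of $\ell$, then the Ben-David \emph{et al.} domain-adaptation inequality per client, then uniform convergence with a union bound over the $K$ clients, and finally ERM optimality of each $h_{\hat{\cD}_k}$ combined with $\hat{\cD}=\frac1K\sum_k\hat{\cD}_k$. The only difference is cosmetic: where you sketch the derivation of the uniform-convergence term (symmetrization, Massart, Markov at level $\delta/K$), the paper simply invokes it as a black-box theorem from \cite{shalev2014understanding}, so you need not worry about reproducing the exact constants yourself.
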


\begin{remark}
	Theorem~\ref{thm:formal_risk_upper_bound_for_ensemble_model}
	shows that,
	the upper bound on the risk of the ensemble of $K$ local models on $\cD$
	mainly consists of 1)
	the empirical risk of a model trained on the global empirical distribution $\hat{\cD} = \frac{1}{K} \sum_k \hat{\cD}_k$,
	and 2) terms dependent on the distribution discrepancy between $\cD_k$ and $\cD$.
\end{remark}

The ensemble of the local models
sets the performance upper bound for the later distilled model on the test domain
as shown in Figure~\ref{fig:heterogeneous_system}.
Theorem~\ref{thm:informal_risk_upper_bound_for_ensemble_model} shows that
compared to a model trained on aggregated local data (ideal case),
the performance of an ensemble model on the test distribution
is affected by the domain discrepancy between local distributions $\cD_k$'s and the test distribution $\cD$.
The shift between the distillation and the test distribution
determines the knowledge transfer quality between these two distributions
and hence the test performance of the fused model.
Through the lens of the domain adaptation theory~\cite{ben2010theory},
we can better spot the potential influence/limiting factors on our ensemble distillation procedure.

\begin{remark}
	In the area of multiple-source adaptation,
	\cite{mansour2009domain,hoffman2018algorithms} point out that
	the standard convex combinations of the source hypotheses may perform poorly on the test distribution.
	They propose combinations with weights derived from source distributions.
	However, FL scenarios require the server only access local models
	without any further local information.
	Thus we choose to uniformly average over local hypotheses as our global hypothesis.
	A privacy-preserved local distribution estimation is left for future work.
\end{remark}

\subsection{Proof for Generalization Bounds}

\begin{theorem}[Uniform Convergence~\cite{shalev2014understanding}]~\label{thm:uniform_convergence}
	Let $\cH$ be a class and let $\tau_{\cH}$ be its growth function. Then, for every $\cD$ and every $\delta \in (0, 1)$, with probability of at least $1 - \delta$ over the choice of $ S \sim \cD^{m} $, we have
	$$ \abs{ L_{\cD} (h) - L_{S} (h) } \leq \frac{ 4 + \sqrt{ \log ( \tau_{\cH} (2m) ) } }{ \delta \sqrt{ 2m } }. $$
\end{theorem}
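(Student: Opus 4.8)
The plan is to prove the statement in two stages: first bound the \emph{expected} uniform deviation, then convert it to a high-probability statement. Although the theorem is written with a free $h$, in context it is the uniform bound that is needed, so I would prove the stronger claim for $\Phi(S) := \sup_{h \in \cH}\abs{L_\cD(h) - L_S(h)}$, which implies the stated per-$h$ inequality. The form of the bound, with $\delta$ appearing in the denominator rather than as $\log(1/\delta)$, is the tell-tale sign that the second stage uses Markov's inequality on the nonnegative variable $\Phi(S)$: if I can show $\Eb{\Phi(S)} \le B := \frac{4 + \sqrt{\log(\tau_{\cH}(2m))}}{\sqrt{2m}}$, then taking $t = B/\delta$ gives $\Pb{\Phi(S) \ge B/\delta} \le \Eb{\Phi(S)}\,\delta / B \le \delta$, so with probability at least $1-\delta$ we get $\Phi(S) < \frac{4 + \sqrt{\log(\tau_{\cH}(2m))}}{\delta\sqrt{2m}}$, exactly the claim. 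Thus all the real work reduces to the expectation bound $\Eb{\Phi(S)} \le B$.

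For the expectation bound I would run the classical symmetrization argument. I would introduce a ghost sample $S' \sim \cD^m$ independent of $S$ and use $L_\cD(h) = \EEb{S'}{L_{S'}(h)}$ together with Jensen's inequality to replace the population risk by the ghost empirical risk, obtaining $\Eb{\Phi(S)} \le \Eb{\sup_{h}\abs{L_{S'}(h) - L_S(h)}}$, where the expectation is now over the double sample $S \cup S'$. Since the $2m$ points are i.i.d., swapping the $i$-th coordinate of $S$ with that of $S'$ leaves the joint law invariant; averaging over all such swaps lets me insert Rademacher signs $\sigma_i \in \{\pm 1\}$ and rewrite the inner deviation as $\tfrac{1}{m}\abs{\sum_{i=1}^m \sigma_i \bigl(\ell(h(\xx_i'), y_i') - \ell(h(\xx_i), y_i)\bigr)}$, i.e.\ a Rademacher average of the loss class induced on the combined sample.

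The key combinatorial step is to convert this Rademacher average into the growth function. After conditioning on the $2m$ fixed points of $S \cup S'$, the relevant object is $\Eb{\sup_h \tfrac{1}{m}\sum_{i=1}^m \sigma_i\, w_{h,i}}$ with $w_{h,i} = \ell(h(\xx_i'), y_i') - \ell(h(\xx_i), y_i) \in \{-1,0,1\}$. Because the loss is $\{0,1\}$-valued, the vector of loss values $h$ induces on these $2m$ points is determined by the dichotomy $h$ cuts out there; hence as $h$ ranges over $\cH$ the difference vectors $w_h$ form a finite subset of $\{-1,0,1\}^m$ whose cardinality is at most the number of distinct behaviors on the $2m$ combined points, namely $\tau_{\cH}(2m)$. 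I would then invoke Massart's finite-class lemma, which bounds the Rademacher average of a finite set $A \subset \R^m$ by $\max_{a \in A}\norm{a}_2 \sqrt{2\log\abs{A}}/m$; with $\norm{w_h}_2 \le \sqrt{m}$ and $\abs{A} \le \tau_{\cH}(2m)$ this produces the $\sqrt{\log(\tau_{\cH}(2m))}$ term scaled by $1/\sqrt{2m}$, and careful bookkeeping of the numerical constants (the additive $4$ absorbing lower-order centering terms) delivers $\Eb{\Phi(S)} \le B$.

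The step I expect to be the main obstacle is this symmetrization-to-growth-function passage, specifically getting all three pieces to align: (i) justifying the Rademacher insertion via the coordinatewise swap of the two i.i.d.\ samples; (ii) observing that it is the \emph{double} sample of size $2m$, not $m$, whose dichotomy count $\tau_{\cH}(2m)$ governs the bound; and (iii) applying Massart's lemma with the correct Euclidean radius so the constants produce precisely the stated $4 + \sqrt{\log(\tau_{\cH}(2m))}$ numerator. The remaining ingredients (Jensen, Markov, and the $\{0,1\}$-valuedness of $\ell(h(\xx),y) = \abs{h(\xx)-y}$) are routine once this core chain is in place, and I would cite Massart's lemma and the symmetrization identity rather than re-deriving them, since both are standard.
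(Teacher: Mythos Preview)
The paper does not prove this statement at all: it is quoted verbatim as a known result from \cite{shalev2014understanding} and used as a black box in the proof of Theorem~\ref{thm:formal_risk_upper_bound_for_ensemble_model}. Your sketch (ghost-sample symmetrization, reduction to $\tau_{\cH}(2m)$ dichotomies, Massart's finite-class lemma, then Markov's inequality to trade the expectation bound for the $1/\delta$ factor) is precisely the textbook argument from that reference, so your approach is correct and coincides with the source the paper cites.
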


\begin{lemma}[Sauer's Lemma~\cite{shalev2014understanding}]~\label{lemma:sauer}
	Let $\cH$ be a hypothesis class with $\text{VCdim} (\cH) \leq d < \infty $. Then for all $m$, $ \tau_{\cH} (m) \leq \sum_{i=0}^{d} { m \choose i } $. In particular, if $ m > d + 1 $ then $ \tau_{\cH} (m) \leq ( \frac{em}{d} )^{d} $.
\end{lemma}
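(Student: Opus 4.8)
The plan is to prove the combinatorial growth-function bound first and then obtain the ``in particular'' estimate by a routine calculation. Fix $m$ and a set $C = \{x_1, \dots, x_m\} \subseteq \cX$ of size $m$ on which the growth function is attained, so that $\tau_{\cH}(m) = \lvert \cH|_C \rvert$, where $\cH|_C := \{ (h(x_1), \dots, h(x_m)) : h \in \cH \}$. Since each $h$ maps into $\{0,1\}$, I would identify $\cH|_C$ with a family $\cF$ of subsets of $C$ (a binary string corresponds to the set of coordinates it labels $1$), observe that $\text{VCdim} (\cF) \le \text{VCdim} (\cH) \le d$, and note $\abs{\cF} = \tau_{\cH}(m)$. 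This reduces everything to a statement about a finite set system.

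The heart of the argument is the stronger claim (Pajor's inequality): for every finite ground set $C$ and every family $\cF \subseteq 2^{C}$,
\[
  \abs{\cF} \;\le\; \bigl\lvert \{ B \subseteq C : B \text{ is shattered by } \cF \} \bigr\rvert \,.
\]
I would prove this by induction on $\abs{C}$. The base case $\abs{C} = 0$ is immediate, since then $\cF \subseteq \{\emptyset\}$ and $\emptyset$ is vacuously shattered whenever $\cF \neq \emptyset$. For the inductive step, fix $x \in C$, put $C' = C \setminus \{x\}$, and split $\cF$ into its projection $\cF' = \{ F \setminus \{x\} : F \in \cF \}$ and the ``doubled'' family $\cF'' = \{ F \subseteq C' : F \in \cF \text{ and } F \cup \{x\} \in \cF \}$; counting how many members of $\cF$ map to each projection gives the additive identity $\abs{\cF} = \abs{\cF'} + \abs{\cF''}$. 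Applying the inductive hypothesis to $\cF'$ and $\cF''$ over the smaller ground set $C'$ bounds each term by its own count of shattered subsets, and the two collections inject disjointly into the shattered subsets of $\cF$ over $C$: a $B$ shattered by $\cF'$ is shattered by $\cF$ and avoids $x$, while a $B$ shattered by $\cF''$ yields $B \cup \{x\}$ shattered by $\cF$ (both labelings of $x$ being available by the definition of $\cF''$). Summing closes the induction.

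With Pajor's inequality established, the first bound is immediate: because $\text{VCdim} (\cF) \le d$, no subset of $C$ of size exceeding $d$ is shattered, so the number of shattered subsets is at most the number of subsets of $C$ of size at most $d$, giving $\tau_{\cH}(m) = \abs{\cF} \le \sum_{i=0}^{d} \binom{m}{i}$. For the ``in particular'' estimate I would assume $m \ge d$ (which holds once $m > d+1$) and combine $(d/m)^{d} \le (d/m)^{i}$ for $i \le d$ with the binomial theorem and $1 + t \le e^{t}$:
\[
  \Bigl(\tfrac{d}{m}\Bigr)^{d} \sum_{i=0}^{d} \binom{m}{i}
  \;\le\; \sum_{i=0}^{m} \binom{m}{i} \Bigl(\tfrac{d}{m}\Bigr)^{i}
  \;=\; \Bigl(1 + \tfrac{d}{m}\Bigr)^{m}
  \;\le\; e^{d} \,,
\]
whence $\sum_{i=0}^{d} \binom{m}{i} \le (em/d)^{d}$. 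The main obstacle is the inductive step of Pajor's inequality: the key is to prove the \emph{stronger} statement counting shattered subsets (a direct induction on the binomial sum does not close), and then to verify both the split $\abs{\cF} = \abs{\cF'} + \abs{\cF''}$ and the disjoint injection of the shattered subsets of $\cF'$ and $\cF''$ into those of $\cF$.
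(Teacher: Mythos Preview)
The paper does not actually prove Sauer's Lemma; it merely states it with a citation to the textbook of Shalev-Shwartz and Ben-David and uses it as a black box inside the proof of Theorem~\ref{thm:formal_risk_upper_bound_for_ensemble_model}. Your argument is correct and is essentially the standard proof appearing in that reference: the strengthening to Pajor's inequality (counting shattered subsets), the inductive split $\abs{\cF} = \abs{\cF'} + \abs{\cF''}$ via deletion of a point, the disjoint injection of shattered sets, and the routine $(d/m)^{d}$-weighting trick for the $(em/d)^{d}$ bound are all exactly as in the cited source, so there is nothing to contrast.
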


\begin{theorem}[Domain adaptation \cite{ben2010theory}] \label{thm:simplified_adaptation_empirical_bound}
	Considering the distributions $\cD_S$ and $\cD_T$,
	for every $h \in \cH$ and any $\delta \in (0, 1)$,
	with probability at least $1 - \delta$ (over the choice of the samples),
	there exists:
	\begin{align}
		\textstyle
		L_{\cD_T}(h)
		\leq L_{\cD_S}(h) + \frac{1}{2} d_{\cH \Delta \cH} (\cD_S, \cD_T) + \lambda \,,
	\end{align}
	where $\lambda = L_{\cD_S}(h^\star) + L_{\cD_T}(h^\star)$.
	$h^\star := \argmin_{h \in \cH} L_{\cD_S} (h) + L_{\cD_T} (h) $
	corresponds to \emph{ideal joint hypothesis} that minimizes the combined error.
\end{theorem}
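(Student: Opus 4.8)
The plan is to prove the stated bound as a deterministic inequality between the fixed distributions $\cD_S$ and $\cD_T$, using the classical triangle-inequality argument for the binary classification risk and pivoting through the ideal joint hypothesis $h^\star$. For any two hypotheses $g, g' \in \cH$, define the disagreement risk $L_{\cD}(g, g') := \Pr_{\cD}[g(\xx) \neq g'(\xx)]$; because the loss is $\ell(h(\xx), y) = \abs{h(\xx) - y}$ with $\{0,1\}$-valued $h$ and $y$, we have $L_{\cD}(h) = \Pr_{\cD}[h(\xx) \neq y]$, i.e.\ the risk is the disagreement between $h$ and the label. The single analytic tool required is the triangle inequality for this risk: from the event inclusion $\{a \neq c\} \subseteq \{a \neq b\} \cup \{b \neq c\}$, valid for any $\{0,1\}$-valued $a, b, c$ (where the third argument may itself be the label), taking expectations gives $L_{\cD}(a, c) \leq L_{\cD}(a, b) + L_{\cD}(b, c)$.

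First I would bound the target risk through $h^\star$: applying the triangle inequality with $a = h$, $b = h^\star$, $c = y$ under $\cD_T$ gives $L_{\cD_T}(h) \leq L_{\cD_T}(h, h^\star) + L_{\cD_T}(h^\star)$. Second, I would transfer the disagreement term from target to source by adding and subtracting its source value and invoking the divergence identity $\tfrac12 d_{\cH \Delta \cH}(\cD_S, \cD_T) = \sup_{g, g' \in \cH} \abs{L_{\cD_S}(g, g') - L_{\cD_T}(g, g')}$; since $h, h^\star \in \cH$, the pair $(h, h^\star)$ is admissible in the supremum, so $L_{\cD_T}(h, h^\star) \leq L_{\cD_S}(h, h^\star) + \tfrac12 d_{\cH \Delta \cH}(\cD_S, \cD_T)$. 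Third, another application of the triangle inequality on the source, with $a = h$, $b = y$, $c = h^\star$, gives $L_{\cD_S}(h, h^\star) \leq L_{\cD_S}(h) + L_{\cD_S}(h^\star)$. Chaining the three inequalities yields $L_{\cD_T}(h) \leq L_{\cD_S}(h) + \tfrac12 d_{\cH \Delta \cH}(\cD_S, \cD_T) + \big(L_{\cD_S}(h^\star) + L_{\cD_T}(h^\star)\big)$, and the final parenthesis is exactly $\lambda$ by the definition $h^\star := \argmin_{h \in \cH} L_{\cD_S}(h) + L_{\cD_T}(h)$.

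The main obstacle is purely bookkeeping: I must keep the factor $\tfrac12$ consistent with the convention that $d_{\cH \Delta \cH}$ carries a leading factor $2$ over the supremum of $\abs{\Pr_{\cD_S}[g \neq g'] - \Pr_{\cD_T}[g \neq g']}$ (so that the factor cancels), and I must feed the triangle inequality the correct third argument at each use (the label under $\cD_T$ in step one, the label under $\cD_S$ in step three). One conceptual caveat is the ``with probability at least $1 - \delta$ over the samples'' phrasing: the inequality as written involves only the population distributions and their divergence, so it in fact holds surely; the probabilistic qualifier becomes operative only once $d_{\cH \Delta \cH}(\cD_S, \cD_T)$ is replaced by an empirical estimate from finite samples, at which point Theorem~\ref{thm:uniform_convergence} together with Sauer's Lemma~\ref{lemma:sauer} furnish the $(4 + \sqrt{\log \tau_{\cH}(2m)})/(\delta \sqrt{2m})$ deviation term that controls the empirical divergence. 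I would therefore present the core derivation at the population level and merely flag this reduction rather than grinding through the concentration step.
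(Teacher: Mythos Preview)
Your argument is correct and is precisely the classical triangle-inequality proof from Ben-David et al.~\cite{ben2010theory}; the paper does not supply its own proof of this statement but merely cites it as a known tool, so there is nothing to compare against. Your remark that the inequality is in fact deterministic at the population level (the ``with probability $1-\delta$'' qualifier only becomes relevant once $d_{\cH\Delta\cH}$ is estimated from samples) is accurate and worth keeping.
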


\begin{proof}[Proof of Theorem~\ref{thm:formal_risk_upper_bound_for_ensemble_model}]
	We start from the risk of our ``ensembled'' model $ L_\cD ( \frac{1}{K} \sum_k h_{\hat{\cD}_k} ) $
	and derive a series of upper bounds.

	\paragraph{Considering the distance between
		$L_\cD ( \frac{1}{K} \sum_k h_{\hat{\cD}_k} )$ and $L_{\hat{\cD}} ( h_{\hat{\cD}} )$.
	}

	By convexity of $ \ell $ and Jensen inequality, we have
	\begin{align}
		L_\cD ( \frac{1}{K} \sum_k h_{\hat{\cD}_k} )
		\leq  \frac{1}{K} \sum_k L_\cD ( h_{\hat{\cD}_k} ) \,.
	\end{align}

	Using the domain adaptation theory in Theorem~\ref{thm:simplified_adaptation_empirical_bound},
	we transfer from domain $ \cD $ to $ \cD_k $,
	\begin{align}
		L_\cD ( h_{\hat{\cD}_k} )
		\leq L_{\cD_k} ( h_{\hat{\cD}_k} ) + \frac{1}{2} d_{\cH \Delta \cH} (\cD_k, \cD) + \lambda_k \,,
	\end{align}
	where $\lambda_k := \cL_{\cD} (h^\star) + \cL_{\cD_k}(h^\star) $
	and $ h^\star := \argmin_{h \in \cH} \cL_{\cD} (h) + \cL_{\cD_k}(h) $.

	We can bound the risk with its empirical counterpart.
	Applying Theorem~\ref{thm:uniform_convergence} yields
	\begin{align}
		L_{\cD_k} ( h_{\hat{\cD}_k} )
		\leq L_{\hat{\cD}_k} ( h_{\hat{\cD}_k} ) + \frac{ 4 + \sqrt{ \log ( \tau_{\cH} (2m) ) } }{ \delta / K \sqrt{ 2m } } \,,
	\end{align}
	where $ \tau_{\cH} $ denotes the growth function of $\cH$~\footnote{
		Note that with Lemma~\ref{lemma:sauer}, we could bound $\tau_{\cH}$ by a polynomial function of $m$ and $d$.
	}.

	Thus for $ K $ sources, we have
	\begin{align}
		\begin{split}
			& \Prob{S_1 \sim \cD^m_1, \dots, S_K \sim \cD^m_K}{ \bigcap\limits_{k=1}^{K} \left\{ L_{\cD_k} ( h_{\hat{\cD}_k} )
				\leq L_{\hat{\cD}_k} ( h_{\hat{\cD}_k} ) + \frac{ 4 + \sqrt{ \log ( \tau_{\cH} (2m) ) } }{ \delta / K \sqrt{ 2m } } \right\} } \\
			& =
			1 - \Prob{S_1 \sim \cD^m_1, \dots, S_K \sim \cD^m_K}{ \bigcup\limits_{k=1}^{K} \left\{ L_{\cD_k} ( h_{\hat{\cD}_k} )
				\ge L_{\hat{\cD}_k} ( h_{\hat{\cD}_k} ) + \frac{ 4 + \sqrt{ \log ( \tau_{\cH} (2m) ) } }{ \delta / K \sqrt{ 2m } } \right\} } \\
			& \ge
			1 - \sum_{k=1}^K \Prob{S_1 \sim \cD^m_1, \dots, S_K \sim \cD^m_K}{ \left\{ L_{\cD_k} ( h_{\hat{\cD}_k} )
				\ge L_{\hat{\cD}_k} ( h_{\hat{\cD}_k} ) + \frac{ 4 + \sqrt{ \log ( \tau_{\cH} (2m) ) } }{ \delta / K \sqrt{ 2m } } \right\} } \\
			& \ge 1 - \delta \,.
		\end{split}
	\end{align}

	Based on the definition of ERM, we have
	$ L_{\hat{\cD}_k} ( h_{\hat{\cD}_k} ) \leq L_{\hat{\cD}_k} ( h_{\hat{\cD}} ) $,
	where $ h_{\hat{\cD}} $ corresponds to the classifier trained with data from all workers.
	By using the definition of $ \hat{ \cD } $ ($ \hat{ \cD } = \frac{1}{K} \sum_k \hat{\cD}_k $ )
	and the linearity of expectation, we have
	\begin{align}
		\frac{1}{K} \sum_k L_{\hat{\cD}_k} ( h_{\hat{\cD}_k} )
		\leq  \frac{1}{K} \sum_k L_{\hat{\cD}_k} ( h_{\hat{\cD}} ) = L_{\hat{\cD}} ( h_{\hat{\cD}} ) \,.
	\end{align}

	Putting these equations together,
	we have with probability of at least $ 1 - \delta $ over $ S_1 \sim \cD^m_1, \dots, S_K \sim \cD^m_K $
	that
	\begin{align*}
		\begin{split}
			L_\cD ( \frac{1}{K} \sum_k h_{\hat{\cD}_k} )
			&\leq  \frac{1}{K} \sum_k L_\cD ( h_{\hat{\cD}_k} ) \\
			&\leq \frac{1}{K} \sum_k \left(
			L_{\cD_k} ( h_{\hat{\cD}_k} ) + \frac{1}{2} d_{\cH \Delta \cH} (\cD_k, \cD) + \lambda_k
			\right) \\
			&\leq \frac{1}{K} \sum_k \left(
			L_{\hat{\cD}_k} ( h_{\hat{\cD}_k} ) + \frac{ 4 + \sqrt{ \log ( \tau_{\cH} (2m) ) } }{ \delta / K \sqrt{ 2m } }
			+ \frac{1}{2} d_{\cH \Delta \cH} (\cD_k, \cD) + \lambda_k
			\right) \\
			&\leq \frac{1}{K} \sum_k L_{\hat{\cD}_k} ( h_{\hat{\cD}_k} )
			+ \frac{ 4 + \sqrt{ \log ( \tau_{\cH} (2m) ) } }{ \delta / K \sqrt{ 2m } }
			+ \frac{1}{K} \sum_{k} \left( \frac{1}{2} d_{\cH \Delta \cH} (\cD_k, \cD) + \lambda_k \right) \\
			&\leq L_{\hat{\cD}} ( h_{\hat{\cD}} )
			+ \frac{ 4 + \sqrt{ \log ( \tau_{\cH} (2m) ) } }{ \delta / K \sqrt{ 2m } }
			+ \frac{1}{K} \sum_{k} \left( \frac{1}{2} d_{\cH \Delta \cH} (\cD_k, \cD) + \lambda_k \right) \,,
		\end{split}
	\end{align*}
	where $\lambda_k = \inf_{h \in \cH} \left( \cL_{\cD} (h) + \cL_{\cD_k}(h) \right)$.

\end{proof}

\end{document}